\def\includeversion#1{%
  \expandafter\def\csname #1\endcsname{}%
  \expandafter\def\csname end#1\endcsname{}%
  \expandafter\let\csname not-#1\endcsname\comment
  \expandafter\let\csname endnot-#1\endcsname\endcomment
  \expandafter\def\csname if#1\endcsname##1##2{##1}
}
\def\excludeversion#1{%
  \expandafter\let\csname #1\endcsname\comment
  \expandafter\let\csname end#1\endcsname\endcomment
  \expandafter\def\csname not-#1\endcsname{}%
  \expandafter\def\csname endnot-#1\endcsname{}%
  \expandafter\def\csname if#1\endcsname##1##2{##2}
}
\newlength\mytemplength
\let\chapter\@undefined\makeatother 
\newtheorem{theorem}{Theorem}[section]
\newtheorem{proposition}[theorem]{Proposition}
\newtheorem{lemma}[theorem]{Lemma}
\newtheorem{corollary}[theorem]{Corollary}
\newtheorem{assumption}[theorem]{Assumption}
\newtheorem{definition}{Definition}
\newcommand{\gD}{\Delta}
\newcommand{\ga}{\alpha}
\newcommand{\gl}{\lambda}
\newcommand{\eps}{\varepsilon}
\newcommand{\gd}{\delta}
\newcommand{\R}{{\mathbb R}}
\newcommand{\N}{{\mathbb N}}
\newcommand{\gO}{\mathcal X}
\newcommand{\go}{X}
\newcommand{\cL}{{\mathcal L}}
\newcommand{\diag}{\operatorname{diag}}
\newcommand{\cN}{\mathcal N}
\newcommand{\supp}{{\rm supp}}
\newcommand{\cI}{{\mathcal I}}
\newcommand{\cZ}{{\mathcal Z}}
\newcommand{\cF}{{\mathcal F}}
\newcommand{\cE}{{\mathcal E}}
\newcommand{\cM}{{\mathcal M}}
\newcommand{\cY}{{\mathcal Y}}
\newcommand{\cP}{\mathcal{P}}
\newcommand{\cD}{\mathcal D}
\newcommand{\cR}{\mathcal{R}}
\newcommand{\cX}{{\mathcal X}}
\newcommand{\cA}{{\mathcal A}}
\newcommand{\cG}{{\mathcal G}}
\newcommand{\by}{{\bf y}}
\newcommand{\bx}{{\bf x}}
\newcommand{\vertiii}[1]{{\left\vert\kern-0.25ex\left\vert\kern-0.25ex\left\vert #1 
\right\vert\kern-0.25ex\right\vert\kern-0.25ex\right\vert}}
\newcommand{\cC}{\mathcal C}
\let\comment\undefined
\begin{document}

\setlist{noitemsep}  
\onehalfspacing      
\parskip 3pt

\title{Benign Autoencoders}
\author[$\dagger$]{Semyon Malamud\thanks{Semyon Malamud is at the Swiss Finance Institute, EPFL, and CEPR. Teng Andrea Xu is at EPFL. Antoine Didisheim is at the University of Geneva. Email: semyon.malamud@epfl.ch. We thank Emanuel Abbe and Philipp Schneider for their helpful comments and suggestions. We also acknowledge the financial support of the Swiss National Science Foundation, Grant 100018\_192692. and the Swiss Finance Institute. All errors are our own. This work was supported by a grant from the Swiss National Supercomputing Centre (CSCS) under project ID sm81.}}
\author[$\dagger$]{Teng Andrea Xu}
\author[$\ddagger$]{Antoine Didisheim}

\affil[$\dagger$]{École Polytechnique Fédérale de Lausanne (EPFL) }
\affil[$\ddagger$]{University of Geneva }

\date{\today}


\maketitle
\thispagestyle{empty}

\bigskip

\begin{abstract}
Recent progress in Generative Artificial Intelligence (AI) relies on efficient data representations, often featuring encoder-decoder architectures. We formalize the mathematical problem of finding the optimal encoder-decoder pair and characterize its solution, which we name the ``benign autoencoder'' (BAE). We prove that BAE projects data onto a manifold whose dimension is the {\it optimal compressibility dimension} of the generative problem. 
We highlight surprising connections between BAE and several recent developments in AI, such as conditional GANs, context encoders, stable diffusion, stacked autoencoders, and the learning capabilities of generative models. As an illustration, we show how BAE can find optimal, low-dimensional latent representations that improve the performance of a discriminator under a distribution shift. By compressing ``malignant'' data dimensions, BAE leads to smoother and more stable gradients.  
\end{abstract}

\newpage
\section{Introduction} 

The success of modern generative models relies on neural network architectures for building powerful representations of the data, typically featuring an encoder (responsible for feature learning) and a decoder (responsible for data generation).\footnote{While the original text generation and translation models used encoder-decoder architectures, the recent progress in large language models (LLMs) relies on decoder-only architectures. Understanding the role of encoders for LLMs is an important direction for future research.} Most such encoder-decoder architectures feature a bottleneck, with the latent dimension of the encoder often being much smaller than the dimension of the original data. Extensive experimental evidence suggests that a lower-dimensional latent space improves the quality of generative models by allowing them to generate data based on several key features of the latent representation. For example, this is the case for the variational autoencoder (VAE; \cite{kingma2013auto}, \cite{makhzani2015adversarial}), generative adversarial networks (GANs; \cite{radford2015unsupervised}, \cite{che2016mode}, \cite{peng2018variational}, \cite{goodfellow2020generative}, \cite{donahue2016adversarial}, \cite{dumoulin2016adversarially}, \cite{pathak2016context}), and stable diffusion (\cite{sohl2015deep, ho2020denoising}). The same idea of encoding data into a low-dimensional manifold and then decoding it for discriminative purposes underlies recent successful attempts to build powerful, general  perception models, such as those of \cite{jaegle2021perceiver} and \cite{girdhar2023imagebind}.\footnote{The Perceiver of \cite{jaegle2021perceiver} is designed to handle arbitrary configurations of different modalities (images, audio, and video data) using a single Transformer-based architecture. It introduces a small set of latent units that forms a bottleneck eliminating the quadratic scaling problem of classical Transformers and decoupling the network depth from the input’s size. The authors use a bottleneck of dimension 512 for the image encoding, which is a huge dimensionality reduction, compared to the input dimension of $224 \times 224 = 50176$ pixels.}

The impressive empirical achievements of the models cited above have further widened the gap between their performance and our theoretical understanding thereof. In particular, little is known about the role of bottlenecks and the geometry of the respective latent spaces.\footnote{For some recent progress in the theoretical understanding of GANs, see, \cite{arjovsky2017towards}.} In this paper, we try to bridge this gap. To this end, we  formally define the generative problem of finding the best encoder-decoder architecture. Using novel mathematical techniques combining ideas from optimal transport theory \cite{villani2009optimal} and metric geometry \cite{burago2022course}, we characterize the solution to the optimal encoder-decoder problem, that we name the {\it benign autoencoder (BAE)}. We show that BAE optimally regularizes the generative problem by compressing the ``malignant'' dimensions of the data, thus convexifying the problem through dimensionality reduction.\footnote{It is known that convex problems are well-behaved because they have unique global minima and gradient descent algorithms are guaranteed to converge to these minima. However, BAE exploits a different form of convexity: It makes the average model accuracy depend on the input (training) data in a convex fashion. The dependence on training data is an important ingredient of the theory of adversarial attacks. See, e.g., \cite{goodfellow2014explaining} and \cite{ilyas2019adversarial}. BAE regularizes the dependence on training data by removing ``spikes in the gradient'' and making the gradient map monotone.}  We also characterize the latent dimension of the optimal BAE that we refer to as the {\it compressibility dimension} of the learning problem. 

In addition to providing a theoretical foundation for optimal latent representations in several important generative problems (see, e.g.,  \cite{che2016mode}, \cite{peng2018variational}, \cite{goodfellow2020generative}, 
\cite{pathak2016context}), we test our theory on the distance-regularized GAN and context-encoder settings with the CelebA-HQ dataset \cite{karras2017progressive}. In the Appendix, we also show how to use our results to study { optimal, supervised, denoising autoencoders} with the MNIST \cite{lecun1989handwritten} and FMNIST \cite{xiao2017fashion} datasets. In all experiments, we find evidence of the existence of an optimal latent dimension (much lower than the dimension of the data). In particular, we show that using an encoder with a latent dimension larger than the compressibility dimension either deteriorates generative models' performance or is meaningless. This is due to wasted computational resources, and it does not lead to any performance increase.

In an effort to understand the benefits of encoder-decoder architectures, previous papers used heuristics and intuition to suggest that penalization of the reconstruction error in generative models leads to smoother and more stable gradients (see, e.g., \cite{che2016mode}). This paper vindicates and provides a theoretical formalization for this intuition. Our main theorem implies that BAE convexifies the objective function's dependence on the data. Namely, the objective becomes convex  when restricted to the optimal feature manifold (the low-dimensional manifold on which the auto-encoded data lives). The gradient of a convex function is always regular because it is a monotone map; this monotonicity removes ``spikes'' and makes the gradient more stable.

\section{Background}

Since the onset of GANs, lower-dimensional representations have played a key role in generative AI. For example, in image generation, modern GAN architectures \cite{karras2017progressive, karras2019style, karras2020analyzing, karras2021alias} use a lower-dimensional latent space of $4 \times 4 \times 512$ to generate high-resolution images and videos ($1024 \times 1024 \times 3$). Similar behavior is observed in the latest Diffusion Probabilistic Models \cite{sohl2015deep, ho2020denoising, li2023your}. 

Although GANS achieve state-of-the-art results on various tasks, they are often highly unstable. As \cite{che2016mode} show, this behavior is driven by a special form of a curse of dimensionality that can be solved by training an autoencoder with a small latent dimension. In a similar vein, \cite{peng2018variational} show that introducing an auto-encoder trained with a VAE-type reconstruction loss and a low-dimensional bottleneck significantly improves the performance of GANs, as well as models of imitation learning and inverse reinforcement learning. 

Our paper also relates to the tight connection between generative and discriminative problems, which has been discussed in many papers, starting with the influential work of \cite{hinton2007recognize}: {\it ``To Recognize Shapes, First Learn to Generate Images.''} See also \cite{ng2001discriminative}. Recent evidence suggests that conditional generative models are also good classifiers. See, e.g., \cite{li2023your, brown2020language}. Our results provide additional intuition for this phenomenon and its link to efficient latent representations. 

For discriminative (classification and regression) problems, \cite{tishby2015deep} argue that the success of deep neural networks might be related to their ability to extract efficient representations
of the relevant features of the input layer for predicting the
output label. \cite{tishby2015deep} refer to this phenomenon as the {\it optimal information bottleneck.}\footnote{Recent research shows that the ability of NNs to learn efficient low-dimensional representations is key to their performance. See \cite{NEURIPS2020_a9df2255}.} 
Several subsequent papers have introduced methodologies targeted at creating optimal bottlenecks with a minimal loss of mutual information. See, e.g., \cite{alemi2016deep} \cite{oord2018representation}, \cite{hjelm2018learning}, \cite{achille2018emergence}, \cite{alemi2020variational}. In particular, \cite{alemi2016deep} provide evidence that efficiently trained bottlenecks improve classification accuracy and adversarial robustness; \cite{achille2018emergence} link information bottlenecks to invariance to nuisances, irrelevant features that provide no useful information. 
The mechanism behind the BAE algorithm proposed in this paper is different. The bottleneck created by BAE  does not remove noise or useless features; instead, we prove that some dimensions of data are useful (contain important information) but are {\it malignant for the specific learning algorithm}. BAE identifies those dimensions and erases them.

\section{Preliminaries on Autoencoders}

We start our analysis by introducing a mathematical formalism behind encoder-decoder architectures. 

Let $\tilde \cX$ be a set of {\it messages}, and $\cZ$ the space of encoded messages (henceforth, {\it code space}). Data pre-processing is a map $\cF:\ \tilde\cX\to\cX$, where $\cX\subset\R^L$ is the space of pre-processed messages. E.g., $\cF$ could be a form of data normalization, image resizing, data whitening, or masking (for context encoders).  An encoder is a map $\cE:\cX\to \cZ$, and a decoder is a map $\cD:\ \cZ\to \hat\cX.$ An autoencoder (AE) is the composition of the two: $\cA:\ \cX\to\hat\cX,\ \cA(x)\ =\ \cD(\cE(x)).$ Given a parametric family $\{\cE_\phi\}_{\phi\in \Phi}$ of encoders and a parametric family $\{\cD_\theta\}_{\theta\in \Theta}$ of decoders, the classic optimal encoding problem is to solve $\min_{\theta,\phi}\mathbb{E}[\ell(\cD_\theta(\cE_\phi(\cF(\tilde x))),g(\tilde x))]$ for some loss function $\ell$, where $g:\ \tilde\cX\to \hat\cX$ is a {\it target data transformation.} For example: (i) For a standard autoencoder,\footnote{
One of the most popular algorithms for unsupervised data representation is based on training an autoencoder \citep{rumbert1986learning}: An artificial neural network that learns how to efficiently encode data in a lower-dimensional space with a minimal reconstruction loss. These models play a key role in unsupervised data representation and feature engineering as powerful non-linear dimensionality reduction techniques, see \cite{hinton2006fast}, \cite{hinton2006reducing}, \cite{bengio2007scaling}, \cite{erhan2010does}, \cite{baldi2012autoencoders}, \cite{zemel2013learning}, \cite{makhzani2013k},  \cite{makhzani2015winner}, \cite{achille2018information}, \cite{makhzani2018implicit}, \cite{kenfack2021adversarial},  and \cite{gu2021autoencoder}.} $\tilde\cX=\cX=\hat\cX,$ and both $\cF$ and $g$ are identity maps so that the objective becomes to reconstruct the original data $x=\tilde x$ based on its latent representation $\min_{\theta,\phi}\mathbb{E}[\ell(\cD_\theta(\cE_\phi(x)),x)].$ (ii) In the context encoding problem of images (\cite{pathak2016context}), a part of the data is {\it masked} using a mask indicator $\hat M,$ so that $\cF(\tilde x)=(1-\hat M)\odot \tilde x$ is the partially masked image. At the same time, the optimal encoding-decoding problem is to reconstruct the masked part of the image, $g(\tilde x)=\cM\odot \tilde x,$ based entirely on the partially masked image: The goal is to solve $\min_{\theta,\phi}\mathbb{E}[\ell(\cD_\theta(\cE_\phi((1-\hat M)\odot \tilde x)),\hat M\odot \tilde x].$ (iii) For image-to-image translation (\cite{isola2017image}),\footnote{See also \cite{choi2020stargan} for a related problem of image synthesis.} $\tilde x=(y,x)$ is a pair, and the objective is to morph $x$ into $y$, so that $\cF(\tilde x)=x$ and $g(\tilde x)=y,$ and we minimize $\min_{\theta,\phi}\mathbb{E}[\ell(\cD_\theta(\cE_\phi(x)),y)].$

Given a prior probability distribution $p(dx)$ of $x=\cF(\tilde x)$ on $\cX,$ a probabilistic encoder is a joint probability distribution $p(dx,dz)$ on $\cX\times \cZ$ satisfying
\begin{equation}\label{marginal}
p_x(dx)\ =\  \int_{\cZ} p(dx,dz)\ =\ p(dx)\,.
\end{equation}
In this case, given a value of $x,$ we sample $z$ from the distribution $p(dz|x)$:
\begin{equation}
\cE(x)\ =\ Random(z \sim p(dz|x))\,.    
\end{equation}
Given an encoder (probabilistic or not) $\cE:\cX\to \cZ,$ the optimal decoding problem is to find a map $\cD:\cZ\to \cX$ to minimize the {\it reconstruction loss,} given by 
\begin{equation}
\cD(z)\ =\ \arg\min_{a\in \cX}\,\mathbb{E}[\ell(a,g(\tilde x))|\cE(\cF(\tilde x))=z]\,. 
\end{equation}
In the Appendix, we derive results for generic loss functions. In our experiments, we utilize any of the following loss functions as an additional generative model loss: Mean Squared Error (MSE) or $\ell_2$ loss, pixel loss or $\ell_1$ loss, and binary cross-entropy.\footnote{Please refer to the Appendix for comprehensive details regarding the experiment, including information about the training algorithm, hyperparameters, and model architectures.} However, in the main body of the paper, we focus on $\ell(x,y)=\|x-y\|^2,$ corresponding to the MSE loss function. In this case, the optimal decoder is just the conditional expectation: 
\begin{equation}\label{unbiased}
\cD(z)\ =\ \mathbb{E}[g(\tilde x)|\cE(\cF(\tilde x))=z]\ =\ \int_{\cX} \cG(x)\,p(dx|z)\,,
\end{equation}
where we have defined 
\begin{equation}
\cG(x)\ =\ \mathbb{E}[g(\tilde x)|\cF(\tilde x)=x]\,. 
\end{equation}
That is, an optimal decoder represents the optimal prediction of $g(\tilde x)$ given the encoded (compressed) information in $\cE_\phi(x),\ x=\cF(\tilde x)\,.$ 
The superposition of $\cD$ and $\cE$ is called an {\it autoencoder:} $\cA=\cD\circ \cE:\ \cX\to\hat\cX$ is given by $\cA(x)\ =\ \cD(\cE(x)).$

\section{Optimal Encoders for Generative and Discriminative Problems} 
\label{sec:optimal_encoders}

It is known (see, e.g., \cite{hinton2006reducing}) that autoencoders are able to efficiently encode high-dimensional data into much lower dimensions so that $\cZ\subset \R^\nu$ with $\nu \ll L.$ Formally, this means that there exist parametric families  $\{\cE_\phi\}_{\phi\in \Phi}$ of encoders with $\cZ=\R^\nu$, and parametric families $\{\cD_\theta\}_{\theta\in \Theta}$ of decoders, such that the minimum
\begin{equation}\label{dim-red}
\min_{\theta,\phi}\mathbb{E}[\ell(\cD_\theta(\cE_\phi(x)),x)]
\end{equation}
is relatively small for many real-world datasets with $\cX=\R^L$, even when $\nu$ is much smaller than $L.$ 

The objective of the minimization problem \eqref{dim-red} is to achieve efficient dimensionality reduction. 
By contrast, Generative AI is concerned with a different objective. Given a probabilistic autoencoder $p(dx,dz),$ the objective is to generate objects (e.g., texts or images) by sampling $z$ from the marginal distribution $p_z(dz)=\int_\cX p(dx,dz)$, and then decoding them into $\cD(z)$ to make them pleasing to human perception. The problem of evaluating the quality of generated content is extremely difficult, and there is no consensus about the way of doing it. See, e.g., \cite{borji2022pros} for the discussion of this problem for image generation (most papers on generative models for images still show large samples of generated content directly in their papers to convince human readers that it ``looks good''). For text generation models such as GPT, the problem is even harder, and Reinforcement Learning from Human Feedback (\cite{ouyang2022training}) has been proposed as one potential remedy. 
We conjecture that human beings receiving generated content $\cD(z)$ (be it images or text) evaluate its quality using some function $W$ that is probably combining a ``feeling of common sense'' with some (non-linear) outlier detection. This motivates the following definition. 

\begin{definition}[Optimal Autoencoder] \label{main-def} Let $W:\tilde\cX\to \R $ be a function evaluating the quality of generated content. Then, the solutions to the problems 
\begin{equation}\label{non-par}
\begin{aligned}
&\cL_{PBAE}\ =\ -\sup_{\cZ,\ p(X;z)}\{\mathbb{E}_{p}[W(\cD(z))]:\ \eqref{marginal},\ \eqref{unbiased} \ hold\}\\
&\cL_{BAE}\ =\ -\sup_{\cZ,\ \cE:\ \cX\to\cZ}\{\mathbb{E}[W(\cD(z))]:\ z=\cE(\cF(\tilde x))\ and\ \eqref{unbiased}\ holds\}
\end{aligned}
\end{equation}
(if they exist) are called the optimal probabilistic autoencoder and the optimal autoencoder, respectively. Since an AE is also a PAE, we always have $\cL_{PBAE}\le \cL_{BAE}.$
\end{definition} 


Formally introducing the performance measure $W$ is key to our analysis. In practice, the quality of generative models is typically evaluated using extremely complex, non-linear, and highly non-convex metrics such that the Frech\'et Inception Distance (FID; see \cite{heusel2017gans}) and the Learned Perceptual Image Patch Similarity metric (LPIPS; see \cite{zhang2018unreasonable}). The convexity of $W$ is key to the emergence of optimal latent dimensions, as we explain below. Note also that imposing \eqref{unbiased} is approximately equivalent to penalizing the objective function with an $L_2$-penalty $\gl \mathbb{E}[\|g(\tilde x)-\cD(z)\|^2]$ with a very large penalty coefficient $\gl.$ We will use this observation in our experiments and the examples below. 

\subsection{Examples}\label{sec:examples}

This section shows how many important generative models are tightly linked to the theoretical framework of Definition \ref{main-def}, with model-specific $W$ functions. 

{\bf Distance-Regularized GANs.} \cite{che2016mode} recommend regularizing GANs with a distance penalty. Given a discriminator $D(x),$ the optimal PBAE, $p(dx,dz)$ samples $z$ from the marginal distribution $p_z(dz)=\int_X p(dx,dz)$ of the encoding $z\in \cZ$, builds an {\it unbiased} reconstruction $\cD(z)$ of $x,$ and the objective is to minimize 
\begin{equation}
\min_{p(dx,dz),\ \cD}(\mathbb{E}_{x\sim p(dx)}[\log (D(x))]\ +\ \mathbb{E}_{z\sim p_z(dz)}\mathbb{E}[\log (1-D(\cD(z)))]+\gl_{rec} \mathbb{E}_{(x,z)\sim p(dx,dz)}[\|x-\cD(z)\|^2]),
\end{equation}
which is equivalent to \eqref{non-par} with $W(\hat x)\ =\ \log (1-D(\hat x)).$

{\bf Context Encoders.} Following \cite{pathak2016context}, given a discriminator $D,$ the objective is to minimize a combination of adversarial and reconstruction losses: 
$\gl_{rec}\cL_{rec}\ +\ \gl_{adv}\cL_{adv},$ where $\cL_{adv}\ =\ \mathbb{E}[\log (D(\tilde x))]\ +\ \mathbb{E}[W(\cD(\cE((1-\hat M)\odot \tilde x)))],$ with $W(\hat x)=\log (1-D(\hat x))$ 
and  $\cL_{rec}\ =\ \mathbb{E}[\|\hat M\odot \cD(\cE((1-\hat M)\odot \tilde x))-\hat M\odot \tilde x\|^2]\,.$\footnote{The decoder $\cD$ only reconstructs the masked part, $\hat M\odot \tilde x,$ and keeps the context, 
$(1-\hat M)\odot \tilde x.$}

{\bf Evaluating the Quality of the Generator with a Discriminator.} Given a classifier $D$, with labels $y,$ trained to minimize a distance $\gd(\cdot,\cdot)$ (e.g., the cross-entropy) between $y$ and $D(x),$ the quality of an autoencoder $\cA(x)\ =\ \cD(\cE(x))$ can be evaluated by computing the classification error $\delta(y,D(\cA(x)))$ with $x$ replaced by $\cA(x)$. When $\delta$ is the mean-squared error and $\mathbb{E}[y|x]=f(x),$ we can re-define the decoder $\hat\cD(z)=(\mathbb{E}[f(x)|z],\cD(z))$ (with just one additional dimension) to get 
$\mathbb{E}[\|y-D(\cA(x))\|^2]\ =\ \mathbb{E}[\|y\|^2]\ +\ \mathbb{E}[W(\hat\cD(z))]\,,$
with $W(\hat\cD(z))\ =\ -2\mathbb{E}[f(x)|z]D(\hat\cD(z))+D^2(\hat\cD(z))\,.$
Hence, the problem \eqref{non-par} is equivalent to the problem of compressing the data $x$ to ``help'' the discriminator. Such a framework could also be useful in a situation of a distribution shift, e.g., when $D$ was trained on high-quality data while the new dataset is corrupted by noise. In this case, \eqref{non-par} becomes a problem of finding the {\it optimal, denoising, supervised autoencoder,} whose objective is to denoise the data for better classification accuracy. 

{\bf Conditional Generative and Discriminative Problems.} It is known that there exists a tight link between generative and discriminative problems. See, e.g., \cite{ng2001discriminative} and \cite{hinton2007recognize}.

Many conditional generative models, such as conditional GANs (cGANs), have objectives related to \eqref{non-par} and feature an encoder-decoder architecture. See, e.g., \cite{mirza2014conditional}, \cite{gauthier2014conditional}, \cite{denton2015deep},\cite{isola2017image},\cite{antipov2017face}, \cite{mao2019mode}, cGANs search for a generator $G:(x,z)\to y$ that maps a combination of an observed image $x$ and random noise $z$ into another image $y.$ Given a discriminator $D,$ the objective is to minimize the adversarial objective penalized by the reconstruction loss, 
$\cL_{adv}\ +\ \gl\, \cL_{rec},$
where $\cL_{adv}\ =\ \mathbb{E}[\log (D(y))]\ +\ \mathbb{E}[\log (1-D(G(x,z)))]$
and $\cL_{rec}\ =\ \mathbb{E}[\|y-G(x,z)\|^2]\,.$ 
As we explain above, one can represent $G(x,z)$ as a probabilistic auto-encoder, with full initial data $\tilde x=(y,x)$, the pre-processing map $\cF(\tilde x)=x$ (only $x$ is used for generation), and the target $g(\tilde x)$ in \eqref{unbiased} defined via $g(\tilde x)\ =\ y.$ 

A competitor to conditional GANs is the denoising diffusion model of \cite{ho2020denoising}. Based on this model, \cite{li2023your} introduce a diffusion classifier. As \cite{ho2020denoising} show, efficiently training diffusion models can be done by minimizing the reconstruction loss between the $\eps$ (the noise) and the original data, $x:$ They do it by solving $\min_\theta \mathbb{E}[\|\eps-\eps_\theta(x_t,c)\|^2],$ where $x_t$ is the image, ``diffused'' after several steps of adding noise, and $c$ is the class label. \cite{li2023your} then show how that investigating the whole vector $(\|\eps-\eps_\theta(x_t,c_i)\|^2)_{i=1}^K$ (where $K$ is the number of classes in the dataset) can be used to build classifiers of the form $f((\|\eps-\eps_\theta(x_t,c_i)\|^2)_{i=1}^K)$ for some function $f.$ Given a loss function $\ell$ (e.g., $L_2$-distance or cross-entropy), we end up with an objective 
$\cL=\mathbb{E}[\ell(c,f((\|\eps-\eps_\theta(x_t,c_i)\|^2)_{i=1}^K))],$ 
which directly depends on the auto-encoded noise $\eps_\theta,$ consistent with \eqref{non-par}. Let now $\tilde x=(x_t,\eps)$ and let $(\eps_\theta(x;c_i))_{i=1}^K=\cD(\cE(x))$ be an auto-encoder. Define $W(z;c)=-\mathbb{E}[\ell(c,f((\|\eps-z\|^2)_{i=1}^K))\,|\, z]$ so that, by the law of iterated expectations, our objective is to minimize $\cL=-E[W(\cD(\cE(x_t));c)],$ consistent with \eqref{non-par}.

{\bf Non-Linear Supervised Principal Components Regression and Supervised Principal Manifolds.}  Given a dimension $\nu$, the solution to the optimal linear auto-encoder problem $
\min_{linear\ \cE,\cD}\{\mathbb{E}[\|x-\cD(\cE(x))\|^2]:\ \cD(z)\ =\ \mathbb{E}[x|z]\}$
over linear $\cE,\cD$ with $\cE$ of latent dimension $\nu$ is given by the projection of $x$ on the top $\nu$ principal components (assuming, e.g., a Gaussian distribution). Thus, the general problem of optimal autoencoding could be thought of as a form of non-linear principal component analysis (PCA). Some papers attempt to use PCA for supervised problems. See, for example, \cite{jolliffe1982note}, \cite{bair2006prediction}. Here, we argue that the problem \eqref{non-par} is tightly linked to the general problem of {\it Non-Linear Supervised Principal Components}. Indeed, consider a data sample consisting of $N$ observations $(y;X)\ =\  ((y_i;x_i))_{i=1}^N,$ with $(y_i;x_i)\in \cY\times \cX$ (the training sample) where $\cY$ and $\cX$ are, respectively, label and feature spaces. Given a parametric family of functions $\{f_{\psi}(x)\}_{\psi\in\Psi},$ a learning algorithm is a map from the (train) data sample $(y;X)$ into an estimator $\hat\psi$ of $\psi$, given by a function of the data.\footnote{For example, the linear ridge regression has $f_\psi(\bx)=\psi'\bx,$ and the learning algorithm is given by $\hat\psi(y;X)=(\gl I+X'X)^{-1}X'y,$ where $\gl$ is the ridge penalty.} The algorithm's objective is to minimize the test loss $\cL\ =\ \mathbb{E}[\ell(\by;f_{\hat\psi(y;X)}(\bx))]\,,$
where $(\by;\bx)\in \cY\times \cX$ is a test sample realization, and where the expectation is taken over all possible realizations of $(\by;\bx;y;X),$ drawn from their joint distribution on $\cY\times\cX\times \cY^N\times \cX^N$. Using the law of iterated expectations and defining 
$W(\bx,X)\ =\ -\,{\mathbb{E}[\ell(\by;f_{\hat\psi(y;X)}(\bx))|\bx,X]},$
we can rewrite $\cL\ =\ -\mathbb{E}[W(\bx,X)]\,.$\footnote{Similar problems are commonly referred to as ``feature learning.''  The latter is indeed often associated with dimensionality reduction. For example, \cite{radhakrishnan2022feature} show how a linear dimensionality reduction of the original data (linear feature learning) significantly improves the performance of shallow neural networks. Similarly, \cite{radford2015unsupervised} argue that the quality of the encoded representations (i.e., $\cE(x)$) can be evaluated by their ability to serve as features for a regression or classification problem. Here, we argue that one should use the fully auto-encoded features $\cA(x)=\cD(\cE(x))$ instead of just the encodings, $\cE(x),$ for the following reasons: (1) The generative problem typically has a unique solution (defined as the joint distribution of $x$ and $\cA(x)=\cD(z)$). By contrast, $\cE(x)$ is not uniquely defined. In fact, any injective transformation $\Phi:\cZ\to\cZ$ gives rise to the same auto-encoder; 
(2) A given learning algorithm might be designed to take as input data of a specific dimension (e.g., a CNN is often sensitive to the exact dimensions and the number of channels for images); (3) As we show in the Appendix, the optimal decoder $\cD(z)$ is always bijective. Hence, no information is lost when the encoding $z$ is replaced with $\cD(z).$}


\subsection{Solution}

The problems in \eqref{non-par} are non-parametric: They impose no constraints on the code space $\cZ$ and the encoders $p(x;z)$ ($z=\cE(x)$ in the case of BAE). By \eqref{unbiased}, a trivial autoencoder is $\cE(x)=\cG(x)=\mathbb{E}[g(\tilde x)|x].$ The existence of a non-trivial solution to \eqref{non-par} would imply the existence of a {\it beneficial autoencoder.} As we now explain, the nature of the problems \eqref{non-par} depends crucially on the convexity properties of the function $W$ (the generative score). If $W$ is convex, then the Jensen inequality implies that data compression is always suboptimal: $
\mathbb{E}[W(\cD(z))]\ \underbrace{=}_{\eqref{unbiased}}\ \mathbb{E}[W(\mathbb{E}[\cG(x)|z])]\ \le\ \mathbb{E}[W(\cG(x))]\,.$
By contrast, if $W$ is concave, $W(\mathbb{E}[\cG(x)]\ \ge\ \mathbb{E}[W(\cG(x))]\,,$ and the optimal autoencoder is $\cE(x)=\mathbb{E}[\cG(x)]=\mathbb{E}[g(\tilde x)]=\cD(z),$ with the code space $\cZ=\{\mathbb{E}[g(\tilde x)]\}$ being a single point. Hence, all ``context'' information in $x$ is fully discarded.  In general, it is known that most of the modern statistical (machine) learning problems are neither convex nor concave,\footnote{This non-convexity usually refers to the dependence on the neural network weights (\cite{goodfellow2014qualitatively}, \cite{li2018visualizing}). Still, the non-convexity concerning the training data is also ubiquitous in most machine learning problems.} Hence, the optimal code space $\cZ$ is neither $\cX$ nor a single point. Finding the optimal code space $\cZ$ without any information about its structure seems daunting. In this paper, we use optimal transport theory and metric geometry techniques to characterize some natural regularity properties of $\cZ$ and prove that $\cD(\cZ)$ is always a Lipschitz manifold of a computable dimension. 

For any symmetric matrix $A,$ let $\nu_+(A)$ be the number of non-negative eigenvalues of $A$, and, similarly, $\nu_-(A)$ be the number of non-positive eigenvalues of $A$. We use $D_XW(X)$ to denote the gradient of $W$ and $D_{XX}W(X)$ the Hessian of $W.$ The following is the main theoretical result of this paper. 

\begin{theorem}\label{main-gen-char0} Suppose that $\cX\subset\R^L$ is an open subset and  $p(x)$ is absolutely continuous with respect to the Lebesgue measure on $\R^L$. Suppose that $W(X)$ is smooth and either $\gO$ is bounded or the derivatives of $W(X)$ decay sufficiently fast at infinity.\footnote{See the Appendix for precise conditions.} 
Then, there always exists an optimal probabilistic autoencoder $(p(dx;z),\cZ,\cD)$. Furthermore: (i) For any such autoencoder and any open subset $Q\subset \cX$, the autoencoded space $\cD(\cZ)\cap Q$ is a subset of a $\nu$-dimensional Lipschitz manifold in $\R^L$, with $\nu\le \sup_{X\in Q} \nu_+(D_{XX}W(X));$ (ii) $W$ is convex when restricted onto the $\cD(\cZ);$ (iii) If $W$ is concave along rays for large $\|X\|$, then $\cA(\cX)\ =\ \cD(Z)$ is compact. 
\end{theorem} 

We call $\nu$ (the latent dimension of the optimal autoencoder from Theorem \ref{main-gen-char0}) the {\it compressibility dimension} of the generative problem of Definition \ref{main-def}. Directions of information along which $W(X)$ is concave are ``malign,'' hurt (or are irrelevant to) model performance  and are compressed by the autoencoder. They are orthogonal to the autoencoder space $\cA(\cX)$ ({\it the optimal feature manifold}). Directions along which $W(X)$ is convex are benign. These are directions tangent to the autoencoder space $\cA(\cX),$ and  $W$ is locally convex along the tangent space of the Lipschitz manifold $\cA(\cX).$

The compactness of $\cA(\cX)$ has interesting implications for popular feature processing algorithms such as batch normalization. Batch normalization is known to improve the performance of learning algorithms. By construction, batch normalization does two things: (1) it performs dimensionality reduction (just like autoencoders), projecting data on a sphere;\footnote{E.g., the mapping $(x_1,x_2)\to (x_1,x_2)/\sqrt{x_1^2+x_2^2}$ maps $\R^2$ onto the unit circle.} and (2) compactifies data (because unit sphere is compact). Theorem \ref{main-gen-char} implies that compactification of data (such as batch normalization) is optimal when the sensitivity to extreme outliers is small. 

As an illustration, suppose that the humans evaluate the ``creativity'' of the model by its ability to generate data $a=\cA(x)$ that is not too different from the original data $x$, penalizing outliers (e.g., image generation would like to generate cats that look ``normal''). Formally, we assume that the objective function is to minimize $\mathbb{E}[\|a-x\|^2+\gamma \|a\|^4]\,,$
where $\gamma$ measures the sensitivity to outliers. By direct calculation, using that $\mathbb{E}[x|a]=a$ by \eqref{unbiased}, we get $\mathbb{E}[\|a-x\|^2+\gamma \|a\|^4]\ =\ \mathbb{E}[\|x\|^2]+\mathbb{E}[-\|a\|^2+\gamma\|a\|^4],$ implying that $W(a)=\|a\|^2-\gamma\|a\|^4.$ In this case, as we show in the Appendix, when $x\in \R^L$ is standard normal, the optimal encoded space $\cA(\cX)$ is a $(L-1)$-dimensional sphere of some data-dependent radius, and, hence, the optimal encoder is just a simple batch normalization map.\footnote{
This example has important implications for learning representations. In machine learning, using encoded representations (i.e., $\cE(x)$) as inputs into discriminative problems is common practice. See, e.g., \cite{radford2015unsupervised}. However, if the manifold on which $\cE(x)$ lives is compact, no natural global coordinate system may allow us to parametrize it without introducing artificial boundary effects. One needs to embed the non-Euclidean features in $\cE(x)$ into a Euclidean space. We argue here that the fully auto-encoded features $\cA(x)=\cD(\cE(x))$ represent the most natural embedding of $\cE(x)$ into $\R^L$ and, hence, one should directly use $\cA(x)$ instead of $\cE(x).$ E.g., in the above example, $\cE$ lives on a sphere, which is a $(L-1)$-dimensional manifold, but there is no natural way to parametrize it by $(L-1)$ coordinates globally. Instead, it can be embedded into $\R^L$, and this embedding is more convenient to work with.} In the case when $W$ satisfies some strong regularity conditions, it is possible to get more information about the structure of the optimal autoencoder. 

\begin{theorem}\label{main-gen-char} 
Suppose for simplicity that $\cG(x):\ \R^L\to\R^L$ is a linear, non-degenerate map, and that $\cX$ is convex, and $D_XW(X):\cX\to\R^L$ is such that the inverse mapping $D_XW^{-1}(X)$ has a finite number of continuous branches in $\cX$. Then, $\cL_{PBAE}= \cL_{BAE}$ in \eqref{non-par}, and there exists an optimal deterministic autoencoder $\cA(X)=\cD(\cE(X))$ such that 
    (1) for every $X,$ the pre-image $(D_XW\circ \cA)^{-1}(X)$ is a convex set that almost surely has dimension less than or equal to  $\nu_-(D_{XX}W(X))\,;$
    (2) The map $D_XW(\cA(X))$ is monotone increasing;\footnote{A map $F:\R^L\to\R^L$ is monotone increasing in $(F(x)-F(y))^\top (x-y)\ge 0$ for all $x,y.$}
    (3) If $D_XW(X)$ is injective, then $\cA(X)$ is a projection: $\cA(\cA(X))=\cA(X)$ Lebesgue-almost surely;
    (4) If $D_XW(X)$ is injective and $D_{XX}W(X)$ is non-degenerate, then the encoded space $\cA(\cX)$ is a Lipschitz manifold of dimension exactly $\nu_+(D_{XX}W(X)),$ while the pre-image $\cA^{-1}(X)$ almost surely has dimension exactly $\nu_-(D_{XX}W(X)).$
\end{theorem}

The most relevant part of Theorem \ref{main-gen-char} concerns the existence and properties of the optimal deterministic autoencoder. It shows that the autoencoder decomposes the data space\footnote{While $\cZ$ and $\cE$ are not uniquely defined (e.g., any injective transformation of $\cE$ leads the same information and, hence, the same autoencoder), the image $\cA(\cX)$ is defined uniquely and is a Lipschitz manifold.} into a {\it fiber bundle}, with the base space given by the image $\cA(X)$ of the autoencoder (of dimension $\nu_+(D_{XX}W(X))$, and fibers $\cA^{-1}(X)$ being convex subsets of dimension $\nu_-(D_{XX}W(X)).$ The fact that $\cA$ is a projection formalizes a form of optimality: Once $X$ is autoencoded to $\cA(X),$ encoding it again does not bring additional gains and hence does not modify it. This theoretical finding suggests an iterative algorithm for finding the optimal autoencoder: For any given (suboptimal) autoencoder $\cA$, defining the {\it stacked autoencoder} $\cA^{(k)}=\underbrace{\cA\circ \cdots\circ \cA}_{k\ times},$ we get that the limit (if it exists) $\cA_\infty\ =\ \lim_{k\to\infty} \cA^{(k)}$ satisfies $\cA_\infty=\cA_\infty\circ \cA_\infty.$ Thus, potentially, one might come closer to the optimum by stacking several sub-optimal autoencoders together. This observation might be related to the popular practice of stacking multiple autoencoders. See, e.g., \cite{vincent2010stacked}. Note finally that, in applications to generative problems, if $W(x)=\mathbb{E}_\eps[w(\mu(x)+\sigma(x)\eps)]$ for some function $w,$ then a deterministic autoencoder $\cA(x)$ leads to a generative model $G(x,\eps)\ =\ \mu(\cA(x))+\sigma(\cA(x))\eps.$

\section{Experiments}
\label{sec:numerical_evidences}
\begin{figure}
    \centering
    \makebox[\textwidth]{\includegraphics[width=1.2\textwidth]{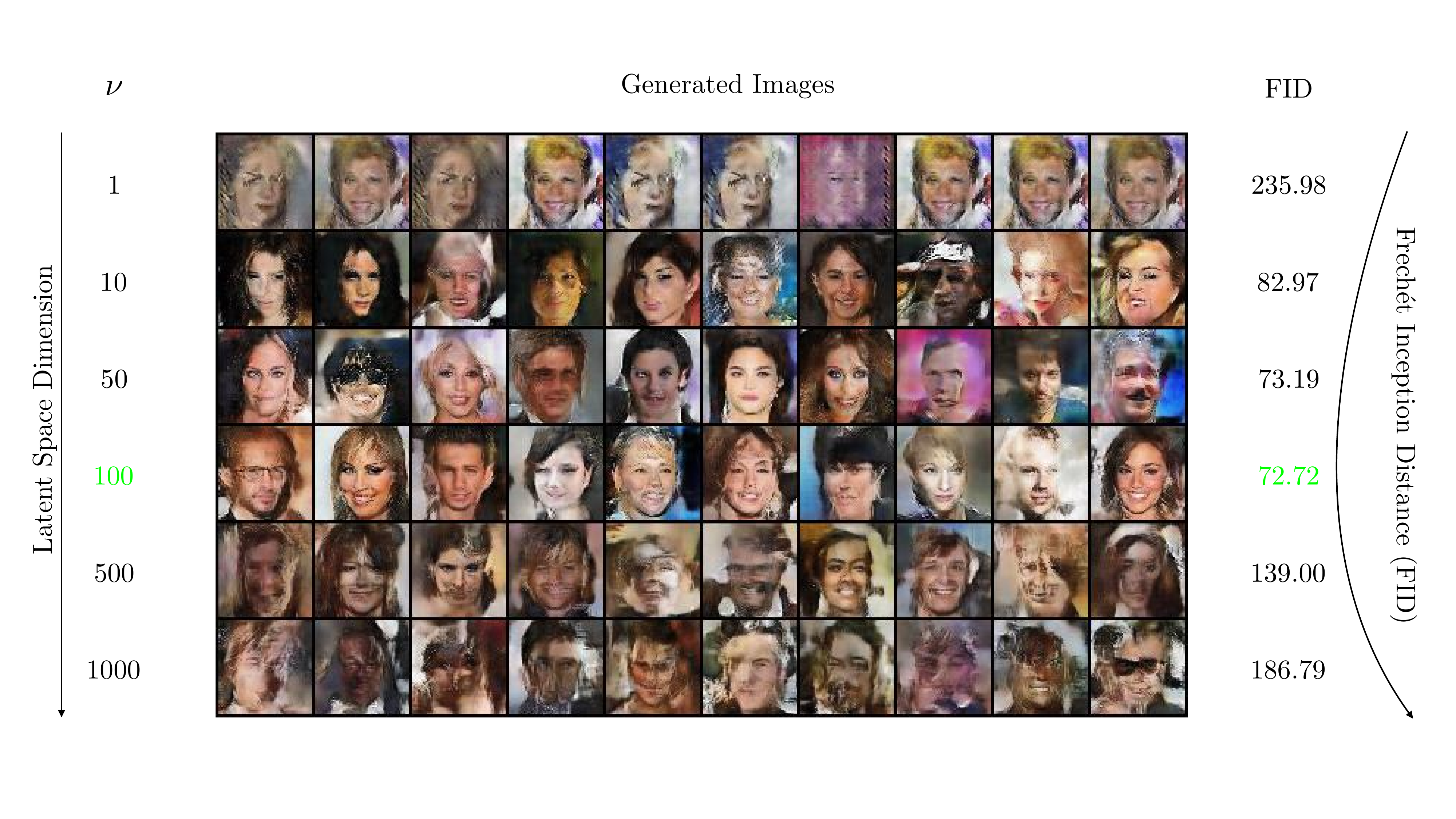}}
    \caption{Distance-regularized GAN on CelebA-HQ. FID score with varying latent space dimension $\nu$, while keeping constant the discriminator $D$ and the non-bottleneck layers of the decoder $\cD_{\theta}$ and encoder $\cE_{\phi}$ architectures. Images were resized to $64 \times 64.$}
    \label{fig:distance_gan}
\end{figure}
The key testable implication of our theory is the existence of an optimal bottleneck (latent) dimension for the encoder: With too few latent dimensions, the model is not rich enough; with too many, it encodes malignant dimensions that hurt (or simply do not improve) performance: The encoded information ``saturates.'' In this section, we test this prediction through experimentation with various generative and discriminative problems outlined in Section~\ref{sec:examples}, utilizing a variety of datasets. Experiments were conducted on either a single NVIDIA RTX 4090 24GB GPU or a single NVIDIA TITAN X 12GB GPU. For comprehensive details about the architectures, algorithms, and training settings, we refer the reader to the Appendix.\footnote{The repository is: \url{https://github.com/tengandreaxu/benign-autoencoders}.}

\subsection{Distance-Regularized GANs}
\label{sec:exp_distance_gan}

As many modes of the true data-generating distribution are missed in the generated samples with standard GANs \cite{che2016mode, mao2019mode}, the literature has proposed distance-regularized GAN to solve the problem. This problem is ideal for testing our theory, as our results imply that, with distance regularization, an optimal latent dimension of the encoder that maximizes GAN performance, exists. 

We train a discriminator $D$ in the normal GAN setting and then optimize $\cD_{\theta}$ and $\cE_{\phi}$ with respect to the distance regularized adversarial loss $\mathcal{L_{BAE}} =\ \log(D(\cD_{\theta}(\cE_{\phi}(x))))\ -\ \lVert x - \cD_{\theta}(◦\cE_{\phi}(x)) \rVert^2$. We remind that this training is equivalent to \ref{non-par}, with $W(x) = (1-D(\cD_{\theta}(\cE_{\phi}(x))).$

To demonstrate the existence of an optimal $\nu$, we train the auto-encoder while varying $\nu \in \{1,10,50,100,500,1000\},$ maintaining constant architectures for $D$, and the non-bottleneck layers of $\cE_{\phi}$ and $\cD_{\theta}$. Our experiment, conducted on the CelebA-HQ dataset \cite{karras2017progressive, CelebAMask-HQ}, assesses the quality of the generative model using the FID score. Figure~\ref{fig:distance_gan} indicates a striking agreement with our theory, with the optimal latent dimension $\nu$ being about 100. Conversely, when the latent dimension $\nu$ becomes larger, the performance of the generative model deteriorates. 

\subsection{Context-Encoders and In-Painting}
\begin{figure}
    \centering
    \makebox[\textwidth]{\includegraphics[width=1.2\textwidth]{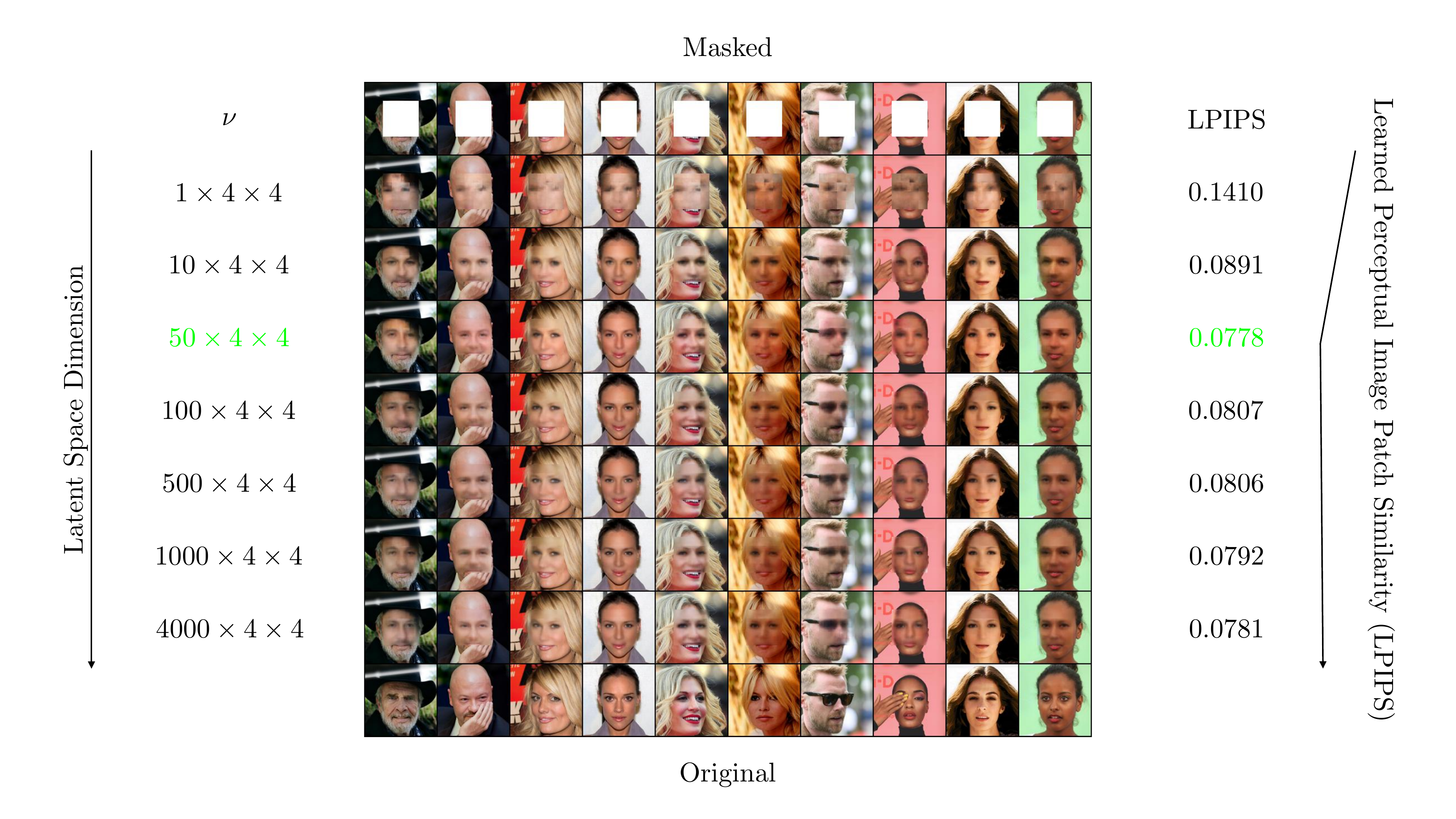}}
    \caption{Context-Encoder on CelebA-HQ. LPIPS score with varying latent space dimension $\nu$, while keeping constant the discriminator $D$ and the non-bottleneck layers of the decoder $\cD_{\theta}$ and encoder $\cE_{\phi}$ architectures. Images were resized to $128 \times 128.$ The mask area is $64 \times 64$. }
    \label{fig:context}
\end{figure}

Introduced by \cite{pathak2016context}, context encoders are generative models trained to fill (in-paint) the contents of an arbitrary image region conditioned on its surroundings. These generative models are penalized by a distance loss to maximize the model's generalization and, thus, fit our theory for the same reason as in the distance-regularized example.

Again, we train a discriminator $D$ to distinguish between fake, denoted as $\hat M\odot \cD_\theta(\cE_\phi((1-\hat M)\odot \tilde x)$, and real, denoted as $M\odot \tilde x$, content images. At the same time, $\cD_{\theta}$ and $\cE_{\phi}$ try to minimize $\cL_{rec} =\lVert 1 - D (M\odot \cD_\theta(\cE_\phi((1-\hat M)\odot \tilde x)) \rVert^2 +\ \|\hat M\odot \cD_\theta(\cE_\phi((1-\hat M)\odot \tilde x))-\hat M\odot \tilde x\|.$ Similar to Section~\ref{sec:exp_distance_gan}, this is equivalent to \ref{non-par}, with $W(x) = (1-D(\hat M\odot \cD_\theta(\cE_\phi((1-\hat M)\odot \tilde x)))).$

As before, we vary $\nu$ on a grid and keep constant architectures for $D$, $\cD_{\theta}$, and the non-bottleneck layers of $\cE_{\phi}$. The experiment is run again on the CelebA-HQ dataset, but we use the more suitable LPIPS score \cite{zhang2018unreasonable} for quality assessment. We train $D, \cD_{\theta}, \text{and } \cE_{\phi}$ on the first 26,000 samples and ``in-paint'' the remaining 4,000 out of sample. The LPIPS score is computed using these 4,000 in-painted out-of-sample images and the ground truth. Figure~\ref{fig:context} shows how after reaching the optimal $\nu$, around $50 \times 4 \times 4$ (the {\it compressibility dimension}), the generative model performance is not increasing but rather stays in a plateau.

\section{Discussion}

{\bf Limitations.} We gave many examples of known generative problems (distance-regularized GANs, context encoders, etc.) that can be reformulated in terms of  finding the optimal {\em benign autoencoder}. The key prediction of our theory is the existence of an optimal latent dimension $\nu.$ In our experiments, we find $\nu$ through a grid search. However, our theory implies that $\nu$ can be computed as the number of positive eigenvalues of the Hessian of $W.$ Developing efficient algorithms for computing $\nu$ is an important direction for future research.

The image generation performance that we presented lags behind the state-of-the-art. The primary goal of our experiment was to empirically validate our main Theorem~\ref{main-gen-char0}, thereby shedding light on the {\em optimal information bottleneck} phenomenon.

{\bf Conclusion.} Efficient data representations are crucial for many recent breakthroughs in machine learning. However, as more and more innovations rest on representation learning, our fundamental understanding of this pivotal concept is lagging behind. In this work, we try to bridge that gap through a novel theory. We prove that (under minimal regularity conditions) every distance-regularized generative problem admits an optimal encoder-decoder architecture with encoded features that live on a surface (manifold) on the {\it optimal compressibility dimension} that we characterize. Consistent with out theory, best model performance is achieved by autoencoders with low latent dimensions in every experiment we run.

\clearpage

\appendix

\section{Introduction}

This document contains relevant background, technical conditions, and theoretical proofs of all results from the main text. Furthermore, Section~\ref{sec:empirics} provides elaborate information regarding the architectures, algorithms, and training settings outlined in the main paper.

\section{Optimal Autoencoders} \label{sec:moment}

We assume that the train data $X$ takes values in a (potentially unbounded) subset $\gO\subset \R^L$ for some $L\in \mathbb N.$ We assume that the distribution $p(dx)$ of $X$ has a density with respect to the Lebesgue measure on $\gO.$ The problem of findings of the Optimal (benign) Autoencoder can be stated as follows. 

\begin{definition}[Optimal Autoencoder]\label{def1}   We have 
\begin{itemize}
\item A probabilistic autoencoder is a triple $(p(X;z), \cD,\cZ),$ where $\cZ$ is a Borel space, $p(X;z)$ is a probability distribution on $\gO\times \cZ$ such that $p(dX;\cZ)=p(dX)$ (that is, the marginal of the joint distribution $p(X;z)$ coincides with the actual distribution of the train data, $p(dX));$ the conditional distribution $p(dX|z)$ is regular; and $\cD:\ \cZ\to \R^L$ is a Borel map defined by 
\begin{equation}
\cD(z)\ =\ \mathbb{E}[X|z]\ =\ \int X p(dX|z)\,.
\end{equation}

\item A deterministic autoencoder is a triple $(\cE,\cD,\cZ),$ where $\cZ$ is a Borel space,  $\cE:\ \gO\to \cZ$ is a Borel map, and 
\begin{equation}
\cD(z)\ =\ \mathbb{E}[X|\cE(X)=z]
\end{equation}
The corresponding conditional distribution $p(dX|z)$ is given by 
\begin{equation}
p(\cX|z\in Z)\ =\ p(\cE^{-1}(Z))\,
\end{equation}
for any Borel $Z\in \cZ.$ In this case, we denote $\cA(X)=\cD(\cE(X))\,.$

\item The code space of an autoencoder is the support of the marginal distribution $p(dz)=p(d\gO;z)$ of $z$.

\item The solutions to the problems 
\begin{equation}\label{non-par1}
\begin{aligned}
&W^*\ =\ \sup_{\cZ,\ p(X;z)}\{\mathbb{E}_{p}[W(\mathbb{E}[X|z])]:\ p(dX;\cZ)=p(dX)\}\\
&W_*\ =\ \sup_{\cZ,\ \cE:\ \cX\to\cZ}\{\mathbb{E}[W(\mathbb{E}[X|\cE(X)=z])]\}
\end{aligned}
\end{equation}
(if they exist) are called the optimal probabilistic benign autoencoder (PBAE) and the optimal benign autoencoder (BAE), respectively. 
\end{itemize}
\end{definition}
The, we report here the main theorem:

\begin{theorem}\label{main-gen-char-app} Suppose that $\cX\subset\R^L$ is an open subset and  $p(x)$ is absolutely continuous with respect to the Lebesgue measure on $\R^L$. Suppose that $W(X)$ is smooth and either $\gO$ is bounded or the derivatives of $W(X)$ decay sufficiently fast at infinity.\footnote{See Assumption \ref{integrability} for precise conditions.} 
Then there always exists an optimal probabilistic autoencoder $(p(dx;z),\cZ,\cD)$. For any such autoencoder and any open subset $Q\subset \cX$, the autoencoded space $\cD(\cZ)\cap Q$ is a subset of a $\nu$-dimensional Lipschitz manifold in $\R^L$, with $\nu\le \sup_{X\in Q} \nu_+(D_{XX}W(X)).$ 

Suppose now that $\cX$ is convex, and $D_XW(X):\cX\to\R^L$ is such that the inverse mapping $D_XW^{-1}(X)$ has a finite number of continuous branches in $\cX$. Then, $\cL_{PBAE}= \cL_{BAE}$ and there exists an optimal deterministic autoencoder $\cA(X)=\cD(\cE(X))$ such that 
\begin{itemize}
    \item[(1)] for every $X,$ the pre-image $(D_XW\circ \cA)^{-1}(X)$ is a convex set that almost surely has dimension less than or equal to  $\nu_-(D_{XX}W(X))\,.$
    
    \item[(2)] The map $D_XW(\cA(X))$ is monotone increasing.\footnote{A map $F:\R^L\to\R^L$ is monotone increasing in $(F(x)-F(y))^\top (x-y)\ge 0$ for all $x,y.$}
    
    \item[(3)] If $D_XW(X)$ is injective, then $\cA(X)$ is a projection: $\cA(\cA(X))=\cA(X)$ Lebesgue-almost surely.
    
    \item[(4)] If $D_XW(X)$ is injective and $D_{XX}W(X)$ is non-degenerate, then the encoded space $\cA(\cX)$ is a Lipschitz manifold of dimension exactly $\nu_+(D_{XX}W(X)),$ while the pre-image $\cA^{-1}(X)$ almost surely has dimension exactly $\nu_-(D_{XX}W(X)).$
    
    \item[(5)] If $W$ is concave along rays for large $X$ (see Definition \ref{conc-rays}), then $\cA(\cX)$ is compact. 
\end{itemize}
\end{theorem}

The proof of Theorem \ref{main-gen-char-app} is non-trivial and combines techniques from optimal transportation theory and metric geometry. The rest of the Supplementary material is devoted to the formal proof of this theorem. It is organized as follows: 

\begin{itemize}
    \item In Section \ref{relax}, we introduce a relaxation of the problem. 
    
    \item In Section \ref{sec-discrete}, we introduce the problem of finding the optimal autoencoder with a finite code space $\cZ$ (effectively, such an autoencoder is a discrete classifier) and prove that an optimal discrete deterministic autoencoder always exists using the theory of real analytic functions (Theorems \ref{mainth1} and \ref{mainth1-analytic}).
    
    \item In Section \ref{sec-lim}, we take the limit of finite autoencoders as the code space size increases to infinity, and and establish the existence of a probabilistic autoencoder, as well as sufficient conditions for the existence of an optimal deterministic autoencoder. 
    
    \item In Theorem \ref{thm-unif}, we prove that the image of the optimal code space, $\cD(\cZ),$ is a subset lower-dimensional Lipschitz manifold. 
    
    \item In Proposition  \ref{regularity} we derive sufficient conditions allowing us to compute the exact dimension of the manifold and show that it exactly equals the number of positive eigenvalues of the Hessian of $W$. 
    
    \item In Proposition \ref{main convexity}, we prove the convexity of pre-images $\cA^{-1}(X).$ 
    
    \item In Corollary \ref{dimension-pool}, we prove that pools are convex subsets of dimension equal to the number of negative eigenvalues. 
    
    \item Finally, Proposition \ref{conv-bound} establishes sufficient conditions for the compactness of $\cA(\cZ).$
\end{itemize}

\section{Relaxation of the Problem} \label{relax}

The problem of Definition \ref{def1} is (indirectly) related to the classic Monge problem of optimal transport. 
It is known that this problem is difficult to tackle directly and one usually studies the Kantorovich relaxation of the problem, and only then proves that the solution of the relaxed problem is given by a Monge map (under technical conditions). See, \cite{villani2009optimal}, \cite{galichon2018optimal}, \cite{villani2021topics}. In this paper, we follow a similar approach. We consider a relaxation (and a significant generalization) of the basic problem from Definition \ref{def1} and then prove (under technical conditions) that the solution to the relaxed problem is in fact given by an Optimal Autoencoder according to Definition \ref{def1}. 

\subsection{Generalized Unbiasedness Constraints}\label{generalized_loss}

The unbiasedness constraint 
\begin{equation}\label{unb1}
\cD(z)\ =\ \mathbb{E}[\go|z]
\end{equation}
can be also formulated as $a=\cD(z)$ being the unique $z-$measurable solution to 
\[
\mathbb{E}[G(a,\go)|z]\ =\ 0\,,
\]
where $G(a,\go)\ =\ a-X\,.$ Here, we use this intuition to introduce generalized unbiasedness constraints. 

Let $G:\R^L\times \gO\to\R^M$ be a Borel-measurable map. 
We will need the following technical condition 

\begin{assumption}\label{ac} The map $G$ satisfies the following conditions:  
	\begin{itemize}
		\item $G$ is continuously differentiable in $a$. 
		
		\item $G$ is uniformly monotone in $a$ for each $\go$ so that $\eps\|z\|^2\ \le -z^\top D_aG(a,\go)z\le\eps^{-1}\|z\|^2$ for some $\eps>0$ and all $z\in \R^L;$\footnote{Strict monotonicity is important here. Without it, there could be multiple equilibria.} 
		
		\item the unique solution $a_*(\go)$ to $G(a_*(\go),\go)=0$ is square integrable: $\mathbb{E}[\|a_*(\go)\|^2]<\infty.$
	\end{itemize}
\end{assumption}

Assumption \ref{ac} implies that the following is true:

\begin{lemma} \label{existence} For any posterior distribution $\mu$ of $X$, there exists a unique action $a=a(\mu)$ to 
	\begin{equation}\label{sys-3}
		\int G(a(\mu),\go)d\mu(\go)\ =\ 0
	\end{equation}
and such that $\|a(\mu)\|^2\ \le\ \kappa \int_\gO \|a_*(\go)\|^2d\mu(\go)$ for some universal $\kappa>0.$
\end{lemma}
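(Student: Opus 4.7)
The plan is to treat Lemma \ref{existence} as a straightforward fixed-point statement about the strongly monotone map $F(a):=\int G(a,\go)\,d\mu(\go)$. Uniform monotonicity of $G$ in $a$ passes through integration, so once $F$ is shown to be well-defined it inherits all the desirable structure: strict lower monotonicity for uniqueness, and a two-sided bound on $D_aF$ that I can turn into a contraction for existence and into an a priori bound via pairing with $a(\mu)$.

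The first thing I would do is verify that $F$ is a well-defined $C^1$ map from $\R^L$ into itself. The Lipschitz-in-$a$ control $\|G(a,\go)\|=\|G(a,\go)-G(a_*(\go),\go)\|\le \eps^{-1}\|a-a_*(\go)\|$, obtained from the fundamental theorem of calculus applied to $D_aG$ and the upper monotonicity bound, combined with the $L^2$-integrability of $a_*$, makes $G(a,\cdot)$ uniformly $\mu$-integrable on compact sets of $a$ and justifies differentiation under the integral sign. The resulting derivative $D_aF(a)=\int D_aG(a,\go)\,d\mu$ then inherits the pointwise two-sided bound $\eps\|z\|^2\le -z^\top D_aF(a)z\le \eps^{-1}\|z\|^2$.

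For existence and uniqueness I would use two standard ingredients. Uniqueness is one line: if $F(a_1)=F(a_2)=0$, pairing $a_1-a_2$ with the difference $F(a_1)-F(a_2)$ rewritten by the fundamental theorem of calculus and using the strict lower bound gives $0\le -\eps\|a_1-a_2\|^2$. For existence I would apply Banach's fixed point theorem to $H(a):=a+\tau F(a)$ for a sufficiently small $\tau>0$; expanding $\|H(a_1)-H(a_2)\|^2$ and invoking both monotonicity bounds produces a contraction constant $1-2\tau\eps+\tau^2\eps^{-2}<1$.

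For the a priori estimate I would pair $a(\mu)$ with the identity $0=F(a(\mu))=F(0)+\bigl(\int_0^1 D_aF(s\,a(\mu))\,ds\bigr)\,a(\mu)$, which together with the lower monotonicity bound gives $\eps\|a(\mu)\|^2\le \|a(\mu)\|\,\|F(0)\|$. A second use of the same Lipschitz estimate at $a=0$ yields $\|G(0,\go)\|\le \eps^{-1}\|a_*(\go)\|$, and Jensen then delivers $\|F(0)\|^2\le \eps^{-2}\int\|a_*(\go)\|^2\,d\mu$, producing $\kappa=\eps^{-4}$. The only place requiring care—hardly a real obstacle—is the interplay between the Lipschitz-in-$a$ bound on $G$ and the $L^2$-integrability of $a_*$, which ensures every integral in sight converges; nothing deeper is needed since the problem lives in finite dimensions and the paper's Assumption \ref{ac} already hands over both a Lipschitz constant and a strict-monotonicity constant.
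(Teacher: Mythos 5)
Your proof is correct and follows essentially the same route as the paper: Banach's fixed point theorem applied to the contraction $a\mapsto a+\tau F(a)$ gives existence, strict lower monotonicity gives uniqueness, and the a priori bound combines the Lipschitz-type control on $G$ with Cauchy--Schwarz. The only small difference is in the a priori estimate, where you expand $F$ around $a=0$ to obtain $\|a(\mu)\|\le\varepsilon^{-1}\|F(0)\|$ and then bound $\|F(0)\|$ via $\|G(0,\go)\|\le\varepsilon^{-1}\|a_*(\go)\|$, whereas the paper pairs $G(a,\go)$ with $a_*(\go)-a$ and uses the equilibrium identity to eliminate the $a^\top G$ term; both routes give $\kappa$ of order $\varepsilon^{-4}$.
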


\begin{proof}[Proof of Lemma \ref{existence}] First, by uniform monotonicity, the map 
	\[
	a\to\ F(a)= a+\gd \mathbb{E}[G(a,\go)|k]
	\]
	is a contraction for sufficiently small $\gd.$ Indeed, by monotonicity, 
	\[
	\|F(a_1)-F(a_2)\|^2\ \le\ \|a_1-a_2\|^2-2\eps\gd\|a_1-a_2\|^2\ +\ \gd^2\eps^{-2} \|a_1-a_2\|^2\,. 
	\]
	As a result, there exists a unique equilibrium by the Banach fixed point theorem. Then, with $a=a(k),$
	\begin{equation}
		\begin{aligned}
			&\mathbb{E}[(a_*(\go)-a)^\top\,G(a,\go)|k]\ =\ \mathbb{E}[(a_*(\go)-a)^\top\,(G(a,\go)-G(a_*(\go),\go))|k]\\ 
			&\ge\ \eps\,\mathbb{E}[\|a_*(\go)-a\|^2|k]\ \ge\ \eps(\mathbb{E}[\|a_*(\go)\|^2+2\|a\|\|a_*(\go)\| |k]\ +\ \|a\|^2)\,. 
		\end{aligned}
	\end{equation}
	At the same time, 
	\begin{equation}
		\begin{aligned}
			&\mathbb{E}[(a_*(\go)-a)^\top\,G(a,\go)|k]\ =\ \mathbb{E}[a_*(\go)^\top\,G(a,\go)|k]\ \le\ \eps^{-1} \mathbb{E}[\|a_*(\go)\|\,\|a-a_*(\go)\||k]\\
			& =\ \eps^{-1}(\mathbb{E}[\|a_*(\go)\|^2|k]+\|a\|\mathbb{E}[\|a_*(\go)\||k])
		\end{aligned}
	\end{equation}
	and the claim follows. 
\end{proof}

In applications to real data, the most important case for us corresponds to $G(a,\go)=a-g(\go)$ for some Borel map $g$ representing manually engineered input features of the problem. However, one could envision other types of maps $G$ representing different forms of regularizations of the learning problem. For example, while $a=\mathbb{E}[g(X)|z]$ solves $a=\arg\min_a \mathbb{E}[(a-g(X))^2|z],$ one could assume that $a$ solves a different problem. For example, $a=\arg\min_a \mathbb{E}[\ell(a,g(X))]$ for some other loss function $\ell.$

\subsection{Generalized Optimal Autoencoder Formulation} 

Everywhere in the sequel, we use $\gD(\gO)$ to denote the set of Borel probability distributions on a Borel space $\gO.$ 
Similarly, we can define $\gD(\gD(\gO))$ to be the space of Borel probability distributions on $\gD(\gO).$   
We start with an observation that specifying an encoding $z$ and a joint probability distribution $p(X;z)$ is equivalent to specifying a distribution $\tau\in \gD(\gD(\gO))$ that can be defined via\footnote{For example, if the code space $\cZ$ consists of only three points, $s_1,s_2,s_3,$ with probabilities $p_1,p_2,p_3,$ let $\mu_i\in \gD(\gO)$ be the posterior distribution of $\go$ conditional on $s_i.$ This is equivalent to specifying a distribution $\tau$ on $\gD(\gO),$ with a support of three points, $\mu_1,\mu_2,\mu_3\in \gD(\gO),$ with $\mu_i$ occurring with probability $p_i.$ Hence, $\tau$ is a distribution on posterior distributions, $\tau\in \gD(\gD(\gO)).$ }
\begin{equation}
\tau(\cX)\ =\ p_z(\{z:\ p(dX|z)\in \cX\})\,. 
\end{equation}
This formulation of the problem is equivalent to a so-called optimal Bayesian persuasion problem intorduced in \cite{KamGenz2011} (see also \cite{RayoSegal2010}, \cite{DworzakKolotilin2019}, and \cite{BergMor2017}, \cite{kamenica2021bayesian} for excellent overviews). 

\begin{definition}\label{dfn1}
Let $G:\R^L\times \gO\to\R^L$ be a Borel-measurable map and let  $W(a,\go)\in C^3(\R^L\times \gO)$.
Let also 
\[
\bar W(\mu)\ =\ \int_\gO W(a(\mu),\go)d\mu(\go)\,,
\]
with  $a(\mu)$ defined in \eqref{sys-3}. 
The optimal autoencoder problem is to maximize 
\[
\int_{\gD(\gO)} \bar W(\mu)d\tau(\mu)
\]
over all $\tau\in\Delta(\Delta(\gO))$ satisfying 
\begin{equation}\label{bayes-rule}
 \int_{\gD(\gO)} \mu d\tau(\mu)\ =\ p\,. 
\end{equation}
We denote the value of this problem by $V(p)\,.$ A solution $\tau$ to this problem is called an optimal autoencoder. Choosing the code space $\cZ=\gD(\gO),$ and defining the conditional distribution $p(dX|\mu)=\mu,$ we get that \eqref{bayes-rule} is equivalent to $p(dX;\cZ)=p(dX):$
\begin{equation}
p(dX;\cZ)\ =\ \int_{\gD(\gO)} p(dX|\mu) d\tau(\mu)\ =\ \int_{\gD(\gO)} \mu d\tau(\mu)\ =\ p\,. 
\end{equation}
\end{definition}

Definition \ref{dfn1} shows that the relaxed problem is in fact equivalent to the problem of selecting an optimal autoencoder, allowing for an extremely rich code space: The space of all possible probability measures. As we show below, in fact, such a rich space is not necessary and the optimal code space (corresponding to an optimal autoencoder) can always be chosen as a subset of $\R^L.$

We will also need the following technical assumption. 

\begin{assumption}\label{integrability} The  function $W$ is jointly continuous in $(a,\go)$ and is continuously differentiable with respect to $a.$ Furthermore, there exists a function $\psi:\gO\to \R_+$ such that $\psi(\go)\ge \|g(\go)\|^2$ and the set $\{\go:\psi(\go)\le A\}$ is compact for all $A>0,$ and a convex, increasing function $f\ge 1$ such that  $|W(a,\go)|+\|D_aW(a,\go)\|\ +\ \|D_aW(a,\go)\|^2\ \le\ \psi(\go) f(\|a\|^2)$ and $\mathbb{E}[\psi^2(\go) f(\psi(\go))]<\infty.$
\end{assumption}

\begin{theorem}\label{mainth-limit} An optimal autoencoder in the sense of Definition \ref{dfn1} always exists. 

Suppose now that (1) $W(a)$ only depends on $a$ and $G(a,\go)=a-g(\go)$; (2) $L=M$ and $g:\R^L\to \R^L$  is injective and bi-Lipschitz; (2) $g(\gO)$ is convex; and (3) the inverse $D_aW^{-1}$ has a finite number of continuous branches.\footnote{That is, there exist continuous maps $G_i,\ i\in \N$ such that $D_aW(a)=b$ if and only if $a=G_i(b)$ for some $i.$ For example, this is the case when $W(a)$ is a generic real analytic function.}  Then, there exists a deterministic optimal autoencoder $\cA(\go)$ and for this policy the map $D_aW(\cA(g^{-1}(x)))$ is monotone has has convex level sets. 
\end{theorem}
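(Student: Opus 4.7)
My proposal proceeds in three stages: first, existence of a probabilistic optimizer $\tau^*$ by a direct/compactness argument; second, reduction of the problem via $g$ to a concave-envelope (Bayesian persuasion) problem on a convex domain in $\R^L$; third, extraction of a deterministic selector together with its monotonicity and convex-level-set properties.

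\textbf{Stage 1: Existence of an optimal PAE.} Equip $\gD(\gO)$ and $\gD(\gD(\gO))$ with their weak topologies. The function $\psi$ in Assumption \ref{integrability}, having compact sub-level sets and finite $p$-moment, gives tightness on $\gO$; since any feasible $\tau$ satisfies $\int \mu\,d\tau(\mu)=p$, Prokhorov's theorem lifts this to tightness of the set of feasible $\tau$ on $\gD(\gD(\gO))$. The feasibility constraint \eqref{bayes-rule} is linear in $\tau$ and continuous in the weak topology, hence the feasible set is weakly compact. For the objective, Assumption \ref{ac} combined with the implicit function theorem applied to \eqref{sys-3} gives continuity of $\mu\mapsto a(\mu)$ on $\gD(\gO)$; the growth bound on $W$ from Assumption \ref{integrability} provides uniform integrability along any weakly convergent maximizing sequence, yielding upper semi-continuity of $\tau\mapsto\int \bar W(\mu)\,d\tau(\mu)$. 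Alternatively, and matching the outline in Sections \ref{sec-discrete}--\ref{sec-lim}, I would invoke the discrete existence result (which rests on real-analytic/finite-signal arguments) and pass to a weak limit along an increasing sequence of finite code spaces; the two approaches coincide and both deliver an optimal $\tau^*$.

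\textbf{Stage 2: Reduction to a concave envelope.} Under the extra hypotheses, $a(\mu)=\int g(\go)\,d\mu(\go)$, so pushing forward by $g$ (which is bi-Lipschitz, hence a measurable bijection onto the convex set $K:=g(\gO)\subset\R^L$) transforms the problem into: maximize $\int_K W(\mathrm{bar}(\nu))\,d\sigma(\nu)$ over $\sigma\in\gD(\gD(K))$ with barycenter $g_*p$. This is precisely the concavification problem of \cite{KamGenz2011}: the value equals $W^{\mathrm{cav}}(\mathrm{bar}(g_*p))$, and any optimal $\sigma$ is supported on the contact set $\{x\in K:\,W(x)=W^{\mathrm{cav}}(x)\}$. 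Duality then associates to each supporting affine functional $\ell(y)=c+\langle v,y\rangle\ge W(y)$ a closed convex face $F_v:=\{y\in K:\ell(y)=W(y)\}$; the optimal posteriors are supported on such faces, and the posteriors with barycenter in $F_v$ have $\nabla W(\mathrm{bar})=v$ by the first-order envelope condition.

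\textbf{Stage 3: Determinism, convexity of pools, monotonicity.} Define, for each $v$ in the image of $\nabla W^{\mathrm{cav}}$ on $K$, the pool $P_v:=F_v\cap K$; this is a convex subset of $K$ by construction. A Borel selector $\pi:K\to\{P_v\}$ may be built using a measurable version of Michael's selection theorem on the convex-valued correspondence $x\mapsto P_{\nabla W^{\mathrm{cav}}(x)}$; setting $\cE(\go)$ equal to the index of the pool containing $g(\go)$ and $\cD$ equal to the (bar)center map yields the deterministic autoencoder $\cA$. The finite-branch hypothesis on $D_aW^{-1}$ is what ensures this is well-defined up to Lebesgue-null sets: if $\nabla W(y_1)=\nabla W(y_2)=v$ for $y_1,y_2$ in distinct pools, the branch structure forces the convex hull of such $y$'s to lie in a single common pool, so no genuine randomization across pools is needed. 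Monotonicity of $x\mapsto D_aW(a(g^{-1}(x)))$ is the cyclical monotonicity of $\partial W^{\mathrm{cav}}$ on $K$ (since $W^{\mathrm{cav}}$ is concave, $-\nabla W^{\mathrm{cav}}$ is monotone, and on each pool $\nabla W=\nabla W^{\mathrm{cav}}$); convex level sets are just the pools $P_v$.

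\textbf{Main obstacle.} The genuine difficulty is ruling out essentially randomized optima and showing that the measurable selection lands in the required convex pools, rather than in some merely connected contact component; this is where the finite-branch hypothesis on $D_aW^{-1}$ is crucial, because without it the contact set of $W$ with $W^{\mathrm{cav}}$ may have positive-measure intersections on which the first-order condition fails to pin down a single posterior, and the pools could cease to be convex. A secondary but nontrivial technicality is ensuring measurability of the selection (Kuratowski--Ryll-Nardzewski applied to the convex-valued face correspondence) and that the identification $a(\mu)=\mathrm{bar}(g_*\mu)$ survives the change of variables globally despite $g$ being only bi-Lipschitz rather than smooth.
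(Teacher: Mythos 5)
Your Stage 1 is roughly in the spirit of the paper's existence argument (weak$^*$ compactness of $\gD(\gD(\gO))$ plus tightness coming from Assumption~\ref{integrability}, together with continuity of $\mu \mapsto a(\mu)$), though the paper carries this out more carefully in Lemma~\ref{lem-approx}, handling non-compact $\gO$ by cutting off on $X_n = \{\psi \le n\}$ and controlling the tail terms.

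Stage 2 has a genuine gap that the rest of the argument cannot recover from. You write that, after pushing forward by $g$, the problem ``is precisely the concavification problem of \cite{KamGenz2011}: the value equals $W^{\mathrm{cav}}(\mathrm{bar}(g_*p))$.'' This is false in the moment-persuasion setting. When the sender's payoff depends only on the posterior mean, Bayes plausibility $\int \nu\,d\sigma(\nu) = g_*p$ forces the induced distribution $\hat\sigma$ of posterior barycenters to be a mean-preserving contraction of $g_*p$, and the value is $\sup\{\int_K W\,d\hat\sigma : \hat\sigma \preceq_{cx} g_*p\}$, which is in general strictly less than $W^{\mathrm{cav}}(\mathrm{bar}(g_*p))$. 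For a concrete counterexample, take $K=[0,1]$, $g_*p$ uniform on $[0.4,0.6]$, $W$ strictly convex: $W^{\mathrm{cav}}$ is affine on $[0,1]$, but the split $\tfrac12\delta_0 + \tfrac12\delta_1$ is not a contraction of $g_*p$, and in fact the optimal autoencoder is the identity with value $\int W\,d(g_*p) < W^{\mathrm{cav}}(\tfrac12)$. Precisely because of this, the paper never uses the concave envelope of $W$; it instead approximates by $K$-finite partitions (Theorems~\ref{mainth1}--\ref{regular-partition}) and, in Proposition~\ref{convexity}, introduces the \emph{different} function $C(x) = W(a(g^{-1}(x))) - D_aW(a(g^{-1}(x)))^\top(a(g^{-1}(x)) - x)$, a pointwise maximum of affine functions along the discrete optimal actions; $C$ is \emph{convex} (not a concave envelope of $W$), and $D_aW(\cA(\go))$ is its subgradient. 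This is what produces \emph{increasing} monotonicity. Your Stage 3 gets the sign backwards: concavity of $W^{\mathrm{cav}}$ would make $\nabla W^{\mathrm{cav}}$ monotone \emph{decreasing}, which contradicts item (2) of Theorem~\ref{main-gen-char}. That sign flip is a symptom that the concavification is the wrong object.

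Stage 3's mechanism for deterministic selection is also disconnected from what the paper proves. The paper does not use Michael's or Kuratowski--Ryll-Nardzewski selection; it takes the piecewise-constant optimal policies $a_K(\go)$ of the discrete problem, observes that the associated convex functions $C_K$ are locally uniformly bounded (via a half-space argument ruling out $C_K(x_*)\to\infty$), extracts a subsequence with $C_K \to C$ and subgradients $q_K = D_aW(a_K) \to q$ a.e., and only then invokes the finite-branch hypothesis on $D_aW^{-1}$ to write $a_K(\go) = \sum_i {\bf 1}_{\gO_{i,K}} G_i(q_K(\go))$ and pass to the a.e.\ limit. In your version, the finite-branch hypothesis is invoked only heuristically (``the branch structure forces the convex hull ... to lie in a single common pool''), but you have not shown this, and in any case the set you are selecting from --- faces of $W^{\mathrm{cav}}$ --- is not the optimal feature manifold $\Xi$ identified by the paper's first-order analysis. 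So even if Stage 2 were patched, Stage 3 would have to be rebuilt from the correct first-order conditions, along the lines of Lemma~\ref{super-G} and Theorem~\ref{cor-moment}.
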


Technical conditions ensuring the existence of a deterministic optimal autoencoder in Theorem \eqref{mainth-limit} are important. Without imposing them, the existence of a deterministic optimal autoencoder cannot be guaranteed, and only probabilistic autoencoders  exist.  Such randomized autoencoders are analogous to Kantorovich relaxations of optimal Monge maps in the optimal transport theory. See, e.g., \cite{mccann2011five} and \cite{kramkov2019optimal}. 

\section{Proof of Theorem \ref{mainth-limit}}\label{sec:proof-pure}

The proof of Theorem \ref{mainth-limit} is structured as follows: First, we prove the result (existence of deterministic optimal autoencoders) for discrete approximations (Theorem \ref{mainth1}). Then, we take the continuous limit in Section \ref{sec:cont-lim} and establish the existence of an optimal autoencoder.

\subsection{Discrete Approximation}\label{sec-discrete}

We call an autoencoder $K$-finite if the support $\tau$  in Definition \ref{dfn1} has cardinality $K.$ An optimal $K$-finite autoencoder is the one attaining the highest   {\it among all $K$-finite autoencoders.} A deterministic $K$-finite autoencoder corresponds to an optimal autoencoder $\cA(\go)$ that only takes $K$ different values $a_1,\cdots,a_K.$ In this case, $\gO_k=\{\go:\ \cA(\go)=a_k\}$ defines a partition of $\gO$. 
  
The following is the main result of this section. 

\begin{theorem}[Optimal $K$-finite autoencoder]\label{mainth1} There always exists an optimal $K$-finite autoencoder which is a partition. 
\end{theorem}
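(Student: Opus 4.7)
The plan is to prove the theorem in three stages: reformulation of the $K$-finite problem as a variational problem over tuples of sub-measures, existence of a (possibly randomized) optimum via a compactness argument, and reduction of that optimum to a partition via a fixed-point argument.

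First, I would identify a $K$-finite autoencoder with a tuple of non-negative sub-measures $(\nu_k)_{k=1}^K$ on $\gO$ satisfying $\sum_{k}\nu_k=p$; the weight is $p_k=\nu_k(\gO)$ and the posterior is $\mu_k=\nu_k/p_k$ when $p_k>0$. Lemma \ref{existence} associates to each $\mu_k$ a unique action $a_k=a(\mu_k)$ satisfying a square-integrability bound, and the objective becomes
\begin{equation*}
J((\nu_k))\ =\ \sum_{k=1}^{K}\int_{\gO}W(a_k,\go)\,d\nu_k(\go).
\end{equation*}
Existence of a (possibly randomized) optimum then follows from compactness: under Assumption \ref{integrability} the sub-level sets of $\psi$ are compact, so $p$ is tight and the feasible set of $K$-tuples of sub-measures with prescribed total $p$ is weakly closed and, by Prokhorov, weakly relatively compact. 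The map $\mu\mapsto a(\mu)$ is continuous in the weak topology on posteriors (uniform monotonicity in Assumption \ref{ac} together with the contraction in the proof of Lemma \ref{existence}), and the growth bounds on $W$ in Assumption \ref{integrability} then give upper semicontinuity of $J$, so $J$ attains a maximum.

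Next, I would reduce this optimum to a partition. Encoding a feasible solution as a conditional probability $\pi(k\mid\go)$ with $\nu_k(A)=\int_A\pi(k\mid\go)\,dp(\go)$, I observe that for any fixed actions $a_1^*,\dots,a_K^*$ the dependence of $J$ on $\pi$ is linear,
\begin{equation*}
J\ =\ \int_{\gO}\sum_{k=1}^{K}\pi(k\mid\go)\,W(a_k^*,\go)\,dp(\go),
\end{equation*}
and is maximized pointwise by assigning each $\go$ to any index in $\arg\max_k W(a_k^*,\go)$. This produces a candidate partition $(\gO_k^*)$, but its induced actions $a_k'=a(p|_{\gO_k^*}/p(\gO_k^*))$ need not equal the $a_k^*$ used to define it. I would therefore set up a fixed-point problem on the compact set of admissible action tuples in $\R^{KL}$: the best-response correspondence sends $(a_k)$ to the tuple of actions induced by a pointwise-optimal partition of $(a_k)$, with convex-valued tie-breaking so that the correspondence is upper-hemicontinuous with non-empty convex compact images. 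Kakutani's theorem then yields a self-consistent partition--action pair.

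The main obstacle is showing that this fixed-point partition achieves the value of the randomized optimum rather than only being a local best-response. At a fixed point $(\gO_k^{**},a_k^{**})$ the partitional value equals $\int_{\gO}\max_k W(a_k^{**},\go)\,dp(\go)$, which dominates the value of any randomized encoder with the same actions; the randomized optimum, however, may a priori use different actions. To close this gap I would exploit the first-order optimality conditions at the randomized optimum, using variational perturbations that shift infinitesimal mass between two buckets on a small set, combined with the envelope theorem applied to the implicit definition $\int G(a_k,\go)\,d\mu_k(\go)=0$ of $a_k$ from Lemma \ref{existence}. These conditions should force the shadow values of active buckets to be equal $p$-almost everywhere, so that the randomized value coincides with the partitional value attained at the fixed point. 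Making this variational analysis rigorous, and in particular controlling the tie sets (whose $p$-measure needs to be handled and presumably is where real analytic tools enter in the companion Theorem \ref{mainth1-analytic}), is the technical core of the argument.
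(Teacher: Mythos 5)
Your approach diverges substantially from the paper's, and the gap you flag at the end is a genuine one that your proposed repair cannot close for general $W$.

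The paper's proof of Theorem~\ref{mainth1} is an approximation argument that leans entirely on Theorem~\ref{mainth1-analytic}. For a real analytic $W$ in generic position (Definition~\ref{main-ass-indep}), the variational first-order conditions at a non-degenerate point of randomization force two functions of $\go$ to coincide on a set of positive measure; Proposition~\ref{zero-go} then propagates this equality to all of $\gO$, and the generic-position hypothesis forces $a(k)=a(k_1)$, a contradiction — so every optimal $K$-finite autoencoder for such $W$ is a partition. For a general smooth $W$, the paper approximates $W$ uniformly on a compact set by a sequence $W_n$ of real analytic utility functions in generic position, obtains a partition $\{\gO_k(n)\}$ from Theorem~\ref{mainth1-analytic} for each $n$, and passes to the limit $\pi_k(n)\to\pi_k^*$, $a_n\to a_*$, concluding that the limit is optimal for $W$ via uniform convergence. (The non-compact case is then peeled off by an exhaustion.)

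The trouble with your plan is precisely in the step you acknowledge as the technical core. For arbitrary smooth $W$ (without generic position), the stationarity condition you extract from mass-shifting perturbations does \emph{not} force the shadow values of the two buckets to be incompatible: for a non-generic $W$ the equality in~\eqref{w-expan2} can hold identically on $\gO$ with $a(k)\ne a(k_1)$, so randomization can be exactly optimal. The theorem only asserts that \emph{some} optimal $K$-finite autoencoder is a partition, and that is precisely what the approximation argument is designed to deliver in the degenerate cases where your variational argument cannot distinguish. Without introducing the analytic approximants in generic position, the "first-order conditions at the randomized optimum" cannot rule out ties on fat sets; real analyticity is not merely a tie-set bookkeeping device as you suggest — it is the mechanism (via Proposition~\ref{zero-go}) that converts a local coincidence into a global one and hence into $a(k)=a(k_1)$.

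Separately, your Kakutani step is not obviously well-posed in the paper's generality: the map $\mu\mapsto a(\mu)$ defined by $\int G(a(\mu),\go)\,d\mu=0$ is nonlinear in $\mu$ for general $G$ (it is linear only in the special case $G(a,\go)=a-g(\go)$), so convexifying the tie-breaking does not by itself give convex images for the best-response correspondence. And even granting a fixed point, a self-consistent (pointwise best-response) partition need not attain the global supremum; the paper never needs this because Theorem~\ref{mainth1-analytic} produces a \emph{global} optimum that is automatically a partition. If you want to pursue your route, the clean fix is exactly the paper's: restrict to compact $\gO$, approximate $W$ by analytic $W_n$ in generic position, invoke Theorem~\ref{mainth1-analytic} for each $n$ to get optimal partitions, and take a limit — which also sidesteps both the Kakutani convexity issue and the variational gap.
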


Once the system has processed information and produced a signal $k\in \{1,\cdots,K\}$, it updates the probability distribution of $\go$ using the Bayes rule. To do so, the system just needs to know $\pi_k(\go)=\Pr(k|\go)$,  the probability of observing signal $k$ given that the true state is $\go.$ The distribution $\tau$ from Definition \ref{dfn1} is supported on $\{\mu_1,\cdots,\mu_K\},$ where, by the Bayes rule, 
\[
\mu_k=\Pr(\go|k)\ =\ \frac{\pi_k(\go)p(\go)}{\int_\gO\pi_k(\go)p(\go)d\go},
\]
and the probability of $\mu_k$ is $\tau(\mu_k)=\int_\gO\pi_k(\go)p(\go)d\go.$ 
A $K$-finite autoencoder can therefore be equivalently characterized by a set of measurable functions $\pi_k(\go),\ k=\{1,\cdots,K\}$ satisfying conditions $\pi_k(\go)\in[0,1]$ and $\sum_k\pi_k(\go)\ =\ 1$ with probability one. 

We use $\bar\pi\ =\ (\pi_k(\go))_{k=1}^K\in [0,1]^K$ to denote the random $K$-dimensional vector representing the autoencoder. As we show below, a key implication of this setting is that, with a continuous state space and under appropriate regularity conditions, randomization is never optimal, and hence optimal autoencoder is always given by a {\it partition}. While this result might seem intuitive, its proof is non-trivial. 
To state the main result of this section ---the optimality of partitions---we need also the following definition.

\begin{definition}
We say that functions $\{f_1(\go),\cdots,f_{L_1}(\go)\},\ \go\in\gO,$ are linearly independent modulo $\{g_1(\go),\cdots,g_{L_2}(\go)\}$ if there exist no real vectors $h\in \R^{L_1},\ k\in \R^{L_2}$ with $\|h\|\not=0,$ such that
\[
\sum_i h_i f_i(\go)\ =\ \sum_j k_j g_j(\go)\qquad for\ all\ \go\in\go\,.
\]
In particular, if $L_1=1,$ then $f_1(\go)$ is linearly independent modulo $\{g_1(\go),\cdots,g_{L_2}(\go)\}$ if $f_1(\go)$ cannot be expressed as a linear combination of $\{g_1(\go),\cdots,g_{L_2}(\go)\}.$
\end{definition}

\section{Under Analyticity and Generic Position, All Optimal Policies Are Pure}

We need the following technical condition.

\begin{definition}\label{main-ass-indep} We say that $W, G$ are in a generic position if for any fixed $a,\tilde a\in \cR^N,\ a\not=\tilde a$, the function $W(a,\go)-W(\tilde a,\go)$ is linearly independent modulo $\left\{\{G_n(a,\go)\}_{n=1}^N,\{G_n(\tilde a,\go)\}_{n=1}^N\right\}$;
\end{definition}

$W, G$ are in generic position for generic functions $W$ and $G$.\footnote{The set of $W,G$ that are not in generic position is nowhere dense in the space of continuous functions.} We will also need a key property of real analytic functions\footnote{A function is real analytic if it can be represented by a convergent power series in the neighborhood of any point in its domain.} that we use in our analysis (see, e.g., \cite*{HugonnierMalamudTrubowitz2012}).

\begin{proposition}\label{zero-go} If a real analytic function $f(\go)$ is zero on a set of positive Lebesgue measure, then $f$ is identically zero. Hence, if real analytic functions $\{f_1(\go),\cdots,f_{L_1}(\go)\}$ are linearly dependent modulo $\{g_1(\go),\cdots,g_{L_2}(\go)\}$ on some subset $I\subset\gO$ of positive Lebesgue measure, then this linear dependence also holds on the whole $\gO$ except, possibly, a set of Lebesgue measure zero.
\end{proposition}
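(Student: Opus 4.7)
The proposition has two parts: first, that the zero set of a nontrivial real analytic function on a connected open domain $\gO\subset\R^L$ has Lebesgue measure zero; second, a corollary about linear dependence of analytic functions modulo auxiliary functions $g_j$ (which, for the conclusion to be meaningful, we take to also be real analytic, as is the case in the intended applications). The plan is induction on the ambient dimension $L$, using the one-dimensional identity theorem for real analytic functions as the base case and Fubini's theorem to drive the inductive step.

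For $L=1$, suppose a real analytic $f$ on $\gO$ vanishes on $E\subset\gO$ with $|E|>0$. Any positive-measure subset of $\R$ has a Lebesgue density point $x_0\in\gO$, which is in particular an accumulation point of $E$. Expand $f(x)=\sum_{n\ge 0} a_n(x-x_0)^n$ convergently near $x_0$; if the first nonzero coefficient has index $n_0$, then $f(x)=(x-x_0)^{n_0}h(x)$ with $h$ analytic and $h(x_0)\neq 0$, making $x_0$ an isolated zero and contradicting accumulation. Hence every $a_n=0$, so $f\equiv 0$ on a neighborhood of $x_0$, and the classical identity theorem propagates this to the entire connected component of $\gO$ containing $x_0$.

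For the inductive step, write $x=(x',x_L)$ and apply Fubini's theorem to $\{f=0\}\cap\gO$ to produce a set $A\subset\R^{L-1}$ of positive $(L-1)$-dimensional measure on which the slice $f(x',\cdot)$ vanishes on a set of positive one-dimensional measure. The base case then implies each such slice is identically zero, so $f(x',c)=0$ for every $x'\in A$ and every admissible $c$. Fixing any such $c$, the analytic function $f(\cdot,c)$ vanishes on $A$ with $|A|>0$, and the induction hypothesis forces $f(\cdot,c)\equiv 0$; since $c$ was arbitrary, $f\equiv 0$ throughout $\gO$. The second claim then follows in one line: a dependence $\sum_i h_if_i=\sum_j k_jg_j$ on a positive-measure $I$ with $(h_i)\neq 0$ makes the analytic combination $\sum_i h_if_i-\sum_j k_jg_j$ vanish on a positive-measure set, hence identically on $\gO$, so the same coefficients witness the dependence globally.

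The main obstacle I anticipate is purely bookkeeping around connected components of $\gO$ and its boundary: both the induction and the identity theorem deliver the conclusion one component at a time, which is precisely what the phrase \emph{``except possibly a set of Lebesgue measure zero''} in the statement is designed to absorb. Once that accounting is handled, the proof reduces to the 1D identity theorem plus one application of Fubini, and no new ideas are needed beyond these classical tools.
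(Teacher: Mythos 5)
The paper does not actually prove Proposition \ref{zero-go}; it cites \cite*{HugonnierMalamudTrubowitz2012} and uses the statement as a black box, so there is no in-paper argument for you to match. Your strategy (1D identity theorem as the base case, Fubini to drive induction on dimension) is the classical one and is the right one to try, and the $L=1$ case you wrote out is correct. The second-part reduction to the first is also fine, provided (as you rightly flag) the $g_j$ are real analytic.

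There is, however, a genuine gap in your inductive step, and it is not quite the same issue as the component-by-component bookkeeping you anticipate at the end. After producing $A\subset\R^{L-1}$ of positive measure on which each slice $f(x',\cdot)$ vanishes identically, you say ``Fixing any such $c$, the analytic function $f(\cdot,c)$ vanishes on $A$.'' That requires $c$ to lie in the slice domain $\gO_{x'}=\{t:(x',t)\in\gO\}$ for \emph{every} $x'\in A$ simultaneously. For a general open (even convex, even connected) $\gO$, the sets $\gO_{x'}$ vary with $x'$, and $\bigcap_{x'\in A}\gO_{x'}$ can be empty or far too small to cover the last-coordinate range of $\gO$; the set of $x'\in A$ for which $c$ is admissible may well be null. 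The clause ``since $c$ was arbitrary, $f\equiv 0$ throughout $\gO$'' therefore does not follow from what was established. The standard repair is to localize first: pick a Lebesgue density point $p$ of $\{f=0\}$, work in a small box (or ball) $B\subset\gO$ centered at $p$, where every slice is an interval containing the center's last coordinate and the admissibility sets coincide; your Fubini argument then gives $f\equiv 0$ on $B$. Only after that do you invoke a propagation argument --- e.g.\ that $\{x\in\gO: f\equiv 0 \text{ near } x\}$ is nonempty, open, and (by analyticity) closed in $\gO$, hence all of a connected $\gO$. Alternatively, instead of freezing $c$, one can apply the induction hypothesis to the Taylor-coefficient functions $x'\mapsto\partial_{x_L}^k f(x',c_0)$ for a fixed center $c_0$, which sidesteps the admissibility mismatch entirely. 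Either way, the missing localization step is a real piece of the argument, not merely accounting for connected components of $\gO$.
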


Using Proposition \ref{zero-go}, it is possible to prove the main result of this section:

\begin{theorem}[Optimal finite autoencoder]\label{mainth1-analytic} There always exists an optimal $K$-finite autoencoder $\bar\pi^*$ which is a partition. Furthermore, if $W,\ G$ are real analytic in $\go$ for each $a$ and are in generic position, then any $K$-finite optimal autoencoder is a partition.
\end{theorem}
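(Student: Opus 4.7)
The existence half of the theorem is already delivered by Theorem \ref{mainth1}, so the real content to argue is that under analyticity plus the generic-position hypothesis, \emph{every} $K$-finite optimum must be a (measurable) partition. My plan is a first-order-variations argument, followed by a rigidity step using real analyticity and the generic-position definition to obtain a contradiction.

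\textbf{Step 1: Set up the variational problem.} Encode a $K$-finite autoencoder by measurable weights $\bar\pi=(\pi_1,\dots,\pi_K)$ with $\pi_k(\go)\ge 0$ and $\sum_k\pi_k(\go)=1$, associated signal probabilities $\tau_k=\int\pi_k(\go)dp(\go)$, and posterior-determined actions $a_k$ given implicitly by
\begin{equation}
\int_\gO \pi_k(\go)\,G(a_k,\go)\,dp(\go)\ =\ 0,\qquad k=1,\dots,K.
\end{equation}
By Lemma \ref{existence} and Assumption \ref{ac}, $a_k$ is uniquely determined and smoothly depends on $\pi_k$, with the implicit-function-theorem derivative
\begin{equation}
\frac{\delta a_k}{\delta\pi_k(\go_0)}\ =\ -\,\bigl[\textstyle\int\pi_k(\go)D_aG(a_k,\go)dp(\go)\bigr]^{-1}\,G(a_k,\go_0)\,p(\go_0).
\end{equation}

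\textbf{Step 2: First-order conditions.} Consider a mass-swap perturbation supported on a measurable set $I\subset\gO$: replace $\pi_k$ by $\pi_k+\eps\delta$ and $\pi_{k'}$ by $\pi_{k'}-\eps\delta$ for an arbitrary bounded $\delta$ supported on the set $I_{k,k'}:=\{\go: \pi_k(\go),\pi_{k'}(\go)>0\}$. Differentiating the objective $\sum_k\int\pi_k(\go)W(a_k,\go)dp(\go)$ and using the formula above together with the defining equation for $a_k$, the second-order (in $\delta$) term involving $\delta a_k$ assembles into a single linear functional of $G(a_k,\cdot)$. Define the multipliers
\begin{equation}
\eta_k^\top\ :=\ \Bigl(\textstyle\int\pi_k(\go)D_aW(a_k,\go)\,dp(\go)\Bigr)\Bigl(\textstyle\int\pi_k(\go)D_aG(a_k,\go)\,dp(\go)\Bigr)^{-1}.
\end{equation}
Optimality then forces the pointwise Euler--Lagrange identity
\begin{equation}
W(a_k,\go)-\eta_k^\top G(a_k,\go)\ =\ W(a_{k'},\go)-\eta_{k'}^\top G(a_{k'},\go)\qquad\text{for a.e. }\go\in I_{k,k'}.
\end{equation}

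\textbf{Step 3: Rigidity via analyticity.} Suppose for contradiction that $\bar\pi^*$ is not a partition. Then there exist indices $k\neq k'$ such that $I_{k,k'}$ has positive Lebesgue measure. By the joint analyticity of $W$ and $G$ in $\go$ (for the fixed actions $a_k,a_{k'}$), the function $\go\mapsto W(a_k,\go)-W(a_{k'},\go)-\eta_k^\top G(a_k,\go)+\eta_{k'}^\top G(a_{k'},\go)$ is real analytic and vanishes on $I_{k,k'}$. Proposition \ref{zero-go} then promotes this to the global identity
\begin{equation}
W(a_k,\go)-W(a_{k'},\go)\ =\ \eta_k^\top G(a_k,\go)-\eta_{k'}^\top G(a_{k'},\go)\qquad\text{for all }\go\in\gO\ \text{a.e.}
\end{equation}
Because $a_k\neq a_{k'}$ (otherwise the two signals collapse and we may merge them, contradicting $I_{k,k'}$ carrying genuine randomization), this identity exhibits $W(a_k,\go)-W(a_{k'},\go)$ as a linear combination of $\{G_n(a_k,\go)\}\cup\{G_n(a_{k'},\go)\}$ with coefficients $(\eta_k,-\eta_{k'})$, directly contradicting Definition \ref{main-ass-indep}.

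\textbf{Main obstacle and remarks.} The delicate step is Step 2: one must verify that when differentiating the objective the induced variations of the endogenous actions $a_k,a_{k'}$ assemble cleanly into the single multiplier expression above rather than into a less tractable nonlocal expression. This works because, at fixed $\pi_k$, the term $\int\pi_k D_aW(a_k,\go)dp\cdot\delta a_k$ factors through the scalar $G(a_k,\go_0)p(\go_0)$ appearing in $\delta a_k/\delta\pi_k(\go_0)$, so the entire perturbation is pointwise in $\go_0$ and the Kuhn--Tucker condition reduces to the displayed equality on $I_{k,k'}$. Once that is in hand, the jump to a global functional identity is an immediate application of Proposition \ref{zero-go}, and the generic-position hypothesis delivers the contradiction. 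The case $a_k=a_{k'}$ is harmless: merging the two signals does not change the distribution on actions or the objective, so we may assume all surviving actions are distinct without loss of generality.
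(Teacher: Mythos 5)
Your proof of the second claim — that under analyticity and generic position every $K$-finite optimum is a partition — matches the paper's argument step for step: a mass-swap perturbation $\pi_k\to\pi_k+\eps\delta$, $\pi_{k'}\to\pi_{k'}-\eps\delta$ on $\{\pi_k,\pi_{k'}\in(0,1)\}$, the implicit-function-theorem computation of $\delta a_k/\delta\pi_k$, the assembly of the first-order condition into the Euler–Lagrange identity with multipliers $\eta_k^\top=\bar D_aW(k)\,\bar D_aG(k)^{-1}$ (these are exactly the paper's $x_k$), promotion to a global identity via Proposition \ref{zero-go}, and finally contradiction with the generic-position hypothesis of Definition \ref{main-ass-indep}. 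The paper writes out the Taylor expansion in full (equations \eqref{w-expan1}–\eqref{w-expan2}) where you sketch it and defer to the ``main obstacle'' remark, but there is no genuine gap there — the factorization through $G(a_k,\go_0)p(\go_0)$ you point out is exactly what makes the computation close.

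One logical wrinkle worth flagging: you dispatch the first sentence of the theorem (existence of an optimal partition) by citing Theorem \ref{mainth1}, but in the paper the dependency runs the other way — the proof of Theorem \ref{mainth1} is an analytic-approximation argument that \emph{invokes} Theorem \ref{mainth1-analytic}. What the paper's proof of \ref{mainth1-analytic} actually establishes directly is the existence of \emph{some} optimal $K$-finite autoencoder (weak-$L_2$ compactness of $\{\pi_k\in[0,1],\sum_k\pi_k=1\}$ together with continuity of the objective), and the first sentence then follows from that existence combined with the rigidity you prove in Steps 2–3 (under analyticity and generic position, any optimum is a partition, so the optimum that exists is one). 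You should supply that compactness-existence step rather than appeal to \ref{mainth1}, or the argument becomes circular. With that fix, the proposal is essentially the paper's proof.
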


\begin{proof}[Proof of Theorem \ref{mainth1-analytic}] 
The fact that  is bounded and depends smoothly on the $\bar\pi$ follows by standard arguments. Existence of an optimal information autoencoder then follows trivially from compactness. Indeed, since $\pi_k(\go)\in [0,1]$, the are square integrable and, hence, compact in the weak topology of $L_2(p).$ The identity $\sum_k \pi_k=1$ is trivially preserved in the limit. Continuity of utility  in $\pi_k$ follows directly from the assumed integrability and regularity, hence the existence of an optimal autoencoder. 

By \eqref{sys-3}, 
\begin{equation}
\mathbb{E}_{\mu_k}[G(a(k),\go)|k]\ =\ 0\,,
\end{equation}
where 
\[
\mu_k(\go)\ =\ \frac{\pi_k(\go)p(\go)}{\int \pi_k(\go)p(\go) d\go}\,, 
\]
To compute the Frechet differentials of $a(k),$ we take a small perturbation $\eta(\go)$ of $\pi_k(\go)$. By the regularity assumptions and the Implicit Function Theorem,
\[
a(k,\eps)\ =\ a(k)+\eps a^{(1)}(k) +\ O(\eps^2)\,, 
\]
for some $a^{(1)}(k)$. We have 
\begin{equation}
\begin{aligned}
&0\ =\ \int (\pi_k(\go)+\eps \eta(\go))p(\go) G(a(k,\eps),\ \go)d\go\\
&=\ \int (\pi_k(\go)+\eps \eta(\go))p(\go) G(a(k)+\eps a^{(1)}(k))d\go\\
&\approx\ \Bigg(
\int \pi_k(\go) p(\go)\Bigg(G(a(k),\ \go)+G_a (\eps a^{(1)}(k))
\Bigg)d\go\\
&+\eps \int \eta(\go)p(\go) \Bigg(G(a(k))+G_a\eps a^{(1)}(k)
\Bigg)d\go
\Bigg)\ +\ O(\eps^2)\\
&=\ \eps
\Bigg(
\int \pi_k(\go) p(\go)G_a a^{(1)}(k)d\go+\int \eta(\go)p(\go) G(a(k))d\go\Bigg)\ +\ O(\eps^2)
\end{aligned}
\end{equation}
As a result, we get
\begin{equation}\label{x1}
\begin{aligned}
&a^{(1)}(k)\ =\ -\bar G_a(k)^{-1}\,\int \eta(\go)p(\go) G(a(k),\go)d\go,\ \bar G_a(k)\ =\ \int \pi_k(\go) p(\go)G_a d\go\,,
\end{aligned}
\end{equation}
The  function is given by 
\begin{equation}
\begin{aligned}
&\bar W(\pi)\ =\ \mathbb{E}[W(a,\go)]\ =\ \sum_k \int_\gO W(a(k),\go)\pi_k(\go)p(\go)d\go\,.
\end{aligned}
\end{equation}
Suppose that the optimal information structure is not a partition. Then, there exists a subset $I\subset\gO$ of positive $p$-measure and an index $k$ such that $\pi_k(\go)\in (0,1)$ for $p$-almost all $\go\in I.$ Since $\sum_i\pi_i(\go)=1$ and $\pi_i(\go)\in[0,1],$ there must be an index $k_1\not=k$ and a subset $I_1\subset I$ such that $\pi_{k_1}(\go)\in (0,1)$ for $p$-almost all $\go\in I_1.$ Consider a small perturbation $\{\tilde\pi(\eps)\}_i$ of the information autoencoder, keeping $\pi_i,\ i\not=k,k_1$ fixed and changing $\pi_k(\go)\to \pi_k(\go)+\eps \eta(\go),\ \pi_{k_1}(\go)\to\pi_{k_1}(\go)-\eps(\go)$ where $\eta(\go)$ in an arbitrary bounded function with $\eta(\go)=0$ for all $\go\not\in I_1.$ Define $\eta_k(\go)=\eta(\go),\ \eta_{k_1}(\go)=-\eta(\go),$ and $\eta_i(\go)=0$ for all $i\not=k,k_1.$ A second-order Taylor expansion in $\eps$ gives
\begin{equation}\label{w-expan1}
\begin{aligned}
&\sum_{i} \int_\gO W(a(i,\eps),\go) (\pi_i(\go)+\eps \eta_i(\go))p(\go)d\go\\
&\approx\  \int_\gO \Bigg(W(a(i),\go)+W_a(a(i),\go)(\eps a^{(1)}(i))\Bigg) (\pi_i(\go)+\eps \eta_i(\go))p(\go)d\go\ +\ O(\eps^2)\\
&=\ \bar W(\pi)\ +\ \eps\sum_i \Bigg(
\int_\gO (W(a(i),\go) \eta_i(\go)+ W_a(a(i),\go)a^{(1)}(i)\pi_i(\go))p(\go)d\go
\Bigg)\ +\ O(\eps^2)
\end{aligned}
\end{equation}
Since, by assumption, $\{\pi_i\}$ is an optimal information autoencoder, it has to be that the first order term in \eqref{w-expan1} is zero, while the second-order term is always non-positive. We can rewrite the first order term as
\begin{equation}\label{w-expan}
\begin{aligned}
&\sum_i \Bigg(
\int_\gO (W(a(i),\go) \eta_i(\go)+ W_a(a(i),\go)a^{(1)}(i)\pi_i(\go))p(\go)d\go
\Bigg)\\
&=\ \sum_i \int_\gO \Bigg(W(a(i),\go)\\
& -\  \Big(\int W_a(a(i),\go_1)\pi_i(\go_1)p(\go_1)d\go_1
\Big)\bar G_a(i)^{-1}\, G(a(i),\go)
\Bigg)
\eta_i(\go)p(\go)d\go
\end{aligned}
\end{equation}
and hence it is zero for all considered perturbations if and only if
\begin{equation}\label{w-expan2}
\begin{aligned}
&W(a(k),\go)\ -\ \Big(\int W_a(a(k), \go_1)\pi_k(\go_1)p(\go_1)d\go_1
\Big)\bar G_a(k)^{-1}\, G(a(k),\go)\\
&=\ W(a(k_1),\go)\ -\ \Big(\int W_a(a(k_1),\go)\pi_{k_1}(\go_1)p(\go_1)d\go_1
\Big)\bar G_a(k_1)^{-1}\, G(a(k_1),\go)
\end{aligned}
\end{equation}
Lebesgue-almost surely for $\go\in I_1.$ By Proposition \ref{zero-go}, \eqref{w-expan2} also holds for all $\go\in\gO.$ Hence, by Assumption \ref{main-ass-indep}, $a(k)=a(k_1),$ which contradicts our assumption that all $a(k)$ are different. 
\end{proof}

\begin{proof}[Proof of Theorem \ref{mainth1}] Suppose first that $\gO$ is compact. 
	Let now $W_n(a,\go)$ be a sequence of real analytic utility functions in generic positions, uniformly converging to $W(a,\go).$ Let $\{\gO_k(n)\}_{k=1}^K$ be the respective partitions from Theorem \ref{mainth1-analytic}, and $\pi_k(n)={\bf 1}_{\gO_k(n)}.$ Passing to a subsequence, we $\pi_k(n)\to\pi_k^*$ for each $k$ when $n\to\infty.$ Passing to a subsequence once again, we may assume that $a_n\to a_*.$ Now, for any $K$-finite autoencoder $\{\tilde\pi_k\},$ 
	\[
	W(\{\tilde\pi_k\})\ =\ \lim_{n\to\infty} W_n(\{\tilde\pi_k\})\ \le\ \lim_{n\to\infty} W_n(\{\pi_k(n)\})\ =\ W(\{\pi_k^*\})
	\]
	where the last result follows from uniform continuity of $W_n$, uniform convergence, and compactness of $\gO.$  
	
	If $\gO$ is not compact, the proofs can be trivially adjusted by taking a sequence of compact subsets expanding to $\gO.$
\end{proof}

\subsection{The Structure of Optimal Partitions}

The goal of this section is to provide a general characterization of an optimal partition in Theorem \ref{mainth1}.

We use $D_aG(a,\go)\in \R^{M\times M}$ to denote the Jacobian of the map $G$, and, similarly, $D_aW(a,\go)\in \R^{1\times M}$ the gradient of the utility  function $W(a,\go)$ with respect to $a.$ For any vectors $x_k\ \in\ \R^{M},\ k=1,\cdots,K$ and actions $\{a(k)\}_{k=1}^K,$ let us define the partition
\begin{equation}\label{partitions1}
	\begin{aligned}
		\gO_k^*(\{x_\ell\}_{\ell=1}^K,\{a_\ell\}_{\ell=1}^K)\ &=\
		\Bigg\{
		\go\ \in\ \gO\ :\ W(a(k),\ \go)-x_k^\top G(a(k),\ \go)\\
		& =\ \max_{1\le l\le K} \left(W(a(l),\ \go)\ -\ x_l^\top G(a(l),\ \go)\right)
		\Bigg\}
	\end{aligned}
\end{equation}
Equation \eqref{partitions1} is basically the first-order condition for the optimization problem. 

\begin{theorem}\label{regular-partition} Any optimal partition in Theorem \ref{mainth1} satisfies the following conditions:
	\begin{itemize}
		\item local optimality holds: $\gO_k\ =\ \gO_k^*(\{x_\ell\}_{\ell=1}^K,\{a_\ell\}_{\ell=1}^K)$ with $x_k^\top\ =\ \bar D_aW(k)(\bar D_aG(k))^{-1}\,,$
		where we have defined for each $k=1,\cdots,K\,$
		\begin{equation}
			\begin{aligned}
				&\bar D_aW(k)\ =\ \int_{\gO_k} D_aW(a(k),\go)p(\go)d\go\,,\ \bar D_aG(k)\ =\ \int_{\gO_k} D_{a}G(a(k),\go)p(\go)d\go
			\end{aligned}
		\end{equation}
		\item the actions $\{a(k)\}_{k=1}^K$ satisfy the fixed point system
		\begin{equation}\label{gak1}
			\int_{\gO_k} G(a(k),\go)p(\go)d\go\ =\ 0,\ k=1,\,\cdots,\,K\,.
		\end{equation}
		\item the boundaries of $\gO_k$ are a subset of the variety\footnote{This variety is real analytic when so are $W$ and $G.$ A real analytic variety in $\R^L$ is a subset of $\R^L$ defined by a set of identities $f_i(\go)=0,\ i=1,\cdots,I$ where all functions $f_i$ are real analytic. If at least one of functions $f_i(\go)$ is non-zero, then a real analytic variety is always a union of smooth manifolds and hence has a Lebesgue measure of zero. When $W,G$ are real analytic and are in generic position, the variety $\left\{\go\in \R^L\ :\ W(a(k),\ \go)\ -\ x_k^\top G(a(k),\ \go)\ =\ W(a(l),\ \go)\ -\  x_l^\top G(a(l),\ \go)\right\}$ has a Lebesgue measure of zero for each $k\not=l.$}
		\begin{equation}\label{indiff}
			\cup_{k\not=l}\left\{\go\in \R^L:W(a(k),\go)-x_k^\top G(a(k),\go)=W(a(l),\go)-x_l^\top G(a(l),\go)\right\}\,.
		\end{equation}
	\end{itemize}
\end{theorem}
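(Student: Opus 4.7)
My plan is to derive the three conclusions in succession by combining the fixed-point characterization of $a(k)$ from Lemma \ref{existence} with a one-sided version of the perturbation calculation already worked out in the proof of Theorem \ref{mainth1-analytic}. Item (2) is immediate: when $\pi_k = {\bf 1}_{\gO_k}$, the posterior given signal $k$ is $\mu_k = p|_{\gO_k}/p(\gO_k)$, so plugging this into \eqref{sys-3} and multiplying through by $p(\gO_k)$ yields \eqref{gak1}. The matrix $\bar D_aG(k) = \int_{\gO_k} D_aG(a(k),\go)p(\go)d\go$ is invertible because $-D_aG$ is uniformly positive definite by Assumption \ref{ac}, and therefore so is its integral average against the positive measure $p|_{\gO_k}$; in particular the vector $x_k = \bar D_aW(k)\bar D_aG(k)^{-1}$ appearing in item (1) is well defined.

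The main work is item (1), which I will establish by reusing the first-order expansion \eqref{w-expan1}--\eqref{w-expan} but now applied to one-sided admissible perturbations of the partition. Fix indices $k \neq l$ and a bounded nonnegative function $\eta$ supported in $\gO_l$. For $\eps \geq 0$ small enough, the perturbation $\pi_k \to \pi_k + \eps\eta$, $\pi_l \to \pi_l - \eps\eta$ (leaving all other $\pi_i$ unchanged) remains in $[0,1]^K$ and preserves the partition-of-unity constraint, so optimality of $\{\pi_i\}$ forces the right-sided directional derivative at $\eps = 0^+$ to be nonpositive. Substituting \eqref{x1} into \eqref{w-expan} yields
\[
\int_{\gO_l} \Big[\big(W(a(k),\go) - x_k^\top G(a(k),\go)\big) - \big(W(a(l),\go) - x_l^\top G(a(l),\go)\big)\Big]\,\eta(\go)\,p(\go)\,d\go \leq 0
\]
for every admissible $\eta$. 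Varying $\eta$ over indicator bumps of shrinking neighborhoods and applying the Lebesgue differentiation theorem (with local $L^1$ control supplied by Assumption \ref{integrability}) gives the pointwise inequality
\[
W(a(l),\go) - x_l^\top G(a(l),\go) \geq W(a(k),\go) - x_k^\top G(a(k),\go) \qquad \text{for } p\text{-a.e. } \go \in \gO_l, \text{ all } k.
\]
Thus $\gO_l \subset \gO_l^*(\{x_\ell\},\{a_\ell\})$ up to a null set. Since the sets $\gO_l^*$ are defined as the argmax of a finite family and therefore cover $\gO$, while $\{\gO_l\}$ is itself a partition of $\gO$, equality must hold a.e., proving item (1).

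Item (3) is then a direct consequence of (1): any boundary point $\go$ of $\gO_k$ is a limit of points lying in some other cell $\gO_l$, and passing to the limit in the characterization \eqref{partitions1} using continuity of $W$ and $G$ in $\go$ shows that $W(a(k),\go) - x_k^\top G(a(k),\go) = W(a(l),\go) - x_l^\top G(a(l),\go)$, placing $\go$ in the variety \eqref{indiff}. The main technical obstacle in the whole argument is the passage from the weak first-order inequality to the pointwise comparison, because the linearized coefficient $x_k$ depends nonlocally on $\gO_k$ through the integrals defining $\bar D_aW(k)$ and $\bar D_aG(k)$; this is why I must first fix $x_k$ using item (2) (so that the coefficients appearing in \eqref{x1} are frozen during the perturbation) and then invoke Lebesgue differentiation, with all the integrability issues near inter-cell boundaries absorbed by Assumption \ref{integrability}. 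Once this step is in hand, the rest of the proof is bookkeeping on the partition structure.
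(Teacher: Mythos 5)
Your proof is correct and follows essentially the same route as the paper's: item (2) is read off from the Bayes-posterior form of the fixed-point condition \eqref{sys-3}; item (1) is obtained by a first-order perturbation that transfers mass between two cells of the partition (you phrase it as $\pi_k\to\pi_k+\eps\eta,\ \pi_l\to\pi_l-\eps\eta$ with $\eta\ge 0$ supported in $\gO_l$, whereas the paper moves a small set $\cI$ from one cell to the other — the same variation in different notation), with the Lagrange-multiplier vectors $x_k=\bar D_aW(k)\bar D_aG(k)^{-1}$ emerging from the chain rule through $\Delta a(k)$ exactly as in the displayed computation \eqref{x1}; and item (3), which the paper does not spell out, follows as you say by continuity of $W,G$ in $\go$ at a two-sided accumulation point of the partition. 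Two small remarks on wording rather than substance: $x_k$ is not ``fixed by item (2)'' — item (2) fixes $a(k)$, and $x_k$ is the coefficient that the first-order expansion produces, its first-order variation being irrelevant because it multiplies an already $O(\eps)$ quantity; and the final step upgrading the inclusion $\gO_l\subseteq\gO_l^*$ (a.e.) to the stated equality needs the tie set \eqref{indiff} to have measure zero — this is the role of the genericity/analyticity discussion in the footnote, which neither you nor the paper's proof invokes explicitly, but it is the intended justification.
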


\begin{proof}[Proof of Theorem \ref{regular-partition}]
	Suppose a partition $\go\ =\ \cup_k \gO_k$ is optimal. By regularity, equilibrium actions satisfy the first order conditions
	\[
	\int_{\gO_k} G(a(k),\ \go)p(\go)d\go\ =\ 0\,.
	\]
	Consider a small perturbation, whereby we move a small mass on a set $\cI\subset \gO_k$ to $\gO_l.$ Then, the marginal change in $a_n(k)$ can be determined from
	\begin{equation}
		\begin{aligned}
			&0\ =\ \int_{\gO_k} G(a(k),\ \go)p(\go)d\go\ -\ \int_{\gO_k\setminus\cI} G(a(k,\cI),\ \go)p(\go)d\go\\
			&\approx\ -\int_{\gO_k} D_aG(a(k),\ \go)\Delta a(k)\ p(\go)d\go\ + \int_{\cI} G(a(k),\ \go)p(\go)d\go\,,
		\end{aligned}
	\end{equation}
	implying that the first order change in $a$ is given by
	\[
	\Delta a(k)\ \approx\ (\bar D_aG(k))^{-1}\int_{\cI} G(a(k),\ \go)p(\go)d\go\,.
	\]
	Thus, the change in utility  is\footnote{Note that $D_aW$ is a horizontal (row) vector.}
	\begin{equation}
		\begin{aligned}
			&\Delta W\ =\ \int_{\gO_k}W(a(k),\ \go)p(\go)d\go\ -\ \int_{\gO_k\setminus \cI}W(a(k,\cI),\ \go)p(\go)d\go\\
			&+\int_{\gO_l}W(a(l),\ \go)p(\go)d\go\ -\ \int_{\gO_l\cup \cI}W(a(l,\cI),\ \go)p(\go)d\go\\
			&\approx\ -\int_{\gO_k}D_aW(a(k),\ \go)\Delta a(k)p(\go)d\go+\int_{\cI}W(a(k),\ \go)p(\go)d\go\\
			&-\int_{\gO_l}D_aW(a(l),\ \go)\Delta a(l)p(\go)d\go-\int_{\cI}W(a(l),\ \go)p(\go)d\go\\
			&=\  -\bar D_aW(k)(\bar D_aG(k))^{-1}\int_{\cI} G(a(k),\ \go)p(\go)d\go+\int_{\cI}W(a(k),\ \go)p(\go)d\go\\
			&+\bar D_aW(l)(\bar D_aG(l))^{-1}\int_{\cI} G(a(l),\ \go)p(\go)d\go-\int_{\cI}W(a(l),\ \go)p(\go)d\go\,.
		\end{aligned}
	\end{equation}
	This expression has to be non-negative for any $\cI$ of positive Lebesgue measure. Thus,
	\begin{equation}
		\begin{aligned}
			& -\bar D_aW(k)(\bar D_aG(k))^{-1}G(a(k),\ \go)+W(a(k),\ \go)\\
			&+\bar D_aW(l)(\bar D_aG(l))^{-1} G(a(l),\ \go)\ -\ W(a(l),\ \go)\ \ge\ 0
		\end{aligned}
	\end{equation}
	for Lebesgue almost any $\go\in\gO_k.$
\end{proof}

As we explain above, the problem of finding the optimal autoencoder is equivalent to the problem of Bayesian persuasion and optimal information design (see \cite{KamGenz2011}, \cite{kamenica2019bayesian}, \cite{kamenica2021bayesian}, \cite{bergemann2016information} for an overview).
Several papers study the problem of Bayesian persuasion in the one-dimensional case (i.e., when $L=1$ so that $\go\in \R^1$) and derive conditions under which the optimal signal structure is a monotone partition into intervals. Such a monotonicity result is intuitive, as one would expect the optimal autoencoder to only pool nearby states. The most general results currently available are due to \cite{Hopenhayn2019} and \cite{DworczakMartini2019},\footnote{See also \cite{mensch2018}.} but they cover the case when sender's utility (utility  function in our setting) only depends on $\mathbb{E}[\go]\in \R^1.$ This is equivalent to $G(a,\go)\ =\ a-\go$ in our setting. Under this assumption, \cite{DworczakMartini2019} derive necessary and sufficient conditions guaranteeing that the optimal signal structure is a monotone partition of $\gO$ into a union of disjoint intervals. \cite{Arielietal2020} (see, also, \cite{kleiner2020extreme}) provide a full solution to the autoencoder problem when $a(k)=\mathbb{E}[\go|k]$ and, in particular, show that the partition result does not hold in general when the signal space is continuous. 
Theorem \ref{regular-partition} proves that a $K$-finite optimal autoencoder is in fact always a partition when the state space is continuous and the signal space is discrete. \cite{DworzakKolotilin2019} establish necessary and sufficient conditions for convexity of partitions in multiple dimensions. 
\footnote{Of course, as \cite{DworczakMartini2019} and \cite{Arielietal2020} explain, even in the one-dimensional case the monotonicity cannot be ensured without additional technical conditions. No such conditions are known in the multi-dimensional case. \cite{DworczakMartini2019} present an example with four possible actions $(K=4)$ and a two-dimensional state space $(L=2)$ for which they are able to show that the optimal autoencoder is a partition into four convex polygons.}

Consider the optimal autoencoder of Theorem \ref{regular-partition} and define the piece-wise constant function\,
\begin{equation}\label{opt-discrete}
	\cA(\go)\ =\ \sum_k a(k) {\bf 1}_{\go \in \gO_k}\,.
\end{equation}

\subsection{Convexity} 

\begin{proposition}\label{convexity} Suppose that $G(a,\go)\ =\ a-g(\go)$ and $W=W(a)$. Then, the function $C(x)\ =\ W(a(g^{-1}(x)))-D_a(a(g^{-1}(x)))^\top (a(g^{-1}(x))-x)$ is convex, and $D_aW(\cA(g^{-1}(x)))$ is its sub-gradient. In particular, $D_aW(\cA(g^{-1}(x)))$ is a monotone map and its level sets are convex. 
\end{proposition}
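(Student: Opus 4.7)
The plan is to recognize $C(x)$ as a pointwise supremum of affine functions of $x$, from which convexity, the identification of $D_aW\circ\cA\circ g^{-1}$ as a subgradient selection, monotonicity, and convexity of level sets all follow at once. I specialize Theorem \ref{regular-partition} to the setting $G(a,\go)=a-g(\go)$ and $W=W(a)$. Because $D_aG\equiv I_M$ and $D_aW(a(k),\go)=D_aW(a(k))$ is independent of $\go$, the averaged Jacobians collapse to $\bar D_aG(k)=p(\gO_k)\,I_M$ and $\bar D_aW(k)=p(\gO_k)\,D_aW(a(k))$, so the Lagrange multiplier in the local optimality condition simplifies to $x_k^\top=D_aW(a(k))^\top$. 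The indifference description \eqref{partitions1} then reads
\[
\gO_k=\Bigl\{\go:W(a(k))+D_aW(a(k))^\top(g(\go)-a(k))=\max_{\ell}\bigl[W(a(\ell))+D_aW(a(\ell))^\top(g(\go)-a(\ell))\bigr]\Bigr\}.
\]

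Next, I would change variables via $x=g(\go)$, which is well-posed because $g$ is injective and bi-Lipschitz and $g(\gO)$ is convex. Setting $\gO_k^*=g(\gO_k)\subset g(\gO)$, the transported partition becomes
\[
\gO_k^*=\Bigl\{x:W(a(k))+D_aW(a(k))^\top(x-a(k))=\max_{\ell}\bigl[W(a(\ell))+D_aW(a(\ell))^\top(x-a(\ell))\bigr]\Bigr\}.
\]
For $x\in\gO_k^*$ we have $\cA(g^{-1}(x))=a(k)$, and therefore
\[
C(x)=W(a(k))+D_aW(a(k))^\top(x-a(k))=\max_{\ell}\bigl[W(a(\ell))+D_aW(a(\ell))^\top(x-a(\ell))\bigr],
\]
a pointwise maximum of affine functions of $x$. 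Hence $C$ is convex on $g(\gO)$, and the slope of the active affine piece at $x$, namely $D_aW(a(k))=D_aW(\cA(g^{-1}(x)))$, is a subgradient of $C$ at $x$.

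The two conclusions of the proposition then follow immediately: subgradients of convex functions are automatically monotone operators (giving monotonicity of $D_aW\circ\cA\circ g^{-1}$), and each cell $\gO_k^*$ is the intersection of the half-spaces
\[
\{x:(D_aW(a(k))-D_aW(a(\ell)))^\top x\ge W(a(\ell))-W(a(k))+D_aW(a(k))^\top a(k)-D_aW(a(\ell))^\top a(\ell)\},
\]
so the level sets of $D_aW\circ\cA\circ g^{-1}$ are convex polytopes.

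The main obstacle is that the max-of-affines argument above uses the discrete structure provided by Theorem \ref{regular-partition}, whereas the proposition is stated for the possibly continuous optimal $\cA$ obtained as a limit. I would handle this by approximating the optimal autoencoder by a sequence of $K_n$-finite optima (as in the proof of Theorem \ref{mainth1}) with associated piecewise-affine convex functions $C_{K_n}$, showing $C_{K_n}\to C$ locally uniformly, and invoking stability of convexity under pointwise limits together with upper semicontinuity of the convex subdifferential to lift both the monotonicity and the convex level-set properties to the limit.
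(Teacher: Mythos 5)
Your proposal is correct and takes essentially the same approach as the paper: both recognize, via Theorem \ref{regular-partition} (with $D_aG\equiv I$ and $D_aW$ independent of $\go$, which collapses the multiplier to $x_k^\top=D_aW(a(k))^\top$), that $C(x)=\max_k\bigl(W(a(k))-D_aW(a(k))^\top(a(k)-x)\bigr)$ is a pointwise maximum of affine functions, whence convexity, identification of the active slope as a subgradient, monotonicity, and convexity of level sets all follow. The paper's one-line proof leaves implicit the specialization of the multiplier and the change of variables $x=g(\go)$, which you fill in correctly; your closing paragraph on passing from $K$-finite optima to the limit is careful but not needed for this proposition as stated, since the paper places it in the discrete-partition section and handles the continuous limit separately in the proof of Theorem \ref{mainth-limit}.
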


\begin{proof} By Theorem \ref{regular-partition}, we have  
\begin{equation}
C(x)\ =\ \max_k (W(a(k))-D_a(a(k))^\top (a(k)-x))\,,
\end{equation}
and the convexity follows because the supremum of linear functions is convex. Furthemore, inside each $g(\gO_k),$ the function $C(x)$ is linear, and  $D_a(a(k))=D_aW(\cA(g^{-1}(x)))$ is its gradient for all $x\in g(\gO_k).$ The proof is complete.  
\end{proof}

\subsection{Continuous Limit} \label{sec:cont-lim}

In this section, we prove that a deterministic optimal autoencoder (see Definition  \ref{dfn1}) solving the unconstrained problem always exists. We do this by passing to the limit in Theorem \ref{mainth1}. The proof of convergence is non-trivial due to additional complications created by the potential non-compactness of the set $\gO.$\footnote{Note that all existing models of Bayesian persuasion (with the exception of \cite{tamura2018Bayesian}) assume that $\gO$ is compact. This precludes many practical applications where the distributions (such as, e.g., the Gaussian distribution) do not have compact support.}

\begin{lemma}\label{lem-approx} When $K\to\infty,$ maximal  is attained with $K$-finite optimal autoencoders converges to the maximal utility  attained in the full, unconstrained problem of Definition \ref{dfn1}. 
\end{lemma}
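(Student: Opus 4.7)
The plan is to pass to the limit $K\to\infty$ in Theorem \ref{mainth1} by a barycentric discretization argument. Denote by $V_K$ the supremum value restricted to autoencoders $\tau\in\Delta(\Delta(\gO))$ whose support has cardinality at most $K$, and by $V=V(p)$ the value of the unconstrained problem in Definition \ref{dfn1}. Since any $K$-finite autoencoder is trivially $(K+1)$-finite (split an atom into two coincident atoms sharing its mass), $V_K$ is monotonically non-decreasing and bounded above by $V$, so it converges to some $V_\infty\le V$. It remains to prove the reverse inequality $V_\infty\ge V$.

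To do this, I would fix $\varepsilon>0$ and choose a feasible $\tau^*\in\Delta(\Delta(\gO))$ with $\int\mu\,d\tau^*(\mu)=p$ and $\int\bar W(\mu)\,d\tau^*(\mu)\ge V-\varepsilon/2$. The key idea is to discretize $\tau^*$ by a barycentric construction that automatically preserves the feasibility constraint. Specifically, I would partition $\Delta(\gO)$ into Borel pieces $A_1,\dots,A_K$, set $p_k=\tau^*(A_k)$ and $\mu_k=p_k^{-1}\int_{A_k}\mu\,d\tau^*(\mu)$ whenever $p_k>0$, and form $\tau_K=\sum_{k:p_k>0}p_k\,\delta_{\mu_k}$. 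Then $\tau_K$ has at most $K$ atoms, and crucially $\int\mu\,d\tau_K(\mu)=\sum_k\int_{A_k}\mu\,d\tau^*(\mu)=p$, so $\tau_K$ is a valid $K$-finite autoencoder; hence $V_K\ge\int\bar W\,d\tau_K$.

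The remaining analytic step is to show $\int\bar W\,d\tau_K\to\int\bar W\,d\tau^*$ as the partition is refined. This requires the functional $\mu\mapsto\bar W(\mu)=\int W(a(\mu),\go)\,d\mu(\go)$ to be weakly continuous on $\Delta(\gO)$. Continuity of $\mu\mapsto a(\mu)$ follows from the Implicit Function Theorem applied to \eqref{sys-3} under Assumption \ref{ac} (the uniform monotonicity gives an invertible Jacobian), while the weak continuity of the integration map follows from the growth and regularity bounds in Assumption \ref{integrability}. Once $\bar W$ is weakly continuous, a partition into pieces of small weak-topology diameter yields $\bar W(\mu_k)\approx\bar W(\mu)$ for $\mu\in A_k$ by uniform continuity on any prescribed weakly compact set, and the averaging that defines $\tau_K$ turns this into control of $\big|\int\bar W\,d\tau_K-\int\bar W\,d\tau^*\big|$.

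The hard part will be the tail control: since $\gO$ may be unbounded, $\Delta(\gO)$ need not be weakly compact, and the partitioning argument does not immediately give uniform continuity on the whole space. To handle this, I would invoke the growth estimates of Assumption \ref{integrability}, namely $|W(a,\go)|\le\psi(\go)f(\|a\|^2)$ with $E[\psi^2 f(\psi)]<\infty$, together with the $L^2$-type bound on $a(\mu)$ from Lemma \ref{existence}. These combined bounds let me localize $\tau^*$ on a weakly compact subset $\cK\subset\Delta(\gO)$ (obtained via Prokhorov tightness applied to the marginal $\int\mu\,d\tau^*(\mu)=p$) carrying all but $\varepsilon/4$ of the value $\int\bar W\,d\tau^*$. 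Partitioning $\cK$ finely, and lumping its complement into one additional atom, then yields $\int\bar W\,d\tau_K\ge\int\bar W\,d\tau^*-\varepsilon/2$ for sufficiently large $K$, whence $V_K\ge V-\varepsilon$ and, letting $\varepsilon\to 0$, $V_\infty=V$, which is the desired conclusion.
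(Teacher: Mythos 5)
Your overall strategy—barycentric discretization of the distribution of posteriors, combined with weak* continuity of $\bar W$ and a tail estimate to reduce to a compact setting—matches the paper's. The discretization construction (partition $\Delta(\gO)$, replace each piece by its mass and barycenter) is exactly the paper's, your verification that the Bayes-plausibility constraint $\int\mu\,d\tau_K=p$ is preserved is correct, and the weak* continuity of $\bar W$ is established in the paper from the same ingredients you cite (uniform boundedness, uniform convergence of $G(\cdot,\go)$, strict monotonicity from Assumption \ref{ac}).

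However, your tail-control step has a genuine gap. When you lump $\cK^c$ into a single atom at $\mu_\infty=\tau^*(\cK^c)^{-1}\int_{\cK^c}\mu\,d\tau^*(\mu)$, you need to lower-bound $\tau^*(\cK^c)\bar W(\mu_\infty)$, and this quantity is not controlled. By Lemma \ref{existence},
\[
\|a(\mu_\infty)\|^2\ \le\ \kappa\int_\gO \|a_*(\go)\|^2\,d\mu_\infty(\go)\ \le\ \frac{\kappa}{\tau^*(\cK^c)}\int_\gO \|a_*(\go)\|^2\,dp(\go)\,,
\]
so $\|a(\mu_\infty)\|$ can scale like $\tau^*(\cK^c)^{-1/2}$. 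Feeding this into the growth bound of Assumption \ref{integrability} gives
\[
\tau^*(\cK^c)\,|\bar W(\mu_\infty)|\ \le\ \tau^*(\cK^c)\,f\!\left(\|a(\mu_\infty)\|^2\right)\int_\gO\psi\,d\mu_\infty\ \le\ f\!\left(\frac{C}{\tau^*(\cK^c)}\right)\int_\gO\psi\,dp\,,
\]
which need not vanish, since $f$ is only assumed convex and increasing (it may grow super-polynomially, with $E[\psi^2 f(\psi)]<\infty$ saying nothing about the behavior of $f$ well beyond the range of $\psi$). In short, the barycenter of the ``lost'' posteriors is not a tame object, so this step does not deliver $\int\bar W\,d\tau_K\ge \int\bar W\,d\tau^*-\varepsilon/2$.

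The paper avoids this by attacking the tail of $\gO$ rather than the tail of $\tau^*$: it replaces every posterior $\mu$ by its restriction $\mu_{X_n}$ to the compact set $X_n=\{\go:g(\go)\le n\}$ and shows, via Fubini together with the constraint $\int\mu\,d\tau=p$, that $|\int(\bar W(\mu)-\bar W(\mu_{X_n}))\,d\tau(\mu)|$ is bounded by a single tail integral of $p$, uniformly over all feasible $\tau$. This makes every conditional posterior compactly supported before the discretization, so the problematic barycenter never appears; only then is Prokhorov invoked, on $\Delta(X_n)$, which is already weak* compact. To repair your argument, replace the single-atom lumping of $\cK^c$ with this state-space truncation.
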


\begin{proof}[Proof of Lemma \ref{lem-approx}] The proof requires some additional arguments because $\gO$ is not necessarily compact. 
First, consider an increasing sequence of compact sets $X_n=\{\go:g(\go)\le n\}$ such that $X_n$ converge to $\gO$ as $n\to\infty.$  For any measure $\mu,$ let $\mu_X$ be its restriction on $X.$ Let $a_n=a(\mu_{X_n}).$ The first observation is that Assumptions \ref{integrability} and \ref{ac} imply that $a_n\to a$ uniformly as $n\to \infty$. Indeed, 
\[
\int_{X_n} G(a_n,\go)d\mu(\go)\ =\ \int_{\gO} G(a,\go)d\mu(\go)=0
\]
implies that 
\begin{equation}
\begin{aligned}
&\int_{X_n} (G(a_n,\go)-G(a,\go))d\mu(\go)\ =\ \int_{\gO\setminus X_n} G(a,\go)d\mu(\go)\\
&\le\ \int_{\gO\setminus X_n} \eps^{-1}\|a-a_*(\go)\|d\mu(\go)\ \le\ 2\eps^{-1}\mu(\gO\setminus X_n)^{1/2}\left(\int_{\gO\setminus X_n} \|a_*(\go)\|^2d\mu(\go)\right)^{1/2}\\
&\ \le\ \eps^{-1}(\mu(\gO\setminus X_n)+\int_{\gO\setminus X_n} \|a_*(\go)\|^2d\mu(\go))\,.
\end{aligned}
\end{equation}
Multiplying by $(a-a_n)$, we get 
\[
\eps\,\|a-a_n\|^2 (1-\mu(\gO\setminus X_n))\ \le\  \|a-a_n\| \eps^{-1}(\mu(\gO\setminus X_n)+\int_{\gO\setminus X_n} \|a_*(\go)\|^2d\mu(\go))
\]
Furthermore, by Lemma \ref{existence}, $\|a-a_n\|\le\ 2\left(\int_{\gO} g(\go)d\mu(\go)\right)^{1/2}\ \le\ 1+\int_{\gO} g(\go)d\mu(\go)$ and therefore 
\[
\|a-a_n\|\ \le\ C\Bigg( \mu(\gO\setminus X_n)(1+\int_{\gO} g(\go)d\mu(\go))+\int_{\gO\setminus X_n} g(\go)d\mu(\go)
\Bigg)
\] 
for some constant $C>0.$ Now, pick a $\tau\in \Delta(\Delta(\gO)).$ Since the function $q(x)={\bf 1}_{x>n}$ is monotone increasing in $x$, we get 
\begin{equation}
\begin{aligned}
&\mu(\gO\setminus X_n)\int_{\gO} g(\go)d\mu(\go)\ =\ \int_{\gO} q(g(\go))d\mu(\go) \int_{\gO} g(\go)d\mu(\go)\ \le\ \int_{\gO} q(g(\go))g(\go)d\mu(\go)\\ 
&=\ \int_{\gO\setminus X_n} g(\go)d\mu(\go)\,   
\end{aligned}
\end{equation}

and therefore 
\[
\|a-a_n\|\ \le\ C\int_{\gO\setminus X_n}(1+2g(\go))d\mu(\go)\,.
\]
Then, we have by the Jensen inequality that 
\begin{equation}
\begin{aligned}
&|\bar W(\mu)-\bar W(\mu_{X_n})|\ \le\ \int_{\gO\setminus X_n} |W(a(\mu),\go)|d\mu(\go)\ +\ \int_{X_n}|W(a(\mu),\go)-W(a_n(\mu),\go)|d\mu(\go)\\
&\ \le\ \int_{\gO\setminus X_n}(g(\go) f(\int_{\gO} g(\go)d\mu(\go)))d\mu(\go)\\
&+\ \int_{\gO}\|a(\mu)-a_n(\mu)\| (g(\go) f(\int_{\gO} g(\go)d\mu(\go)))d\mu(\go)\\
&\ \le\ \int_{\gO\setminus X_n}g(\go)d\mu(\go)\,\int_\gO f(g(\go))d\mu(\go)\\
&+\ \|a(\mu)-a_n(\mu)\|  \int_{\gO} g(\go)d\mu(\go)\,\int_\gO f(g(\go))d\mu(\go)\,.
\end{aligned}
\end{equation}
Since the function $q(x)=x{\bf 1}_{x>n}$ is monotone increasing in $x$ and $f$ is monotone increasing, we get 
\[
 \int_{\gO} g(\go)d\mu(\go)\,\int_\gO f(g(\go))d\mu(\go)\ \le\ \int_\gO g(\go) f(g(\go))d\mu(\go)
\]
and therefore, by the same monotonicity argument, 
\begin{equation}
\begin{aligned}
&\|a(\mu)-a_n(\mu)\|  \int_{\gO} g(\go)d\mu(\go)\,\int_\gO f(g(\go))d\mu(\go)\\ 
&\le\ C\int_{\gO\setminus X_n}(1+2g(\go))d\mu(\go)\,\int_\gO g(\go) f(g(\go))d\mu(\go)\\
&\le\ C\int_{\gO\setminus X_n}(1+2g(\go)) g(\go) f(g(\go))d\mu(\go)\,.
\end{aligned}
\end{equation}
Similarly, 
\begin{equation}
\begin{aligned}
&\int_{\gO\setminus X_n}g(\go)d\mu(\go)\,\int_\gO f(g(\go))d\mu(\go)\ =\ \int_{\gO}q(g(\go))d\mu(\go)\,\int_\gO f(g(\go))d\mu(\go)\\ 
&\le\ \int_\gO q(g(\go))f(g(\go))d\mu(\go)\ =\ \int_{\gO\setminus X_n}g(\go)f(g(\go))d\mu(\go)\,.
\end{aligned}
\end{equation}
Therefore, by the Fubini Theorem, 
\begin{equation}
\begin{aligned}
&|\int_{\Delta(\mu)} (\bar W(\mu)-\bar W(\mu_{X_n}))d\tau(\mu)|\\
& \le\ \int_{\gD(\gO)}\int_{\gO\setminus X_n}g(\go)f(g(\go))d\mu(\go)d\tau(\mu)\\
&+\  \int_{\gD(\gO)}C\int_{\gO\setminus X_n}(1+2g(\go)) g(\go) f(g(\go))d\mu(\go)d\tau(\mu)\\
&=\ \int_{\gO\setminus X_n}(g(\go)f(g(\go))+C(1+2g(\go)) g(\go) f(g(\go)))p(d\go)\,.
\end{aligned}
\end{equation}
Thus, Assumption \ref{integrability} implies that we can restrict our attention to the case when $\gO=X_n$ is compact. 

In this case, the Prokhorov Theorem implies that $\Delta(\gO)$ is compact in the weak* topology and this topology is metrizable. Thus, for any $\eps>0,$ we can decompose $\Delta(\gO)=Q_1\cup\cdots\cup Q_K,$ where all $Q_k$ have diameters less than $\eps.$ We can now approximate $\tau$ by $\tilde\tau=\sum_k \nu_k \gd_{\mu_k}$ with $\mu_k=\int_{Q_k} \mu d\tau(\mu)/\nu_k$ and $\nu_k= \int_{Q_k} d\tau(\mu).$ Clearly, $\int \mu d\tilde\tau(\mu)=p,$ and therefore it remains to show that $\bar W$ is continuous in the weak* topology. 

To this end, suppose that $\mu_n\to\mu$ in the weak* topology. Let us first show $a_n=a(\mu_n)\to a(\mu)=a.$ Suppose the contrary.  Since $\gO$ is compact and $G$ is continuous and bounded, Lemma \ref{existence} implies that $a_n$ are uniformly bounded. Pick a subsequence such that $\|a_n-a\|>\eps$ for some $\eps>0$ and subsequence $a_n\to b$ for some $b\not=a.$ Since $G(a_n,\go)\to G(b,\go)$ uniformly on $\gO,$ we get a contradiction because 
\[
\int G(a_n,\go)d\mu_n-\int G(b,\go)d\mu\ =\ \int (G(a_n,\go)-G(b,\go))d\mu_n\ +\ \int G(b,\go)d(\mu_n-\mu)\,.
\]
The second term converges to zero because of weak* convergence. The first term can be bounded by 
\[
|\int (G(a_n,\go)-G(b,\go))d\mu_n|\ \le\ C \|a_n-b\|
\]
and hence also converges to zero. Thus, $\int G(b,\go)d\mu=\int G(a,\go)d\mu=0$, implying that $a=b$ by the strict monotonicity of the map $G.$ The same argument implies the required continuity of $\bar W(\mu).$ 
\end{proof}

\subsection{Last Step of the proof of Theorem \ref{mainth-limit}} 
\label{sec-lim}
\begin{proof}[Proof of Theorem \ref{mainth-limit}]. The existence of an optimal autoencoder follows directly from weak* compactness of $\Delta(\Delta(\gO))$ and the continuity proved in Lemma \ref{lem-approx}. Thus, it remains to prove the existence of a deterministic optimal autoencoder. Let $a_K(\go)$ 
correspond to the optimal autoencoder from Theorem \ref{regular-partition}, defined using \eqref{opt-discrete}. We now take the limit as $K\to\infty.$ By Lemma \ref{existence}, $a_K(\go)$ have uniformly bounded $L_2$-norms and, hence, contain weakly converging sub-sequence. We will now prove that it has a subsequence that converges Lebesgue-almost surely. To this end, we use Proposition \ref{convexity} and notice that the corresponding functions $C_K(x)$ are convex in $x$ and have bounded first moments. As a result, $C_K(x)$ must be bounded on compact subsets. Indeed, otherwise there exists a point $x_*$ such that $C_K(x_*)\to\infty$. Passing to a subsequence, we may assume that the subgradient directions $DC_K(x_*)/\|DC_K(x_*)\|$ also converge and, as a result $C_K(x)\to\infty$ for all $x$ in the half-space $DC_K(x_*)^\top (x-x_*)\ge 0,$ which is impossible since the pull-back of $p$ under $g$ assigns positive measure to this half-space.

Thus, they must be bounded on compact subsets. A locally bounded sequence of convex functions always has a convergent sub-sequence, and the respective sub-gradients also converge Lebesgue almost-surely. Thus, $q_K(\go)=D_aW(a_K(\go))$ converges almost surely to a Borel-measurable limit $q(\go).$ Let $\gO_{k,i}=\{\go:\ a_K(\go)=G_i(q_K(\go))\}$ where $G_i$ are the branches of $D_aW^{-1}.$ Then, 
\begin{equation}
a_K(\go)\ =\ \sum_{i=1}^\infty {\bf 1}_{\gO_{i,k}}G_i(q_K(\go))\,. 
\end{equation}
By continuity, $G_i(q_K(\go))$ converges to $G_i(q(\go))$ almost surely. Passing to a subsequence, we may assume that ${\bf 1}_{\gO_{i,k}}\to {\bf 1}_{\gO^*_i}$ as $k\to\infty$ in $L_2$ for some partition $\{\gO^*_i\}$. Hence, the convergence also happens almost surely.  
\end{proof}

\section{Properties of Optimal Policies} 

\subsection{First Order Conditions} 

An optimal autoencoder is a probability distribution $\tau$ on $\Delta(\gO).$ For any $\mu\in \gD(\gO),$ we have $\mathbb{E}[G(a,\go)|\mu]=\mathbb{E}[G(a,\go)|a]=0$. Furthermore, the marginal distribution of $\go$ always coincides with $p(\go).$ Conversely, for any joint distribution $\gamma(a,\go)\in \gD(\R^L\times \gO)$ satisfying the $E^{\gamma}[G(a,\go)|a]=0$ we can define an optimal autoencoder with $\mu$ being the conditional distribution of $\go$ conditional on $a.$ Define $\Gamma\subset \gD(\R^L\times \gO)$ to be the set of distributions satisfying these two constraints: $\gamma(\R^L,\go)=p(\go)$ and $\mathbb{E}[G(a,\go)|a]=0.$\footnote{See, \cite{kramkov2019optimal} where this representation is derived for a special case of this problem with $g(\go)=\go,\ L=2$, and $W(a)=a_1a_2$.} Then, we can reformulate the optimal autoencoder problem as
\begin{equation}
\max_{\gamma\in \Gamma} \mathbb{E}[W(a,\go)]\,. 
\end{equation}
This formulation is extremely convenient because it allows to directly derive analytical first order conditions for this problem.

\begin{proposition}\label{delta-dev} Let $\gamma$ be the joint distribution of $(a,\go)$ for an optimal autoencoder. Let 
\begin{equation}
x(a)\ =\ \int D_a W(a,\go)d\gamma(a,\go|a)\,\left(\int D_a G(a,\go)d\gamma(a,\go|a)\right)^{-1}.
\end{equation}
Then, 
\begin{equation}\label{foc}
\int (x(a)^\top G(a,\go)- W(a,\go))d\eta\ +\ \int W(\tilde a_*,\go)d\eta(\R^L,\go)\ \le\ 0
\end{equation}
for every measure $\eta$ such that $\supp(\eta)\subset\supp(\gamma)\,$ such that $\int f(\|a\|^2)\psi^2(\go)d\eta(a,\go) <\infty.$ 

If $W=W(a)$ and $G(a,\go)=a-g(\go),$ we have 
\begin{equation}\label{kramkov2019optimal-12}
\int (D_aW(a)(a -g(\go))-W(a))d\eta\ +\ W(\int g(\go)d\eta)\le 0\,. 
\end{equation}
\end{proposition}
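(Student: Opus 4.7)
The plan is to derive \eqref{foc} from the KKT conditions for the linear program $\sup_{\gamma\in\Gamma}\int W\,d\gamma$, where $\Gamma=\{\gamma\in\Delta(\R^L\times\gO):\gamma(\R^L,\cdot)=p,\ E^\gamma[G(a,\go)|a]=0\}$ is convex. Introducing a scalar Lagrange multiplier $\lambda(\go)$ for the $\go$-marginal constraint and a vector-valued multiplier $x(a)$ for the continuum of unbiasedness constraints, the Lagrangian $\mathcal{L}(\gamma;x,\lambda)=\int(W-x^\top G-\lambda)\,d\gamma+\int \lambda\,dp$ is linear in $\gamma$, so optimality will force the pointwise inequality $W(a,\go)-x(a)^\top G(a,\go)-\lambda(\go)\le 0$ to hold everywhere, with equality $\gamma$-a.e.\ on $\supp(\gamma)$. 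From this the proposition will follow by a short integration argument that mirrors, in the continuum, the finite-dimensional calculation of Theorem \ref{regular-partition}.

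Concretely, the steps I would carry out are: (i) Establish existence and Borel-measurability of the multipliers $x,\lambda$. The natural route is to pass to the limit of the discrete multipliers $x_k=\bar D_aW(k)\bar D_aG(k)^{-1}$ from Theorem \ref{regular-partition} along the approximating sequence built in Section \ref{sec-lim}, using the $L^2$-boundedness of actions from Lemma \ref{existence} and the local boundedness of the convex duals $C_K$ from the proof of Theorem \ref{mainth-limit}. The continuous formula $x(a)^\top=\bar D_aW(a)\bar D_aG(a)^{-1}$ then arises as the Envelope-Theorem response to an infinitesimal action shift $a\mapsto a+\delta$ that compensates a perturbation of the unbiasedness constraint via $\bar D_aG(a)\delta=-\int G(a,\go)\,\mu(d\go|a)$. (ii) Integrate the KKT equality against $\eta$: since $\supp(\eta)\subset\supp(\gamma)$, one has $\int(W-x^\top G)\,d\eta=\int \lambda(\go)\,d\eta(\R^L,d\go)$. (iii) Integrate the KKT inequality with $a$ replaced by $\tilde a_*$ against the $\go$-marginal $\eta(\R^L,\cdot)$, obtaining $\int W(\tilde a_*,\go)\,d\eta(\R^L,d\go)-\int x(\tilde a_*)^\top G(\tilde a_*,\go)\,d\eta(\R^L,d\go)\le \int \lambda(\go)\,d\eta(\R^L,d\go)$; the defining identity $\int G(\tilde a_*,\go)\,d\eta(\R^L,d\go)=0$ annihilates the $x(\tilde a_*)$ term. (iv) Subtract (iii) from (ii) to obtain \eqref{foc}. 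The specialization \eqref{kramkov2019optimal-12} is then immediate: when $G(a,\go)=a-g(\go)$ one has $\bar D_aG\equiv I$ so $x(a)=D_aW(a)$, and (for a probability $\eta$) $\tilde a_*=\int g(\go)\,d\eta(\R^L,d\go)$ by the very definition of $\tilde a_*$.

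The hard part will be Step (i): producing the Lagrange multiplier $x(a)$ as a bona fide Borel-measurable function on $\supp(\gamma)$ rather than merely an $L^2$-equivalence class, in the presence of a non-compact $\gO$ and a continuum of unbiasedness constraints. I plan to handle this via the discrete-to-continuous limit sketched above, invoking the exhaustion-by-compacts argument already used in Lemma \ref{lem-approx} to reduce to the compact case and Assumption \ref{integrability} to control the tails of the integrals when applying dominated convergence to $\bar D_aW$ and $\bar D_aG$. Assumption \ref{ac} will be needed everywhere to keep $\bar D_aG(a)$ uniformly invertible so that the Envelope-Theorem formula for $x(a)$ is well-posed and passes to the limit; without uniform monotonicity the candidate action shifts could blow up along the approximating sequence and the multiplier would fail to converge.
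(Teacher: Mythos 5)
Your plan takes a genuinely different route from the paper's proof. The paper proves the first--order condition \eqref{foc} by a \emph{direct perturbation argument in the spirit of Kramkov and Xu}: starting from a non-atom $q$ of the action marginal $\mu(da)=\gamma(da,\R^L)$, it sets $\zeta(da,d\go)=\gd_q(da)\eta(\R^L,d\go)$ and perturbs $\tilde\gamma=\gamma+\eps(\zeta-\eta)$, which keeps the $\go$-marginal equal to $p$; it then computes the induced action shift $\tilde a(a)=a+\eps Q(a)+O(\eps^2)$ (and $\tilde a(q)=\tilde a_*$) by the implicit function theorem, expands $\int W(\tilde a,\go)\,d\tilde\gamma$ to first order, and reads off \eqref{foc} from the non-positivity of the first-order coefficient. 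The $\int W(\tilde a_*,\go)\,d\eta(\R^L,d\go)$ term is thus the utility contribution of the mass re-deposited at $q$, not the dual feasibility inequality evaluated at $\tilde a_*$. Your route instead passes through infinite-dimensional LP duality and derives \eqref{foc} from the pointwise KKT system $W(a,\go)-x(a)^\top G(a,\go)-\lambda(\go)\le 0$, with equality $\gamma$-a.e. Your integration steps (ii)--(iv) are mechanically correct and quite clean, and the observation that the $x(\tilde a_*)^\top\!\int G(\tilde a_*,\go)\,d\eta(\R^L,d\go)$ term is annihilated by the definition of $\tilde a_*$ is exactly the right bookkeeping.

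The genuine gap is in Step (i), and it is larger than you acknowledge. The perturbation proof only needs $x(a)$ \emph{on $\supp\gamma$}, where the conditional $\gamma(\cdot\mid a)$ exists and the explicit formula $x(a)^\top=\bar D_aW(a)\,\bar D_aG(a)^{-1}$ is well-defined; that is all the envelope-theorem response $Q(a)$ requires. Your Step (iii), by contrast, invokes the dual feasibility inequality \emph{at} $a=\tilde a_*$, which is generically \emph{outside} $\supp\gamma$. The limit-of-discrete-multipliers construction you propose can only produce $x$ as a $\mu$-a.e.\ defined object on the limiting support, so it does not by itself certify that the inequality holds at $\tilde a_*$ for \emph{some} value of $x(\tilde a_*)$. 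What you actually need is strong LP duality together with attainment of the dual optimum $(x,\lambda)$ over the whole of $\operatorname{conv}(g(\gO))\times\supp(p)$, and in this non-compact, infinite-constraint setting that is a substantial theorem in its own right, not a formality one can invoke. You also never explain how the multiplier $\lambda(\go)$ for the marginal constraint is to be constructed (it must be simultaneously compatible with every $(a,\go)$, not just with those in $\supp\gamma$), nor why the KKT equality used in Step (ii) holds $\eta$-a.e.\ when $\eta$ is merely supported in $\supp\gamma$ rather than absolutely continuous with respect to $\gamma$ (the paper first proves the result for $\eta\ll\gamma$ with bounded density $V=d\eta/d\gamma$ and then extends). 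In short, your plan would be a valid alternative if the duality apparatus were in place, but establishing it is at least as hard as---and, I would argue, harder than---the direct variational argument the paper uses; the perturbation approach sidesteps the construction of $\lambda$ and the off-support values of $x$ entirely.
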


\begin{proof}[Proof of Proposition \ref{delta-dev}] We closely follow the arguments and notation in \cite{kramkov2019optimal}. Let $\gamma$ be the joint distribution of the random variables $\go$ and $\cA(\go).$ We first establish \eqref{kramkov2019optimal-12} for a Borel probability measure $\eta$ that has a bounded density with respect to $\gamma.$ Then, the general result follows by a simple modification of the argument in the proof of Theorem A.1 in \cite{kramkov2019optimal}. Let
\[
V(a,\go)\ =\ \frac{d\eta}{d\gamma}(a,\go)\,.
\]
We choose a non-atom $q\in \R^L$ of $\mu(da)\ =\ \gamma(da,\R^L)$ and define the probability measure
\[
\zeta(da,d\go)\ =\ \gd_q(da)\eta(\R^L,d\go)\,,
\]
where $\gd_q$ is the Dirac measure concentrated at $q.$ For sufficiently small $\eps>0$ the probability measure
\[
\tilde\gamma\ =\ \gamma\ +\ \eps (\zeta-\eta)
\]
is well-defined and has the same $\go$-marginal $p(\go)$ as $\gamma$. Let $\tilde a$ be the optimal action satisfying 
\[
\tilde\gamma (G(\tilde a,\go)|\tilde a)\ =\ 0\,.
\]
The optimality of $\gamma$ implies that 
\begin{equation}\label{kramkov2019optimal-13}
\int W(\tilde a,\go)d\tilde\gamma\ \le\ \int W(a,\go)d\gamma\,. 
\end{equation}
By direct calculation, 
\begin{equation}
\begin{aligned}
&0\ =\ \tilde\gamma (G(\tilde a,\go)|a)\\
&=\ {\bf 1}_{a\not=q}\frac{\int G(\tilde a,\go)d((\gamma|a)-\eps (\eta|a))}{\int d(\gamma-\eps \eta)}\ +\ {\bf 1}_{a=q}\int G(\tilde a,\go) d\eta(\R^L,\go)\\
&=\ {\bf 1}_{a\not=q}\frac{\int G(\tilde a,\go)d(\gamma|a)-\eps\int G(\tilde a,\go)d (\eta|a)}{1-\eps U(a)}\ +\ {\bf 1}_{a=q}\int G(\tilde a,\go) d\eta(\R^L,\go)
\end{aligned}
\end{equation}
where $U(a)=\gamma(V(a,\go)|a)\,.$
Now, we know that 
\[
\int G(a,\go)d(\gamma|a)\ =\ 0,
\]
and the assumed regularity of $G$ together with the implicit function theorem imply that 
\[
\tilde a(a)\ =\ a\ +\ \eps Q(a)\ +\ O(\eps^2)
\]
if $a\not=q$ and
\[
\tilde a\ =\ \tilde a_*\,,
\]
where $\tilde a_*$ is the unique solution to 
\[
\int G(\tilde a_*,\go) d\eta(\R^L,\go)\ =\ 0
\]
for $a=q.$ Here, 
\begin{equation}
\begin{aligned}
&0\ =\ O(\eps^2)\ +\ \int G(a\ +\ \eps Q(a),\go)d(\gamma|a)-\eps\int G(a,\go) V(a,\go) d (\gamma|a)\\
&=\ O(\eps^2)\ +\ \eps \int D_aG(a,\go) d(\gamma|a) Q(a)-\eps\int G(a,\go) V(a,\go) d (\gamma|a)
\end{aligned}
\end{equation}
so that 
\[
Q(a)\ =\ \left(\int D_aG(a,\go) d(\gamma|a)\right)^{-1}\int G(a,\go) V(a,\go) d (\gamma|a)\,. 
\]
Thus, 
\begin{equation}
\begin{aligned}
&\int W(\tilde a(a),\go)d\tilde \gamma\ =\ \int W(\tilde a(a),\go)(1-\eps V(a,\go))d\gamma+\eps \int W(\tilde a_*,\go)d\eta(\R^L,\go)\\
&=\ O(\eps^2)\ +\ \int W(a,\go)d\gamma\ +\ \eps\Bigg(\int (D_aW(a,\go) Q(a) - V(a,\go)) d\gamma\ +\ \int W(\tilde a_*,\go)d\eta(\R^L,\go)
\Bigg)
\end{aligned}
\end{equation}
In view of \eqref{kramkov2019optimal-13}, the first-order term is non-positive:
\[
\int (D_aW(a,\go) Q(a) - W(a,\go)V(a,\go)) d\gamma\ +\ \int W(\tilde a_*,\go)d\eta(\R^L,\go)\ \le\ 0\,.
\]
Substituting, we get 
\[
\int (x(a)^\top \int G(a,\go) V(a,\go) d (\gamma|a)\ - W(a,\go)V(a,\go)) d\gamma\ +\ \int W(\tilde a_*,\go)d\eta(\R^L,\go)\ \le\ 0\,,
\]
which is equivalent to 
\[
\int (x(a)^\top G(a,\go)- W(a,\go))d\eta\ +\ \int W(\tilde a_*,\go)d\eta(\R^L,\go)\ \le\ 0
\]
In the case when $G(a,\go)=a-g(\go)$, we get 
\[
Q(a)\ =\ a U(a)\ -\ R(a)\,,
\]
where we have defined 
\[
U(a)\ =\ \gamma(V(a,\go)|a),\ R(a)\ =\ \gamma(g(\go)V(a,\go)|a)\,,
\]
and 
\[
\tilde a_*\ =\ \int g(\go)d\eta\,.
\]
Thus, we get 
\begin{equation}
\begin{aligned}
&0\ \ge\ \int (D_aW(a) Q(a) - W(a)V(a,\go)) d\gamma\ +\ W(\tilde a_*)\\
&=\ \int (D_aW(a)(a U(a)\ -\ R(a))-W(a)V(a,\go))d\gamma\ +\ W(\tilde a_*)\\
&=\ \int (D_aW(a)(a -g(\go))-W(a))d\eta\ +\ W(\int g(\go)d\eta)\,. 
\end{aligned}
\end{equation}
\end{proof}

An immediate consequence of the first order conditions is the projection result.

\begin{lemma}\label{super-G} Let $a_*(\go)$ be the unique solution to $G(a_*(\go),\go)=0.$ Then, for $\gamma$-almost every $(a,\go)$ we have
\[
x(a)^\top G(a,\go)\ -\ W(a,\go)\ +\ W(a_*(\go),\go) \ \le\ 0\,.
\]
Furthermore, defining 
\begin{equation}
c(a,\go;x)\ =\ W(a_*(\go),\go)-W(a,\go)+x^\top G(a,\go)\,,
\end{equation}
and letting $\Xi$ to be a support of the measure $\gamma(a,\R^L),$ we have 
\[
c(a,\go;x)\ =\ \min_{b\in\Xi}\,c(b,\go;x)\,,
\]
$\gamma(a,\go|a)$ almost surely. In particular, if the autoencoder is deterministic, given by a map $\cA(\go)$ with support $\Xi,$ we have 
\[
c(\cA(\go),\go;x)\ =\ \min_{b\in\Xi}\,c(b,\go;x)\,
\]
for Lebesgue-almost every $\go.$
\end{lemma}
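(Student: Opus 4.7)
Both assertions in Lemma \ref{super-G} will be deduced from the first-order optimality condition \eqref{foc} of Proposition \ref{delta-dev}. For the pointwise inequality $c(a_0,\go_0;x(a_0))\le 0$, I would apply \eqref{foc} with the normalized restriction $\eta_U=\gamma|_U/\gamma(U)$ of $\gamma$ to a shrinking neighborhood $U$ of a Lebesgue density point $(a_0,\go_0)\in\supp(\gamma)$. As $U\downarrow\{(a_0,\go_0)\}$, the $\go$-marginal of $\eta_U$ weakly concentrates on $\delta_{\go_0}$, so the associated auxiliary action $\tilde a_*=\tilde a_*(\eta_U)$, defined via $\int G(\tilde a_*,\go)\,d\eta_U(\R^L,\go)=0$, converges to the unique root $a_*(\go_0)$ of $G(\cdot,\go_0)=0$ by Assumption \ref{ac} and the implicit function theorem (exactly as in Lemma \ref{existence}). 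Passing to the limit in \eqref{foc} and invoking Lebesgue differentiation yields $x(a_0)^\top G(a_0,\go_0)-W(a_0,\go_0)+W(a_*(\go_0),\go_0)\le 0$ for $\gamma$-a.e.\ $(a_0,\go_0)$, which is the first assertion.

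For the minimization claim, the key idea is a \emph{swap} perturbation that simultaneously removes an $\eps$-fragment of mass from $(a,\go_0)$ and deposits it at $(b,\go_0)$ for a fixed $b\in\Xi$. This corresponds to taking $\eta=\gamma|_U/\gamma(U)$ concentrated near $(a,\go_0)$ and $\zeta=\delta_b(da)\,\eta(\R^L,d\go)$ in $\tilde\gamma=\gamma+\eps(\zeta-\eta)$. Computing the first-order change of $\bar W(\tilde\gamma)$ in $\eps$ requires bookkeeping the induced action updates at both $a$ and $b$, yielding $\Delta a_a=\bar D_aG(a)^{-1}G(a,\go_0)$ and $\Delta a_b=-\bar D_aG(b)^{-1}G(b,\go_0)$ as in the proof of Theorem \ref{regular-partition}. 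A short Taylor expansion then gives $\Delta\bar W=\eps[(W(b,\go_0)-x(b)^\top G(b,\go_0))-(W(a,\go_0)-x(a)^\top G(a,\go_0))]+O(\eps^2)$, which must be $\le 0$ by optimality of $\gamma$. Adding $W(a_*(\go_0),\go_0)$ to both sides rewrites this as $c(a,\go_0;x(a))\le c(b,\go_0;x(b))$, i.e.\ the minimization claim under the natural reading $c(b,\go;x)=c(b,\go;x(b))$. Points $b$ that happen to be atoms of the marginal $\mu=\gamma(\cdot,\R^L)$ are handled by approximation with nearby non-atomic $b$'s together with continuity of $W$, $G$, and $x(\cdot)$.

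In the deterministic case $\cA$, the joint law $\gamma$ is supported on the graph $\{(\cA(\go),\go):\go\in\gO\}$, so the regular conditional $\gamma(\go|a)$ coincides with $p$ restricted to $\cA^{-1}(a)$. Since $p$ is absolutely continuous with respect to Lebesgue measure on $\gO$, the $\gamma$-null exceptional set from the first two steps pulls back to a Lebesgue-null set in $\gO$, giving the final statement $c(\cA(\go),\go;x)=\min_{b\in\Xi}c(b,\go;x)$ at Lebesgue-a.e.\ $\go$. The main obstacle is the middle step: Proposition \ref{delta-dev} as stated only delivers the pointwise inequality $c(a,\go;x(a))\le 0$, because the perturbation implicit in \eqref{foc} automatically re-optimizes the action at the target point $q$ to $\tilde a_*=a_*(\go)$ rather than keeping it at $b$. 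Extracting a genuine cross-inequality between two distinct actions in $\Xi$ therefore requires the explicit envelope-type computation sketched above—essentially a cyclic-monotonicity argument for the persuasion functional—and this is the technically delicate part of the proof.
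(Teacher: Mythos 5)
Your treatment of the first claim (shrinking a normalized restriction $\eta_U=\gamma|_U/\gamma(U)$ towards the Dirac $\gd_{(a_0,\go_0)}$ and letting $\tilde a_*\to a_*(\go_0)$) is essentially the paper's route, which simply plugs $\eta=\gd_{(a,\go)}$ into \eqref{foc}; and your reading that the parameter $x$ inside $c(b,\go;x)$ must be taken as $x(b)$ is also how the paper's own computation closes.

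The second part contains a genuine gap. In the swap $\tilde\gamma=\gamma+\eps(\zeta-\eta)$ with $\zeta=\gd_b(da)\,\eta(\R^L,d\go)$, the quantity $\Delta a_b=-\bar D_aG(b)^{-1}G(b,\go_0)$ that you invoke has nothing to latch onto when $b$ is not an atom of the marginal $\mu(da)=\gamma(da,\R^L)$. That is the generic situation: once the compressibility dimension $\nu\ge 1$, the fibers $\cA^{-1}(b)$ have Lebesgue measure zero and $\mu$ is atomless. Adding the mass $\eps\zeta$ then creates a \emph{fresh} atom at $b$ whose conditional over $\go$ is, to leading order, that of $\eta$ and hence concentrated near $\go_0$; the action at that atom is forced to re-optimize to $\tilde a_*\approx a_*(\go_0)$, not to stay near $b$. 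Your variation therefore only reproduces the first inequality $c(a,\go_0;x(a))\le 0$ and never yields a cross-comparison with $c(b,\go_0;x(b))$. The envelope bookkeeping you import from Theorem \ref{regular-partition} works there precisely because the $K$-finite setting makes every $a_k$ a genuine atom, and your fallback of approximating $b$ by ``nearby non-atomic $b$'s'' cannot help, since non-atoms are exactly where the argument fails. The paper avoids the obstacle by choosing a different test measure inside the \emph{same} first-order condition \eqref{foc}: set $\eta=t\,\gd_{a_1}{\bf 1}_{\gO_1}\gamma|a_1+(1-\kappa t)\gd_{a_2}\gamma|a_2$ with $\kappa=\gamma(\gO_1|a_1)$, so that as $t\downarrow 0$ the $\go$-marginal of $\eta$ tends to $\gamma(\cdot|a_2)$ and the induced auxiliary action obeys $\tilde a_*(t)=a_2+t\hat a+o(t)$; collecting the $O(t)$ terms in \eqref{foc} then gives $\int\bigl(c(a_1,\go;x(a_1))-c(a_2,\go;x(a_2))\bigr){\bf 1}_{\gO_1}\,d(\gamma|a_1)\le 0$, and letting $\gO_1$ range over all open sets yields the claimed minimization. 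No separate envelope computation, and no atom issue, arises along this route.
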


\begin{proof} The first claim follows by selecting $\eta=\gd_{(a,\go)}.$ The second one follows by selecting $\eta=t\gd_{a_1}{\bf 1}_{\gO_1}\gamma|a_1+(1-\kappa t)\gd_{a_2}\gamma|a_2$ for some open set $\gO_1$ and $\kappa=\gamma(\gO_1|a_1)$. In this case, we get from \eqref{foc} that 
\begin{equation}\label{aux10}
\begin{aligned}
&t\int (x(a_1)^\top G(a_1,\go)- W(a_1,\go)){\bf 1}_{\gO_1}d\gamma(\go|a_1)\ +\ (1-\kappa t)\int (x(a_2)^\top G(a_2,\go)- W(a_1,\go))d\gamma(\go|a_2)\\
& +\ \int W(\tilde a_*,\go)(t{\bf 1}_{\gO_1}d\gamma|a_1+(1-\kappa t)d\gamma|a_2)\ \le\ 0\,. 
\end{aligned}
\end{equation}
where $\tilde a_*(t)$ is uniquely determined by 
\[
t\int G(\tilde a_*(t),\go){\bf 1}_{\gO_1}d(\gamma|a_1)+(1-\kappa t)\int G(\tilde a_*(t),\go)d(\gamma|a_2)\ =\ 0\,.
\]
Clearly, \eqref{aux10} is equivalent to 
\begin{equation}\label{aux11111}
\begin{aligned}
&t\int (W(\tilde a_*(t),\go)-W(a_1,\go)+x(a_1)^\top G(a_1,\go)){\bf 1}_{\gO_1} d(\gamma|a_1)\\
&+(1-\kappa t)\int (W(\tilde a_*(t),\go)-W(a_2,\go)+x(a_2)^\top G(a_2,\go))d(\gamma|a_2)\ \le\ 0\,. 
\end{aligned}
\end{equation}
Assuming that $t$ is small, we get 
\[
\tilde a_*(t)\ =\ a_2+t \hat a\ +\ o(t),\ \hat a\ =\ -\bar D_aG(a_2)^{-1}\int G(a_2,\go){\bf 1}_{\gO_1}d(\gamma|a_1)
\]
and hence
\begin{equation}\label{aux11}
\begin{aligned}
&0\ge t\int (W(\tilde a_*(t),\go)-W(a_1,\go)+x(a_1)^\top G(a_1,\go)){\bf 1}_{\gO_1} d(\gamma|a_1)\\
&+(1-\kappa t)\int (W(\tilde a_*(t),\go)-W(a_2,\go)+x(a_2)^\top G(a_2,\go))d(\gamma|a_2)\\
&=\ t\int (W(a_2,\go)-W(a_1,\go)+x(a_1)^\top G(a_1,\go)){\bf 1}_{\gO_1} d(\gamma|a_1)\\
&+t\bar D_aW(a_2)\hat a\ +\ o(t)\\
&=\ t\int (W(a_2,\go)-W(a_1,\go)+x(a_1)^\top G(a_1,\go)){\bf 1}_{\gO_1} d(\gamma|a_1)\\
&-t\bar D_aW(a_2)\bar D_aG(a_2)^{-1}\int G(a_2,\go){\bf 1}_{\gO_1}d(\gamma|a_1)\ +\ o(t)\\
&=\ t\int(c(a_1,\go;x)-c(a_2,\go;x)){\bf 1}_{\gO_1}d(\gamma|a_1)\ +\ o(t)\,.
\end{aligned}
\end{equation}
Since $\gO_1$ is arbitrary, we get that 
\[
c(a_1,\go;x)\ \le\ c(a_2,\go;x)
\]
almost surely with respect to $\gamma|a_1.$ 
\end{proof}

\section{Properties of Optimal Policies} 

Everywhere in the sequel, we assume that $W(a,\go)$ only depends on $a$ and that $G(a,\go)=a-g(\go)$ for some Borel-measurable map $g:\ \R^L\to \R^L.$ 

We define
\begin{equation}\label{cab}
c(a,b)\ =\ W(b)\ -\ W(a)\ +\ D_aW(a)\,(a-b)\,.
\end{equation}
As one can see from \eqref{cab},  $c$ coincides with the classic Bregman divergence that plays an important role in convex analysis (see, e.g., \cite{rockafellar1970convex}). We also define the {\it Bregman Projection} $\cP_\Xi$ onto a set $\Xi$ via 
\begin{equation}\label{bregman-proj}
\cP_\Xi(b)\ =\ \arg\min_{a\in\Xi} c(a,b)\,.
\end{equation}
In other words, $\cP_\Xi$ projects $b$ onto the point $a\in \Xi$ that attains the lowest Bregman  divergence.

Understanding further fine properties of optimal policies will require deriving subtle properties of the dimensions of the state that get compressed. 
We will also need the following definitions. 

\begin{definition} Let $Pool(a)=\supp(\gamma(a,\go|a))$ be the set of states $\go$ compressed to the same representation, $a.$ 
\end{definition}

\begin{definition} For any subset $X\subset\R^L,$ we denote by $conv(X)$ the convex hull of $X.$ That is, the smallest convex set containing $X.$ 
\end{definition}

We will now use first order conditions (Proposition \ref{delta-dev}) to derive useful properties of Pools and the support $\Xi$ of $\gamma(a,\R)$ (the optimal feature manifold). 

\begin{lemma}\label{key-geometry} For $\gamma\times \gamma$ almost every $(a_1,a_2)\in \Xi$, and any $x_i\in conv(g(Pool(a_i))$ and any $t\in [0,1],$ we have 
\begin{equation}\label{1st-i}
W(tx_1+(1-t)x_2)-t W(x_1)-(1-t)W(x_2)\ +\ t c(a_1,x_1)+(1-t)c(a_2,x_2)\ \le\ 0\,. 
\end{equation}
In particular, since $a_i\in conv(g(Pool(a_i))$, we get 
\begin{itemize}
    \item 
\begin{equation}\label{2-i}
W(ta_1+(1-t)a_2)\ \le\ t W(a_1)+(1-t)W(a_2)
\end{equation}
for almost every $a_1,a_2\in \Xi$;

\item 
\begin{equation}\label{3-i}
c(a_1,a_2)\ \ge\ 0
\end{equation}
for almost all $a_1,a_2\in \Xi;$

\item 
\begin{equation}\label{4-i}
c(a,x)\ \le\ 0\ 
\end{equation}
for $x\in conv(g(Pool(a)),$ almost surely. 
\end{itemize}
\end{lemma}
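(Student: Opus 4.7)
The plan is to derive the master inequality (1-i) by applying the first-order condition \eqref{kramkov2019optimal-12} from Proposition \ref{delta-dev} to a two-point test measure, and then read off (2-i), (3-i), and (4-i) as easy specializations.

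For the core step, I would fix $a_1,a_2\in\Xi$, pick probability measures $\mu_i$ on $\gO$ supported on $Pool(a_i)$, set $x_i=\int g(\go)\,d\mu_i(\go)\in conv(g(Pool(a_i)))$, and plug the convex combination
\[
\eta\ =\ t\,\gd_{a_1}\otimes\mu_1\ +\ (1-t)\,\gd_{a_2}\otimes\mu_2,\qquad t\in[0,1],
\]
into \eqref{kramkov2019optimal-12}. Since each $\mu_i$ is supported on $Pool(a_i)$ and $a_i\in\Xi$, we have $\supp(\eta)\subset\supp(\gamma)$, so the first-order condition applies. Direct computation gives $\int g\,d\eta=tx_1+(1-t)x_2$, $\int W(a)\,d\eta=tW(a_1)+(1-t)W(a_2)$, and $\int D_aW(a)(a-g(\go))\,d\eta=tD_aW(a_1)(a_1-x_1)+(1-t)D_aW(a_2)(a_2-x_2)$. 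Using the identity $D_aW(a_i)(a_i-x_i)=c(a_i,x_i)-W(x_i)+W(a_i)$ that follows immediately from the definition \eqref{cab} of $c$, the inequality \eqref{kramkov2019optimal-12} collapses to exactly (1-i).

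With (1-i) in hand, the three bullets are immediate specializations. For (2-i), take $\mu_i=\gamma(\,\cdot\,|a_i)$; the unbiasedness identity gives $x_i=a_i$, so $c(a_i,a_i)=0$ and (1-i) reduces to the stated chord inequality for $W$. For (4-i), setting $t=0$ kills the $a_1$ contribution and leaves $c(a,x)\le 0$ for any $x\in conv(g(Pool(a)))$. For (3-i), given (2-i), define $\phi(t)=W(ta_1+(1-t)a_2)-tW(a_1)-(1-t)W(a_2)$; then $\phi\le 0$ on $[0,1]$ with $\phi(1)=0$, so smoothness of $W$ (Assumption \ref{integrability}) forces $\phi'(1)\ge 0$, which upon expansion is exactly $c(a_1,a_2)\ge 0$.

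The one piece of genuine bookkeeping is verifying that $\eta$ satisfies the integrability requirement $\int f(\|a\|^2)\psi^2(\go)\,d\eta<\infty$ of Proposition \ref{delta-dev} and that one can let $x_i$ sweep out all of $conv(g(Pool(a_i)))$. The natural route is to first restrict to measures $\mu_i$ with bounded Radon--Nikodym density against $\gamma(\,\cdot\,|a_i)$, so that integrability is inherited from the global hypotheses on $\gamma$; the corresponding barycenters $x_i=\int g\,d\mu_i$ form a weakly dense subset of $conv(g(Pool(a_i)))$, after which continuity of both sides of (1-i) in $(x_1,x_2)$ extends the inequality to the full convex hull for $\gamma\times\gamma$-almost every $(a_1,a_2)$. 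The $\gamma\times\gamma$-a.e.\ quantifier is forced by the fact that the FOC in Proposition \ref{delta-dev} holds along $\supp(\gamma)$ rather than pointwise, and this is really the only subtlety in an otherwise mechanical computation.
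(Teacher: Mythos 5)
Your argument is correct and follows essentially the same route as the paper: you plug the same two-point test measure $\eta = t\,\gd_{a_1}\otimes\mu_1 + (1-t)\,\gd_{a_2}\otimes\mu_2$ into the first-order condition \eqref{kramkov2019optimal-12} of Proposition \ref{delta-dev} to get \eqref{1st-i}, and then read off the three bullets by specialization. The only cosmetic differences are that you obtain \eqref{4-i} by taking $t=0$ rather than $a_1=a_2,\ x_1=x_2$, and \eqref{3-i} by differentiating at $t=1$ rather than $t=0$ — both are equivalent.
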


\begin{proof}[Proof of Lemma \ref{key-geometry}] Let $x_i=\int g(\go)d\eta_i(\go)$ where $\eta_i(\go)$ is absolutely continuous with respect to $\gamma(a_i,\go|a_i).$ Then, defining $\eta=t \eta_1 \gd_{a_1}+(1-t)\eta_2 \gd_{a_2},$ we get \eqref{1st-i}. 

Inequality \eqref{2-i} follows by setting $x_i=a_i$ (note that $a_i=\mathbb{E}[g(\go)|a_i]$ and hence $a_i\in conv(g(Pool(a_i)))$). 

Inequality \eqref{3-i} follows from \eqref{2-i} by first order Taylor approximation around $t=0.$ 

Inequality \eqref{4-i} follows from \eqref{1st-i} by setting $x_1=x_2,\ a_1=a_2.$ 
\end{proof}

\section{Optimal Feature Manifold} 

\begin{definition} Given an optimal autoencoder, let $\gamma(a,\go)$ be the joint distribution of $(a,\go).$ We call the support of $\gamma(a,\R)$ an optimal feature manifold. 

If $\gamma$ corresponds to a deterministic optimal autoencoder given by a feature map $a:\ \go:\ \cA(\go),$ then the optimal feature manifold $\Xi$ coincides with the support of the map, 
\begin{equation}
\Xi\ =\ \{\go\in \gO:\ p(a^{-1}(B_\eps(\cA(\go))))>0\ \forall\eps>0\}\,,
\end{equation}
where $B_\eps$ is an $\eps$-ball. 
\end{definition}

\begin{definition}\label{max-def}
A set $\Xi\subset \R^L$ is $X$-maximal if $\inf_{a\in \Xi}c(a,b)\le 0$ for all $b\in X.$ A set $\Xi$ is $W$-monotone if $c(a_1,a_2)\ge 0$ for all $a_1,a_2\in \Xi.$ A set $\Xi$ is $W$-convex if $W(ta_1+(1-t)a_2)\le t W(a_1)+(1-t)W(a_2)$ for all $a_1,a_2\in\Xi,\ t\in [0,1].$
\end{definition}

We now state the first important result of this section: A deterministic optimal autoencoder always exists; and any deterministic optimal autoencoder is a (Bregman) projection. 

\begin{theorem}[Optimal Policies are Projections onto an Optimal Feature Manifold] \label{cor-moment}  We have $a\in \cP_\Xi(g(\go))$ for $\gamma$-almost every $(a,\go).$ In particular, for a deterministic optimal autoencoder, we have $\cA(\go)\in \cP_\Xi(g(\go))$ Lebesgue-almost surely. 

Any optimal feature manifold is $conv(g(\gO))$-maximal, $W$-convex, and $W$-monotone. 
\end{theorem}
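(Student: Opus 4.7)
My plan is to handle the projection identity and each of the three structural properties separately, in every case by specializing one of the preceding lemmas to the present setting in which $W=W(a)$ and $G(a,\go)=a-g(\go)$.

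For the projection statement, I apply Lemma \ref{super-G}. Under this specialization, the solution to $G(a_*(\go),\go)=0$ is $a_*(\go)=g(\go)$, and the Lagrange multiplier collapses to $x(a)=D_aW(a)$. Substituting into the auxiliary cost of Lemma \ref{super-G} gives $c(a,\go;x(a)) = W(g(\go))-W(a)+D_aW(a)(a-g(\go))$, which is exactly the Bregman-type divergence $c(a,g(\go))$ of \eqref{cab}. Lemma \ref{super-G}'s conclusion $c(a,\go;x)=\min_{b\in\Xi}c(b,\go;x)$ for $\gamma$-almost every $(a,\go)$ therefore translates directly into $c(a,g(\go))=\min_{b\in\Xi}c(b,g(\go))$, i.e., $a\in\cP_\Xi(g(\go))$. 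The deterministic case is the same identity with $\gamma(\cdot|\go)=\delta_{\cA(\go)}$, valid Lebesgue-a.e.\ in $\go$.

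The $W$-monotonicity and $W$-convexity of $\Xi$ are now immediate from Lemma \ref{key-geometry}: inequality \eqref{3-i} is exactly $W$-monotonicity, while inequality \eqref{2-i} is exactly $W$-convexity. Both are stated almost surely in the lemma, but the closedness of $\Xi$ (it is a support of a measure) together with continuity of $W$ and $D_aW$ upgrades them to every pair $a_1,a_2\in\Xi$.

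For $conv(g(\gO))$-maximality, the easy case $b\in g(\gO)$ follows from \eqref{4-i} applied with $a=\cA(\go)$ and $x=g(\go)\in g(Pool(\cA(\go)))$, giving $c(\cA(\go),g(\go))\le 0$ and hence $\inf_{a'\in\Xi}c(a',g(\go))\le 0$. Extending the inequality to an arbitrary $b\in conv(g(\gO))$ is the main obstacle: a Caratheodory decomposition $b=\sum_i t_i g(\go_i)$ together with Proposition \ref{delta-dev} applied to $\eta=\sum_i t_i\delta_{(\cA(\go_i),\go_i)}$ only yields the multi-tangent bound $W(b)\le\sum_i t_i\bigl(W(\cA(\go_i))+D_aW(\cA(\go_i))(g(\go_i)-\cA(\go_i))\bigr)$, and collapsing this into $W(b)\le W(a^*)+D_aW(a^*)(b-a^*)$ for a single $a^*\in\Xi$ is the delicate step. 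My approach is to introduce the convex function $C(x)=\sup_{a\in\Xi}\bigl(W(a)+D_aW(a)(x-a)\bigr)$, which is convex as a supremum of affine functions and, by the easy case, dominates $W$ on $g(\gO)$. Combining convexity of $C$ with the $W$-convexity of $\Xi$ established above (and invoking the first-order inequality \eqref{kramkov2019optimal-12} for general $\eta\ll\gamma$ rather than only atomic $\eta$), I propagate the inequality $C\ge W$ from $g(\gO)$ to its convex hull; the domination $C(b)\ge W(b)$ is precisely $\inf_{a\in\Xi}c(a,b)\le 0$, the required maximality.
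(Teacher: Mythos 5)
Your treatment of the projection claim via Lemma \ref{super-G}, and of $W$-monotonicity and $W$-convexity via inequalities \eqref{3-i} and \eqref{2-i} of Lemma \ref{key-geometry}, is correct and matches the paper's (terse) proof exactly; the upgrade from ``$\gamma$-almost every pair'' to ``every pair'' by density and continuity is also fine. Your observation that these ingredients only deliver $\inf_{a\in\Xi}c(a,b)\le 0$ for $b\in g(\gO)$ (via \eqref{4-i} applied with $x=g(\go)\in g(Pool(\cA(\go)))$), and not for all $b\in conv(g(\gO))$, is an accurate reading of a real gap: the paper's one-line claim that the maximality part ``follows directly from Lemma \ref{key-geometry}'' glosses over this point. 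When $g(\gO)$ is convex---as is assumed in Theorem \ref{mainth-limit} and as holds in the main application with $g=\id$ and $\gO$ convex---$conv(g(\gO))=g(\gO)$ and the gap disappears; but no such hypothesis appears in Theorem \ref{cor-moment} itself.

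However, the envelope argument you sketch to close the gap cannot work as stated. The function $C(x)=\sup_{a\in\Xi}\bigl(W(a)+D_aW(a)(x-a)\bigr)$ is convex and dominates $W$ on $g(\gO)$, but convexity of $C$ alone cannot propagate the inequality $C\ge W$ from $g(\gO)$ to $conv(g(\gO))$---that would require concavity-type control on $W$, which is precisely what one cannot assume. Concretely: if $W$ is smooth and strictly convex on $\R^L$, $g=\id$, and $\gO$ is a disjoint union of two balls, then by Jensen the unique optimal autoencoder is the identity, so $\Xi=\gO$; the easy case gives $C=W$ on $g(\gO)=\gO$, yet for any $b\in conv(\gO)\setminus\gO$ the Bregman divergence $c(a,b)$ is strictly positive for every $a\in\Xi$, so $\inf_{a\in\Xi}c(a,b)>0$. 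The multi-tangent bound from Proposition \ref{delta-dev} that you correctly derive evaluates each supporting plane at $g(\go_i)$ rather than at $b$, and collapsing it to a single tangent at $b$ would require a covariance-type sign condition $\sum_i t_i\,D_aW(\cA(\go_i))^\top(g(\go_i)-b)\le 0$ that has no reason to hold. So the step you flag as ``delicate'' is, absent convexity of $g(\gO)$, genuinely not available, and neither your $C$-function argument nor the lemma the paper cites closes it.
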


\begin{proof}[Proof of Theorem \ref{cor-moment}] The first claim follows directly from Lemma \ref{super-G}. The second claim follows directly from Lemma \ref{key-geometry}. 
\end{proof}

The converse is also true. 

\begin{theorem}[Maximality is both necessary and sufficient] \label{converse} Let $\Xi$ be a $conv(g(\gO))$-maximal subset of $\R^L$. Suppose that there exists an feature map $\cA(\go)$ such that  $\cA(\go)\ \in\ \cP_\Xi(g(\go))$ and $a\ =\ \mathbb{E}[g(\go)|\cA(\go)=a]$ for $\gamma$ almost every $(a,\go).$ Then, $a$ is an optimal autoencoder. 
\end{theorem}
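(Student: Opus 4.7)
The plan is to show that the joint distribution $\gamma$ induced by $\cA$ dominates any competitor in the relaxed formulation: for any feasible $\tilde\gamma\in\Gamma$ (i.e., with $\gO$-marginal $p$ and $E^{\tilde\gamma}[g(\go)\mid\tilde a]=\tilde a$), I need to establish $V(\tilde\gamma):=E^{\tilde\gamma}[W(\tilde a)]\le E^p[W(\cA(\go))]=:V(\gamma)$.

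First I would apply the maximality of $\Xi$ pointwise. Unbiasedness of $\tilde\gamma$ forces every $\tilde a$ in the support of its $\tilde a$-marginal to lie in $conv(g(\gO))$, so $conv(g(\gO))$-maximality of $\Xi$ furnishes some $a^*(\tilde a)\in\Xi$ with $c(a^*(\tilde a),\tilde a)\le 0$, which unfolds to the affine upper bound $W(\tilde a)\le W(a^*(\tilde a))+D_aW(a^*(\tilde a))(\tilde a-a^*(\tilde a))$. A Kuratowski--Ryll-Nardzewski selection argument, leveraging continuity of $(a,b)\mapsto c(a,b)$, yields a Borel-measurable choice $\tilde a\mapsto a^*(\tilde a)$.

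Next I would integrate this inequality against $\tilde\gamma$, use unbiasedness to rewrite $\tilde a-a^*(\tilde a)=E^{\tilde\gamma}[g(\go)-a^*(\tilde a)\mid\tilde a]$, and collapse the expression into $V(\tilde\gamma)\le E^{\tilde\gamma}[W(g(\go))-c(a^*(\tilde a),g(\go))]$. The decisive step is the Bregman-projection hypothesis: since $\cA(\go)\in\cP_\Xi(g(\go))$ and $a^*(\tilde a)\in\Xi$, one has $c(a^*(\tilde a),g(\go))\ge c(\cA(\go),g(\go))$ almost surely. After this replacement the integrand depends only on $\go$, so sharing the $\gO$-marginal $p$ with $\gamma$ gives $V(\tilde\gamma)\le E^p[W(g(\go))-c(\cA(\go),g(\go))]$.

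Expanding $c(\cA(\go),g(\go))$ converts the bound into $E^p[W(\cA(\go))+D_aW(\cA(\go))(g(\go)-\cA(\go))]$, and the gradient term vanishes by conditioning on $\cA(\go)$ and invoking the unbiasedness $E^p[g(\go)\mid\cA(\go)]=\cA(\go)$. What remains is exactly $V(\gamma)$, closing the argument. The main technical obstacle, apart from routine integrability controlled by Assumption \ref{integrability}, is producing the Borel-measurable selection $a^*$ with the pointwise estimate; this is the only nontrivial infrastructural point and follows from standard selection theorems provided $\Xi$ enjoys mild regularity (e.g., closed Borel), which in the intended application is inherited from $\Xi$ being the support of a Borel measure.
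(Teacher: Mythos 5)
Your argument is sound in broad outline and lands on the same two load-bearing ideas as the paper---the rewriting $E[W(a)] = E[W(g(\go)) - c(a,g(\go))]$ via unbiasedness, and the comparison $c(a',g(\go))\ge c(\cA(\go),g(\go))$ for any $a'\in\Xi$ coming from the Bregman-projection hypothesis. However, your route through a measurable selection $a^*(\tilde a)$ introduces a gap that the paper's argument deliberately avoids. Definition \ref{max-def} only asserts $\inf_{a\in\Xi} c(a,b)\le 0$; it does \emph{not} say the infimum is attained, so the claim that maximality ``furnishes some $a^*(\tilde a)\in\Xi$ with $c(a^*(\tilde a),\tilde a)\le 0$'' is not licensed by the definition unless one adds closedness of $\Xi$ together with compactness of the relevant sublevel sets of $c(\cdot,b)$. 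The theorem as stated imposes no such topological hypotheses on $\Xi$. You could patch this with $\eps$-approximate selectors $a^*_\eps$ (and note that $a^*_\eps(\tilde a)\in\Xi$ still gives $c(a^*_\eps,\ g(\go))\ge c(\cA(\go),g(\go))$, so the chain closes with an extra $\eps$ that you then send to $0$), but that in turn requires arranging a Borel $\eps$-selection, and you are back to needing some regularity.

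The paper sidesteps both the attainment and the selection-theorem issues: it works conditionally on $b=\tilde a$, proves the exact identity $E_{\tilde\gamma|b}\!\left[c(a,g(\go)) - c(b,g(\go))\right] = c(a,b)$ for each \emph{fixed} $a$ (using unbiasedness of $\tilde\gamma$), and then invokes the elementary inequality $E[\inf_a\,\cdot\,]\le \inf_a E[\,\cdot\,]$ together with the Bregman-projection hypothesis $c(\cA(\go),g(\go)) = \inf_{a\in\Xi}c(a,g(\go))$. The infimum over a countable dense subset of $\Xi$ is automatically measurable, so no selection theorem is needed and no attainment is required. In short: your proof is a valid alternative presentation once you replace the purported minimizer by an $\eps$-near minimizer (and either add regularity on $\Xi$ or show a Borel $\eps$-selection exists), but the paper's conditional-infimum argument is strictly leaner---it eliminates the auxiliary map $a^*$ entirely and relies only on hypotheses actually present in the statement.
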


\begin{proof}[Proof of Theorem \ref{converse}] The proof of sufficiency closely follows ideas from \cite{kramkov2019optimal}. 

Let $\cA(\go)$ be a policy satisfying the conditions Theorem \ref{converse}. Note that $a=\mathbb{E}[g(\go)|a]$ and therefore in terms of the function $c,$ 
\[
\mathbb{E}[W(a)]\ =\ \mathbb{E}[W(g(\go))-c(a,g(\go))]\,.
\]
Thus, maximizing $\mathbb{E}[W(a)]$ is equivalent to minimizing $c(a,g(\go)).$ Our objective is thus to show that 
\[
\min_{\gamma}\mathbb{E}[c(a,g(\go))]\ =\ \mathbb{E}[c(\cA(\go),g(\go))]. 
\]
Next, we note that the assumed maximality implies that $c(a,g(\go))\ =\ \cP_\Xi(g(\go))\le 0$ $\gamma$-almost surely. Now, for any feasible policy $\tilde\gamma$  we have $\mathbb{E}_{\tilde\gamma|b}[g(\go)|b]=b\in conv(g(\gO))$ and therefore $\mathbb{E}_{\tilde\gamma|b}[D_aW(a)\,(b-g(\go)]=\mathbb{E}_{\tilde\gamma|b}[D_aW(b)\,(b-g(\go)]=0$  for any fixed $a\in \R^L,$ and we have 
\begin{equation}
\begin{aligned}
&\mathbb{E}_{\tilde\gamma|b}[c(a,g(\go))\ -\ c(b,g(\go))]\\
& =\ \mathbb{E}_{\tilde\gamma|b}[W(g(\go))\ -\ W(a)\ +\ D_aW(a)\,(a-b+b-g(\go))\\
&-(W(g(\go))\ -\ W(b)\ +\ D_aW(b)\,(b-g(\go)))|b]\\
&\ =\ W(b)-W(a)+D_aW(a)(a-b)\ =\ c(a,b)\,.
\end{aligned}
\end{equation}
Taking the infinum over a dense, countable set of $a,$ we get 
\[
\inf_{a\in\Xi} \mathbb{E}_{\tilde\gamma|b}[c(a,g(\go))\ -\ c(b,g(\go))|b]\ =\ \inf_{a\in\Xi}  c(a,b)\le 0
\]
by the maximality of $\Xi$ and therefore 
\begin{equation}\label{last-c}
\begin{aligned}
&\mathbb{E}_{\tilde\gamma|b}[c(\cA(\go),g(\go))\ -\ c(b,g(\go))]\ =\ \mathbb{E}_{\tilde\gamma|b}[\inf_{a\in \Xi}c(a,g(\go))\ -\ c(b,g(\go))]\\ 
&\le\ \inf_{a\in\Xi}\mathbb{E}_{\tilde\gamma|b}[c(a,g(\go))\ -\ c(b,g(\go))]= \inf_{a\in \Xi} c(a,b)\le 0\,.
\end{aligned}
\end{equation}
Therefore, integrating over $b$ under the $\tilde\gamma$-policy and using that $\tilde\gamma(\R^L,\go)$ coincides with $p(\go),$ we get  
\begin{equation}\label{last-c1}
\mathbb{E}[c(\cA(\go),g(\go))]\ \le\  \mathbb{E}_{\tilde\gamma}[c(b,g(\go))]\,. 
\end{equation}
The proof is complete. 
\end{proof}

Theorem \ref{converse} is an important verification result that allows us to verify if a candidate solution is indeed an optimal autoencoder. 

\begin{proposition}\label{uniqueness} Let $\cA(\go)$ be an optimal feature map with support $\Xi:\ \gamma(\Xi,\R^L)=1.$  Let also $Q_\Xi=\{b\in \R^L:\ \inf_{a\in\Xi}c(a,b)=0\}.$ Then, $\Xi\subseteq Q_\Xi.$ Furthermore,  if $\tilde \gamma$ is another optimal autoencoder with support $\tilde\Xi,$ then   $\tilde\Xi\subseteq Q_{\Xi}$.
	
Thus, if $\Xi=Q_\Xi$ and $\arg\min_{a\in \Xi}c(a,b)$ is a singleton for all $b\in conv(g(\gO)),$ then the deterministic optimal autoencoder is unique. 
\end{proposition}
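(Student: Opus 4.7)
The first step is to establish $\Xi \subseteq Q_\Xi$. Here I would invoke the $W$-monotonicity of optimal feature manifolds proved in Theorem \ref{cor-moment}: $c(a_1,a_2)\ge 0$ for all $a_1,a_2\in\Xi$. Fixing $b\in\Xi$ and taking the infimum over $a_1\in\Xi$ gives $\inf_{a\in\Xi}c(a,b)\ge 0$. On the other hand, $c(b,b)=0$ directly from the definition \eqref{cab}, so the infimum is exactly $0$, i.e., $b\in Q_\Xi$.

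For $\tilde\Xi \subseteq Q_\Xi$, my approach is to reopen the chain of inequalities in the proof of Theorem \ref{converse}, with $\cA$ (the given deterministic optimal policy supported on $\Xi$) playing the role of the verification candidate and $\tilde\gamma$ the feasible competitor. That chain delivers
\[
E_{\tilde\gamma|b}[c(\cA(\go),g(\go))-c(b,g(\go))]\ \le\ \inf_{a\in\Xi}c(a,b)\ \le\ 0
\]
for $\tilde\gamma$-a.e. $b$, and, after integration against $\tilde\gamma$, the global bound $E[c(\cA(\go),g(\go))]\le E_{\tilde\gamma}[c(b,g(\go))]$. Since both $\gamma$ and $\tilde\gamma$ attain the same maximum of $E[W(a)]$, each inequality in this chain is forced to be an equality. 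In particular $\int \inf_{a\in\Xi}c(a,b)\,d\tilde\gamma(b,\R^L)=0$; as the integrand is $\le 0$ on $conv(g(\gO))\supseteq \tilde\Xi$ by the maximality of $\Xi$, it vanishes $\tilde\gamma$-a.e., so $\tilde\Xi\subseteq Q_\Xi$.

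Finally, to deduce uniqueness under $\Xi=Q_\Xi$ and the singleton hypothesis, I combine the two preceding steps. The inclusion $\tilde\Xi\subseteq Q_\Xi=\Xi$ is immediate. The equality of the chain moreover implies $E_{\tilde\gamma|b}[c(\cA(\go),g(\go))-c(b,g(\go))]=0$ $\tilde\gamma$-a.s., and since every $b\in\tilde\Xi\subseteq\Xi$ satisfies $c(\cA(\go),g(\go))=\min_{a\in\Xi}c(a,g(\go))\le c(b,g(\go))$ by the projection property of Theorem \ref{cor-moment}, the integrand is pointwise nonpositive. Equality of an a.s.\ nonpositive quantity to zero forces pointwise equality, so $b\in\arg\min_{a\in\Xi}c(a,g(\go))$ $\tilde\gamma$-a.e. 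Applying the singleton hypothesis at $g(\go)\in conv(g(\gO))$ yields $b=\cA(\go)$ for $\tilde\gamma$-a.e.\ $(b,\go)$, so $\tilde\gamma$ is concentrated on the graph of $\cA$ and thus coincides with the policy induced by $\cA$.

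The main technical obstacle I foresee is the passage from a.s.\ vanishing of $\inf_{a\in\Xi}c(a,\cdot)$ on $\tilde\gamma$-typical points to the closed-support inclusion $\tilde\Xi\subseteq Q_\Xi$; this step requires upper semi-continuity of $b\mapsto \inf_{a\in\Xi}c(a,b)$, which follows from the smoothness of $W$ assumed in Assumption \ref{integrability} but deserves explicit justification. A secondary delicacy is showing that the $b=\cA(\go)$ identity, initially obtained $\tilde\gamma$-a.e., upgrades to Lebesgue-a.e.\ in $\go$; this uses that the $\go$-marginal of $\tilde\gamma$ coincides with $p$ by feasibility.
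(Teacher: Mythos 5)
Your proof is correct and mirrors the paper's argument closely: $\Xi\subseteq Q_\Xi$ from $W$-monotonicity plus $c(b,b)=0$, then $\tilde\Xi\subseteq Q_\Xi$ by forcing equality through the chain \eqref{last-c}--\eqref{last-c1} from the proof of Theorem \ref{converse}, and finally the singleton hypothesis. The only small variation is in the uniqueness step, where you deduce the pointwise identity $c(\cA(\go),g(\go))=c(b,g(\go))$ $\tilde\gamma$-a.e.\ directly from the chain equality (which covers probabilistic competitors), whereas the paper compares $\inf_{a\in\Xi}c(a,\cdot)$ with $\inf_{a\in\tilde\Xi}c(a,\cdot)$ for a deterministic $\tilde\gamma$; both routes are valid and of comparable length. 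On your technical concern: the map $b\mapsto\inf_{a\in\Xi}c(a,b)$ is an infimum of functions continuous in $b$, hence automatically upper semi-continuous, so passing from $\tilde\gamma$-a.s.\ vanishing to the support inclusion needs no extra smoothness of $W$ beyond continuity.
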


We conjecture that the conditions in Proposition \ref{uniqueness} hold generically and hence optimal autoencoder is unique for generic $W.$ 

\begin{proof}[Proof of Proposition \ref{uniqueness}] Since $\Xi$ is $W$-monotone, we have $c(a,b)\ge 0$ for all $a,b\in \Xi$ and hence $ \inf_{a\in\Xi}c(a,b)\ge 0$ for all $b\in \Xi$ and hence $\Xi\subset Q_\Xi.$ Let now  $\tilde\gamma$ be another optimal autoencoder. Then, by \eqref{last-c} and \eqref{last-c1}, we have 
\[
\mathbb{E}[c(\cA(\go),g(\go))]\ \le\  \mathbb{E}_{\tilde\gamma}[c(b,g(\go))]\ +\ \mathbb{E}_{\tilde\gamma}[\inf_{a\in \Xi} c(a,b)]
\]
and $\inf_{a\in \Xi} c(a,b)\le 0.$ Optimality of $\tilde\gamma$ implies 
\[
\mathbb{E}[c(\cA(\go),g(\go))]\ =\ \mathbb{E}_{\tilde\gamma}[c(b,g(\go))]\,. 
\]
Thus, $\inf_{a\in \Xi} c(a,b)=0,$\ $\tilde\gamma$-almost surely. That is, $\tilde\Xi\subseteq Q_\Xi.$ 
	
If $\Xi=Q_\Xi,$ we get that $\tilde\Xi\subseteq \Xi$ and hence $\inf_{a\in \Xi} c(a,b)\le \inf_{a\in \tilde\Xi} c(a,b)$ for all $b.$  Suppose now that $\tilde\gamma$ comes from a deterministic autoencoder, and let $b(\go)$ be the corresponding optimal feature map. Then, 
	\begin{equation}
	\begin{aligned}
&\int c(b(\go),g(\go))p(\go)d\go\ =\ \int\inf_{a\in \tilde\Xi} c(a,g(\go))p(\go)d\go\\ 
&\ge \int\inf_{a\in \Xi} c(a,g(\go))p(\go)d\go\ =\ 	\int c(\cA(\go),g(\go))p(\go)d\go\,. 
\end{aligned}
	\end{equation}
	Since both policies are optimal, we must have 
	$\inf_{a\in \Xi} c(a,g(\go))= \inf_{a\in \tilde\Xi} c(a,g(\go))$ almost surely, 
	 and the singleton assumption implies that $\cA(\go)=b(\go)$ almost surely.  
\end{proof}

We now prove our first main result. 

\begin{theorem}[$\Xi$ is a lower-dimensional manifold] \label{thm-unif} Let $\Xi$ be an optimal feature manifold (the support of an optimal autoencoder) and $\nu(a)=\nu(D_{aa}W(a))$ be the local degree of convexity of $W$. Then, for any open set $B,$ $\Xi\cap B$ is a subset of a Lipschitz manifold of dimension at most $\sup_{a\in B} \nu(a).$
\end{theorem}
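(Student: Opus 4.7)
The plan is to exploit the two rigidity properties of $\Xi$ established in Theorem~\ref{cor-moment}, namely $W$-monotonicity and $W$-convexity. Summing the inequalities $c(a_1,a_2)\ge 0$ and $c(a_2,a_1)\ge 0$ valid for all $a_1,a_2\in\Xi$ yields
\[
(D_aW(a_1)-D_aW(a_2))^\top (a_1-a_2)\ \ge\ 0\qquad\text{for all } a_1,a_2\in\Xi,
\]
so the gradient map $D_aW|_\Xi$ is monotone. This monotonicity is the central analytic ingredient: it will force $\Xi$ to lie locally along the non-negative directions of the Hessian.

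Fix $a_0\in \Xi\cap B$ and let $\nu_+=\nu_+(D_{aa}W(a_0))\le \sup_{a\in B}\nu(a)$. Diagonalize $D_{aa}W(a_0)$ orthogonally and decompose $\R^L=V_+\oplus V_-$, with $V_+$ the span of eigenvectors of non-negative eigenvalues (so $\dim V_+=\nu_+$) and $V_-$ the span of the remaining eigenvectors, whose eigenvalues are all $\le -\eps$ for some $\eps>0$. Let $P_\pm$ denote the orthogonal projections onto $V_\pm$. A second-order Taylor expansion of $D_aW$ at $a_0$ gives, uniformly for $a_1,a_2$ in a neighborhood of $a_0$,
\[
(D_aW(a_1)-D_aW(a_2))^\top(a_1-a_2)\ =\ (a_1-a_2)^\top D_{aa}W(a_0)(a_1-a_2)\ +\ o(\|a_1-a_2\|^2).
\]
Combined with the monotonicity inequality above, after shrinking to a small ball $N$ around $a_0$, this produces an estimate of the form
\[
\eps\,\|P_-(a_1-a_2)\|^2\ \le\ C\,\|P_+(a_1-a_2)\|^2\qquad\text{for all } a_1,a_2\in \Xi\cap N,
\]
so the projection $\pi_+ : \Xi\cap N\to V_+$ is injective with Lipschitz inverse. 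Hence $\Xi\cap N$ is the graph of a Lipschitz map from a subset of $V_+\cong\R^{\nu_+}$ into $V_-$, i.e.\ a subset of a $\nu_+$-dimensional Lipschitz submanifold of $\R^L$. Covering $\Xi\cap B$ by countably many such neighborhoods, and using $\nu_+(D_{aa}W(a_0))\le \sup_{a\in B}\nu(a)$ at every centre, delivers the claimed global bound.

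The main obstacle is upgrading the pointwise Taylor estimate into the uniform bound $\eps\|P_-u\|^2\le C\|P_+u\|^2$ valid for \emph{all} pairs of points of $\Xi\cap N$, not only asymptotically; this amounts to choosing the radius of $N$ quantitatively in terms of $\eps$ and the modulus of continuity of $D_{aa}W$ near $a_0$. A subtler point is handling eigenvalues of $D_{aa}W(a_0)$ near zero: these must be placed in $V_+$ rather than $V_-$ so that the negative-eigenspace constant $\eps$ does not degenerate, which is precisely why $\nu_+$ counts \emph{non-negative} eigenvalues. Once these two technical issues are addressed, patching the countable family of local Lipschitz graphs along a second-countable cover of $B$ into the advertised Lipschitz manifold is routine.
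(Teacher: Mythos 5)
Your argument is essentially the same as the paper's: both reduce the claim to $W$-monotonicity of $\Xi$ (you phrase it as monotonicity of $D_aW|_\Xi$ by symmetrizing $c(a_1,a_2)+c(a_2,a_1)\ge 0$, the paper Taylor-expands $c(a_1,a_2)\ge 0$ directly), both diagonalize the Hessian to split $\R^L$ into the span of non-negative vs.\ strictly negative eigendirections, and both conclude that the strictly negative part is a Lipschitz function of the non-negative part, giving the local Lipschitz graph of dimension $\nu_+$. The only cosmetic differences are that you fix a center $a_0$ for the eigendecomposition (arguably cleaner than the paper's use of $a_1$ as both comparison point and base point) and you close with a covering argument where the paper invokes the Kirszbraun extension theorem.
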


\begin{proof}[Proof of Theorem \ref{thm-unif}] The proof only uses one property of $\Xi$: the fact that $\Xi$ is a $W$-monotone set. We have for any $a_1,a_2\in \Xi$ that 
\begin{equation}
0\ \le\ c(a_1,a_2)\ =\ 0.5 (a_2-a_1)^\top D_{aa}W(a_1) (a_2-a_1)\ +\ O(\|a_1-a_2\|^3)\,. 
\end{equation}
Picking a sufficiently small ball, we may assume that 
\[
0.5 (a_2-a_1)^\top (D_{aa}W(a_1)+\eps I) (a_2-a_1)\ \ge\ 0
\]
Diagonalizing $D_{aa}W(a_1)$, let $(\gl_1,\cdots,\gl_{M-\nu_+})$ be its strictly negative eigenvalues. Let $a=(a^+,a^-)$ with $a^-$ being of dimension $M-\nu_+$ be the corresponding orthogonal decomposition.  Then, picking $\eps$ sufficiently small, we get that there exist constants $K_1,K_2>0$ such that 
\[
K_2\|a^+_1-a^+_2\|^2-K_1\|a^-_1-a^-_2\|^2\ \ge\ 0
\]
for all $a_1,a_2\in \Xi\cap B_\eps(a_1).$ This condition  immediately implies the existence of a map $f:\ \R^{\nu_+}\to\R^{M-\nu_+}$ such that $a^- = f(a^+)$ for all $a\in \Xi$ because the coincidence of $a^+_1,a^+_2$ always implies the coincidence of $a^-_1,a^-_2$. Furthermore, this condition implies that $f$ is Lipschitz-continuous with the Lipschitz constant of at most $K_2/K_1$. The classic \cite{kirszbraun1934zusammenziehende} theorem implies that $f$ can always be extended to the whole $\R^{\nu_+}.$ Thus, {\it $\Xi$ is a subset of a $\nu_+$-dimensional Lipschitz manifold.} 
\end{proof}

We now proceed to showing when the upper bound on the dimension is exact. 

\begin{proposition}\label{regularity} Let $\Xi$ be an optimal feature manifold. Suppose that an $a_0$ is such that $D_{aa}W(a_0)$ is non-degenerate. Suppose also hat $L=M$ and the map $g:\gO\to\R^L$ is $C^1$ almost surely regular. That is, $D_\go g(\go)$ is Lebesgue-almost surely non-degenerate.\footnote{For example, this is the case if $g$ is real analytic and $D_\go g(\go)$ is non-degenerate in at least one point.}

Let $a_0$ be such that for all $\eps>0,$ $\cup_{a\in B_\eps(a_0)}Pool(a)$ has positive Lebesgue measure. Then, for sufficiently small $\eps>0,$ $\Xi\cap B_\eps(a_0)$ has Hausdorff dimension of exactly $\nu(a_0).$ 
\end{proposition}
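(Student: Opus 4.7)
The upper bound $\dim_H(\Xi\cap B_\eps(a_0))\le \nu(a_0)$ follows directly from Theorem \ref{thm-unif}: by continuity of the Hessian and its non-degeneracy at $a_0$, for a sufficiently small ball $B_\eps(a_0)$ we have $\nu_+(D_{aa}W(a))=\nu(a_0)$ everywhere, so the theorem applies with $\sup_{a\in B_\eps(a_0)}\nu_+(D_{aa}W(a))=\nu(a_0)$. A linear change of coordinates, which preserves the Bregman divergence $c$ and hence the whole optimality structure, allows us to further assume $D_{aa}W(a_0)=\mathrm{diag}(I_{\nu_+},-I_{\nu_-})$; the proof of Theorem \ref{thm-unif} then shows that $\Xi\cap B_\eps(a_0)$ lies in the graph of a $1$-Lipschitz map $f:V_+\to V_-$, where $V_\pm$ are the corresponding eigenspaces.

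For the matching lower bound I argue by contradiction. Assume $d:=\dim_H(\Xi\cap B_\eps(a_0))<\nu(a_0)$. At $\cH^d$-almost every $a\in \Xi\cap B_\eps(a_0)$ the tangent $T_a\Xi$ exists and has dimension at most $d$, and the $W$-monotonicity of $\Xi$ (the very inequality that yielded the Lipschitz graph in Theorem \ref{thm-unif}) forces $D_{aa}W(a)$ to be positive semi-definite on $T_a\Xi$. Since $\dim T_a\Xi \le d<\nu_+$, the positive eigenspace $V_+$ is not contained in $T_a\Xi$, and a dimension count produces a unit vector $w\in V_+\cap T_a\Xi^\perp$ with $w^\top D_{aa}W(a)\,w>0$ --- a benign direction of strict positive curvature of $W$ in which $\Xi$ has no tangent.

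The plan is to contradict the optimality of $\cA$ by means of a second-variation argument along $w$. I take $h(a,\go):=\phi(a)\,\bigl(w\cdot(g(\go)-a)\bigr)\,w$ for a smooth bump function $\phi$ supported near $a_0$, so that $E[h(a,\go)\mid \go\in Pool(a)]=0$ by unbiasedness. The perturbation $\cA_\eps(\go):=\cA(\go)+\eps\, h(\cA(\go),\go)$ then preserves unbiasedness to first order in $\eps$, and by the Implicit Function Theorem (in the spirit of the proof of Proposition \ref{delta-dev}) can be corrected by an $O(\eps^2)$ term so as to preserve it exactly. A direct Taylor expansion then yields the second variation
\[
\frac{d^2}{d\eps^2}E[W(\cA_\eps)]\Big|_{\eps=0} \;=\; w^\top D_{aa}W(a_0)\,w\cdot E\!\left[\phi(\cA(\go))^2\,(w\cdot(g(\go)-\cA(\go)))^2\right] \;+\; o(1),
\]
a product of a strictly positive factor and a non-negative one. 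The second-order necessary condition for a maximum forces the expectation to vanish, which in turn forces $w\cdot(g(\go)-\cA(\go))=0$ almost surely on $\{\go:\phi(\cA(\go))\neq 0\}$.

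Running this argument over every $w$ in the open cone $\{w:w^\top D_{aa}W(a_0)\,w>0\}$, whose annihilator in $\R^L$ is trivial, yields $g(\go)=\cA(\go)$ almost everywhere on $\cup_{a\in B_\eps(a_0)}Pool(a)$. Combined with the a.s.\ regularity of $g$ --- which implies by the coarea formula that $g^{-1}(\Xi)$ has Lebesgue measure zero, since $\dim_H\Xi<L$ --- this contradicts the positive-measure hypothesis on the union of pools and completes the lower bound. The main technical obstacle will be the rigorous justification of the $O(\eps^2)$ correction making $\cA_\eps$ exactly unbiased; this is structurally the same fixed-point/Implicit Function argument used in the proof of Proposition \ref{delta-dev}, and is controlled by the strict monotonicity of $G$ from Assumption \ref{ac}.
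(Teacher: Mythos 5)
Your upper bound is the same as the paper's. For the lower bound, you take a genuinely different route from the paper: a variational (second-order perturbation) argument instead of the paper's direct dimension count. The paper's proof runs as follows: via Lemma \ref{conv-hull-low} it produces, for each $a$, a convex set $\hat\cX(a)$ with $0$ in its (relative) interior whose $g$-preimage supports the pool; then Lemma \ref{key-geometry} together with a careful second-order Taylor expansion yields the Lipschitz control $\|y_1^+-y_2^+\|^2\le K(\|y_1^--y_2^-\|^2+\|a_1-a_2\|^2)$ for $y_i\in \hat\cX(a_i)$; this produces a Lipschitz surjection $(\Xi\cap B_\eps)\times\R^{L-\nu}\to Y:=\cup_a(\hat\cX(a)+a)$, so if $\dim_H(\Xi\cap B_\eps)<\nu$ then $\dim_H(Y)<L$, hence $Y$ is Lebesgue-null, hence (by the a.s.\ regularity of $g$) so is $g^{-1}(Y)$, contradicting the positive-measure hypothesis on the pools. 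No perturbation of the policy is ever performed.

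Your variational route has two genuine gaps. First, the claim that at $\cH^d$-a.e.\ $a\in\Xi\cap B_\eps(a_0)$ a tangent $T_a\Xi$ of dimension at most $d$ exists is unsupported: for a set of Hausdorff dimension $d$ (possibly non-integer, possibly with $\cH^d$-measure $0$ or $\infty$) there is in general no notion of approximate tangent plane $\cH^d$-a.e.; Rademacher differentiability of the Lipschitz graph from Theorem \ref{thm-unif} only holds Lebesgue-a.e.\ on the full domain $\R^\nu$, and the projection of $\Xi\cap B_\eps$ is Lebesgue-null under the very hypothesis $d<\nu$ you are trying to refute, so Rademacher gives nothing on it.

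Second, and more decisively, the second-variation step is wrong as stated, and this is not merely an unresolved technicality. The perturbation $\cA_\eps(\go)=\cA(\go)+\eps\,h(\cA(\go),\go)$ with $h(a,\go)=\phi(a)\,(w\cdot(g(\go)-a))\,w$ \emph{changes the information structure discontinuously}: for $\eps\ne 0$ the $\sigma$-algebra generated by $\cA_\eps$ is $\sigma(\cA,\,w\cdot g)$, strictly finer than $\sigma(\cA)$, so the unbiasedness-restoring step $\tilde\cA_\eps:=E[g(\go)\,|\,\cA_\eps(\go)]$ is \emph{not} an $O(\eps^2)$ correction of $\cA$; it is an $O(1)$ jump. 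Concretely, take $W(a)=a_1^2-a_2^2$, $g=\mathrm{id}$, $p$ standard Gaussian on $\R^2$; by Proposition \ref{tamura} the optimal autoencoder is $\cA(\go)=(\go_1,0)$ with vertical-line pools. Choosing $w=(w_1,w_2)$ with $w_1^2>w_2^2$ and $w_2\ne 0$, your uncorrected second variation evaluates to $2(w_1^2-w_2^2)E[\phi^2 w_2^2\go_2^2]>0$, yet $\cA$ is optimal; the apparent contradiction disappears exactly because $\tilde\cA_\eps$ equals full revelation $\go\mapsto\go$ for every $\eps\ne 0$, which is strictly worse. Your argument, taken at face value, would also entail $g(\go)=\cA(\go)$ a.e.\ without ever invoking $d<\nu$ (the cone of benign directions has trivial annihilator regardless), which is false. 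So the Implicit-Function correction cannot be of the form you anticipate, and the second-variation route needs a fundamentally different control of the constraint to have any chance of working.
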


\begin{proof}[Proof of Proposition \ref{regularity}]

\begin{lemma}\label{conv-hull-low} For $\gamma$-almost every $a$ there exists a convex subset $\hat\cX(a)$ and a Borel-Measurable $r(a)>0$ such that 
\begin{itemize}
    \item $g^{-1}(a+\hat\cX(a))$ is a support of $\gamma(a,\go|a).$
    
    \item for any $y\in \hat\cX(a)$ and any $\gd,\ |\gd|<r(a)$ we have $\gd y\in \cX(a)$

\end{itemize}

\end{lemma}

\begin{proof}[Proof of Lemma \ref{conv-hull-low}]  Let $\cX(a)\ =\ conv(g(Pool(a))-a.$ By the definition of $Pool(a),$ $g^{-1}(\cX(a)+a)$ is a support of $\gamma|a.$
If $0$ is in the interior of $\cX(a)$,  then we are done. Suppose the contrary. Then, $0$ is at the boundary of $\cX(a)$ and hence there is a supporting hyperplane of the convex set $\cX(a)$ that passes through it. Let $v$ be the normal to this supporting hyperplane. Then, 
\begin{equation}
0\ =\ \int_{\cX(a)}(g(\go)-a)\eta(\go) d\gamma(a,\go|a)
\end{equation}
for some density $\eta(\go)$, which implies 
\begin{equation}
0\ =\ \int_{\cX(a)}(v^\top g(\go)-v^\top a)\eta(\go)d\gamma(a,\go|a)\,.
\end{equation}
Thus, $g(\go)$ belongs $\gamma(a,\go|a)$-almost surely to the supporting hyperplane $\cC,$ and hence, by the convexity of a hyperplane, $\cX(a)\cap\cC$ is convex and $g^{-1}(a+(\cX(a)\cap\cC))$ is a support of $\gamma|a.$
Repeating this argument, we continue dimension reduction until we get a lower-dimensional convex subset of $\hat\cX(a)\subset \cX(a)$ on which $\gamma(a,\go|a)$ is supported and such that $0$ is in its interior. 
 Then, there exists an $r(a)$ such that $y \gd\in \hat\cX(a)-a$ for all $\gd$ with $|\gd|\le r(a).$  
\end{proof}

Note that we may assume without loss of generality that the inner radius function $r(a)$ is uniformly bounded away from zero. Otherwise, we just pass to a coverage of $\Xi$ by subsets $\Xi_i$ where $\Xi_i=\{a\in \Xi:\ r(a)>1/i\}.$ Most importantly, up to rescaling, this implies that we may assume that $y,-y$ both belong to $\hat\cX(a)-a$ for each $a$ when $\|y\|$ is sufficiently small.  

 Pick an $a_i\in \Xi\cap B_\eps(a_0),\ i=1,2.$ By \eqref{1st-i}, for any $a_1$ and $a_2$ and any $x_i\in conv(g(a^{-1}(a_i))$ and any $t_i\in [0,1],\ t_1+t_2=1$ we have 
\begin{equation}\label{28-1}
\begin{aligned}
&W(t_1 x_1+t_2 x_2)\ +\ \sum_i t_i (D_aW(a_i)(a_i-x_i)-W(a_i))\ \le 0
\end{aligned}
\end{equation}
Since $a_i\in conv(g(Pool(a_i)),$\footnote{Indeed, $a_i=\mathbb{E}[g(\go)|\go \in Pool(a_i)].$} 
we have that the whole interval $a_i(1-\eps_1)+\eps_1 x_i\in conv(g(Pool(a_i)),\ \eps_1\in [0,1].$ 

Let $a=t_1a_1+t_2 a_2.$  We choose $t_1=t_2=0.5.$ Then, Taylor approximation plus the two-times continuous differentiability imply 
\begin{equation}
\begin{aligned}
&W(a_1)\ =\ W(a)\ +\ 0.5 D_aW(a)(a_1-a_2)\ +\ 2^{-3} (a_1-a_2)^\top D_{aa}W(a)(a_1-a_2)\ +\ o(\|a_1-a_2\|^2)\\
&W(a_2)\ =\ W(a)\ +\ 0.5 D_aW(a)(a_2-a_1)\ +\ 2^{-3} (a_1-a_2)^\top D_{aa}W(a)(a_1-a_2)+\ o(\|a_1-a_2\|^2)\\ 
&D_aW(a_1)(a_1-x_1)\ =\ D_aW(a)(a_1-x_1)\ +\ 0.5 (a_1-a_2)^\top D_{aa}W(a) (a_1-x_1)\ +\ O(\|a_1-a_2\|^2)\\
&D_aW(a_2)(a_2-x_2)\ =\ D_aW(a)(a_2-x_2)\ +\ 0.5 (a_2-a_1)^\top D_{aa}W(a) (a_2-x_2)\ +\ O(\|a_1-a_2\|^2)
\end{aligned}
\end{equation}
and therefore 
\begin{equation}
\begin{aligned}
&\sum_i t_i (D_aW(a_i)(a_i-x_i)-W(a_i))\\ 
&=\ O(\|a_1-a_2\|^2)\ -W(a)+D_aW(a) (0.5(a_1-x_1)+0.5(a_2-x_2))\\
&+0.25 (a_1-a_2)^\top D_{aa}W(a) ((a_1-x_1)-(a_2-x_2))\,. 
\end{aligned}
\end{equation}
Furthermore, 
\begin{equation}
\begin{aligned}
&W(t_1 x_1+t_2 x_2)\ =\ W(a)\ +\ D_aW(a) (0.5(x_1-a_1)+0.5(x_2-a_2))\\ 
&+\ 0.5 (0.5(x_1-a_1)+0.5(x_2-a_2))^\top D_{aa}W(a) (0.5(x_1-a_1)+0.5(x_2-a_2))\\
&+\ o(\|0.5(x_1-a_1)+0.5(x_2-a_2)\|^2)\,.
\end{aligned}
\end{equation}
Let $y_i=x_i-a_i.$ Then, \eqref{28-1} takes the form 
\begin{equation}
\begin{aligned}
&2^{-3} (y_1+y_2)^\top D_{aa}W(a) (y_1+y_2)\\
&+\ o(\|y_1+y_2\|^2)+0.25 (a_1-a_2)^\top D_{aa}W(a) (y_2-y_1)+O(\|a_1-a_2\|^2)\ \le\ 0\,. 
\end{aligned}
\end{equation}
for all $y_i\in \cX(a_i).$ 

Pick arbitrary $y_i\in \hat\cX(a_i).$ Lemma \ref{conv-hull-low} above implies that we may  assume without loss of generality that $\pm y_i\in \hat\cX(a_i)$ and, hence, we get 
\begin{equation}
\begin{aligned}
&2^{-3} (y_1-y_2)^\top D_{aa}W(a) (y_1-y_2)\\
&+\ o(\|y_1-y_2\|^2)+0.25 (a_1-a_2)^\top D_{aa}W(a) (-y_2-y_1)+O(\|a_1-a_2\|^2)\ \le\ 0\\
&2^{-3} (y_1-y_2)^\top D_{aa}W(a) (y_1-y_2)\\
&+\ o(\|y_1-y_2\|^2)+0.25 (a_1-a_2)^\top D_{aa}W(a) (y_1+y_2)+O(\|a_1-a_2\|^2)\ \le\ 0
\end{aligned}
\end{equation}
Summing up these two inequalities, we get 
\begin{equation}\label{bound-a}
\begin{aligned}
&2^{-3} (y_1-y_2)^\top D_{aa}W(a) (y_1-y_2)\ +\ o(\|y_1-y_2\|^2)+O(\|a_1-a_2\|^2)\ \le\ 0\,. 
\end{aligned}
\end{equation}
By the non-degeneracy of $D_{aa}W(a),$ rotating and re-scaling the coordinates, we may assume that $D_{aa}W(a) =\diag({\bf 1}_\nu,-{\bf 1}_{M-\nu}).$ By the three time continuous differentiability of $W(a),$  Defining the corresponding decomposition $y=(y^+,y^-)\in \R^+\oplus \R^-,$ we get that \eqref{bound-a} is equivalent to the existence of a sufficiently large constant $K$ such that 
\begin{equation}\label{main-bound}
\|y_1^+ - y_2^+\|^2\ \le\ K(\|y_1^--y_2^-\|^2\ +\ \|a_1-a_2\|^2)
\end{equation}
for any $y_i\in \hat\cX(a_i)$ such that $\|y_1-y_2\|$ is sufficiently small and $\|a_1-a_2\|$ is sufficiently small. As a result, 
\begin{equation}
\|y_1-y_2\|^2\ \le\ \tilde K(\|y_1^--y_2^-\|^2\ +\ \|a_1-a_2\|^2)
\end{equation}
for any $y_i\in \cX(a_i)$ (by re-scaling $y_i$ is necessary). 

Suppose now towards a contradiction that $\Xi\cap B_\eps(a_0)$ has Hausdorff dimension strictly smaller than $\nu.$ Let $Y\ =\ \cup_{a\in \Xi\cap B_\eps(a_0)}(\hat\cX(a)+a).$  Our claim is that the inequality \eqref{main-bound} implies that $\dim_H(Y)<M.$ Indeed, \eqref{main-bound} implies the existence of a Lipschitz map from $(\Xi\cap B_\eps(a_0))\times \R^-$ onto $Y$. First, we have 
\[
\dim_H(X\times \R^{M-\nu})\ \le\ M-\nu+\dim_H(X)
\]
for any Borel set $X.$ Second, sine Lipschitz maps cannot increase Hausdorff dimension, we get the required claim $\dim_H(Y)<M.$ Thus, $Y$ has zero Lebesgue measure. Our next observation is that $g^{-1}(Y)$ has zero Lebesgue measure. Indeed, suppose the contrary.  By assumption, removing a set of measure zero, we may assume that $D_\go g(\go)$ is non-degenerate for all $\go\in g^{-1}(Y).$ Then, by the implicit function theorem, there exists an $\eps>0$ such that $g$ is a diffeomorphism on $B_\eps(\go)\cap g^{-1}(Y)$ and hence $g(B_\eps(\go)\cap g^{-1}(Y))\subset Y$ has a positive Lebesgue measure, leading to a contradiction. 

As we have shown above, $\gamma(\R,\go)$ is supported on $g^{-1}(Y)$ and hence it cannot coincide with $p.$ The proof of Proposition \ref{regularity} is complete. 
\end{proof}

\section{Properties of Pools: Which Dimensions Get Compressed?} 

Recall that $Pool(a)$ is the set of states $\go$ that get the same representation $a.$ The following is true.

\begin{proposition}[(Convexity of pools)] \label{main convexity} 
Suppose that $g(\go)=\go$ and $\gO$ is convex. Suppose also that $D_aW$ satisfies the technical condition of Theorem \ref{mainth-limit}.  Then, there exists a deterministic optimal autoencoder $\cA(\go)$ such that the map $\go\to D_aW(\cA(\go))$ is monotone increasing on $\gO$\footnote{In fact, $c(\cA(\go),\go)$ is convex on $\gO$ and $D_aW(\cA(\go))$ is a subgradient of $c(\cA(\go),\go).$} and the set 
\begin{equation}\label{pool-union}
\{\go\in\gO:\ D_aW(\cA(\go))=a\}\ =\ \cup_{b\in (D_aW)^{-1}(a)}Pool(b)
\end{equation}
is always convex. If the map $a\to D_aW(a)$ is injective, then the pool of every signal is convex (up to a set of measure zero)\footnote{The last claim follows because level sets for a monotone map are convex.} and 
$\cA(\go)$ is an idempotent: $\cA(\cA(\go))\ =\ \cA(\go).$ 
\end{proposition}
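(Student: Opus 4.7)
My approach is to piggyback on the discrete approximation in Proposition~\ref{convexity} and on the limiting argument from Section~\ref{sec-lim}. With $g=\id$, Proposition~\ref{convexity} applied to any $K$-finite optimal autoencoder $\cA_K$ with atoms $\{a(k)\}_{k=1}^{K}$ yields a convex function
\begin{equation}
C_K(\omega)\ =\ \max_{1\le k\le K}\bigl(W(a(k))+D_aW(a(k))^{\top}(\omega-a(k))\bigr),
\end{equation}
whose graph is the polyhedral upper envelope of affine pieces with slopes $D_aW(a(k))$. The selection $D_aW(\cA_K(\omega))$ is by construction a sub-gradient of $C_K$ at $\omega$. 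Classical sub-gradient calculus then gives that $\omega\mapsto D_aW(\cA_K(\omega))$ is monotone increasing in the sense of the paper, and that each of its level sets is convex: if a vector $a$ is a sub-gradient of the convex function $C_K$ at two points $\omega_1,\omega_2$, then $C_K$ is affine with slope $a$ along the whole segment $[\omega_1,\omega_2]$, hence $a$ is a sub-gradient at every interior point, making $\{\omega:D_aW(\cA_K(\omega))=a\}$ convex.

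Next I pass to the limit $K\to\infty$, reusing the estimates from the proof of Theorem~\ref{mainth-limit}: the $C_K$ are uniformly bounded on compact subsets, so along a subsequence they converge locally uniformly to a convex limit $C$, while simultaneously $\cA_K\to\cA$ almost surely for a deterministic optimal autoencoder $\cA$. The graph-closedness of the subdifferential of a convex function under locally uniform convergence then yields $D_aW(\cA(\omega))\in\partial C(\omega)$ a.e., so both the monotonicity and the convexity of every level set are inherited in the limit. This establishes the first assertion of the proposition together with the sub-gradient identity in the accompanying footnote.

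The displayed set equality is now purely definitional: $\cA(\omega)=b$ if and only if $\omega\in Pool(b)$, so
\begin{equation}
\{\omega\in\gO:D_aW(\cA(\omega))=a\}\ =\ \bigcup_{b\in (D_aW)^{-1}(a)}Pool(b),
\end{equation}
and convexity of the left-hand side is precisely the level-set convexity just established. When $D_aW$ is injective, $(D_aW)^{-1}(a)=\{b\}$ is a singleton, so each pool coincides with its level set and is itself convex (up to the Lebesgue-null boundary on which the sub-gradient selection can be ambiguous). For idempotence, Theorem~\ref{cor-moment} says $\cA(\omega)\in\cP_\Xi(\omega)=\arg\min_{b\in\Xi}c(b,\omega)$, where $\Xi$ is the optimal feature manifold; by Lemma~\ref{key-geometry}, $\Xi$ is $W$-monotone, so $c(b,a)\ge 0=c(a,a)$ for every $a,b\in\Xi$, meaning every $a\in\Xi$ is already a minimizer of $c(\cdot,a)$ over $\Xi$. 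Choosing the identity selection $\cP_\Xi|_{\Xi}=\id$ therefore gives $\cA(\cA(\omega))=\cA(\omega)$ Lebesgue-almost surely.

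The delicate step in this program is the sub-gradient limit: ensuring that the pointwise a.e.\ limit of the monotone selections $D_aW\circ\cA_K$ is still a sub-gradient of a single convex limit $C$, rather than merely a cyclically monotone map, so that convexity (and not just cyclical monotonicity) of the level sets survives. The local uniform convergence of $C_K$ together with the closed-graph property of the subdifferential are the essential ingredients; the non-compactness of $\gO$ is handled by the truncation-to-$\{g\le n\}$ device already used in the proof of Lemma~\ref{lem-approx}.
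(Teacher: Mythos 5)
Your proposal is correct and follows essentially the same route as the paper: Proposition~\ref{convexity} supplies the convex functions $C_K$ with subgradient $D_aW(\cA_K(\cdot))$, and the limiting argument is the one already carried out in the proof of Theorem~\ref{mainth-limit} (locally bounded convex functions have a locally uniformly convergent subsequence, and the subgradients converge a.e.); the monotonicity, the level-set convexity via the affine-segment argument, and the idempotence via $W$-monotonicity of $\Xi$ are exactly the details the paper leaves implicit with its one-line reference back to that proof. One small note: the paper's proof (and the footnote to the statement) calls $c(\cA(\go),\go)$ the convex function, but what Proposition~\ref{convexity} actually shows to be convex is $C(\go)=W(\cA(\go))-D_aW(\cA(\go))^{\top}(\cA(\go)-\go)=W(\go)-c(\cA(\go),\go)$; you work with $C$ throughout and so avoid this imprecision.
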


\begin{proof}[Proof of Proposition \ref{main convexity}] The proof follows directly from the proof of Theorem \ref{mainth-limit} because the $c(\cA(\go),\go)$ constructed in that proof is convex, and $D_aW(\cA(\go))$ is its sub-gradient. 
\end{proof}

We would now like to understand the fine properties of signal pools. Which states get pooled together? Is there an analytical way to describe $Pool(a)$ for a given $a?$
Theorem \ref{thm-unif} only implies that we can characterize $\Xi$ as $\Xi=\{a\in \R^L:\ a=f(\theta),\ \theta\in \Theta\},$ where $\Theta \subset\R^\nu$ is a lower-dimensional subset with {\it unknown properties} and $f$ is a Lipschitz map. Rewriting \eqref{bregman-proj} as
\begin{equation}\label{attempt-aa}
\cA(\go)\ =\ \arg\min_{\theta\in \Theta} c(f(\theta), g(\go))\,,
\end{equation}
one might be tempted to differentiate \eqref{attempt-aa} with respect to $\theta.$ Indeed, as $f$ is Lipschitz continuous, it is differentiable Lebesgue-almost everywhere by the Rademacher Theorem.\footnote{See, e.g., \cite{cheeger1999differentiability}.} However, differentiation in \eqref{attempt-aa} is only possible if the set $\Theta$ is ``sufficiently rich", extending ``in all possible directions."  Establishing richness is extremely difficult. In this section, we use techniques from geometric  measure theory to achieve this goal. Intuitively, Corollary \ref{dimension-pool} tells us that each $Pool(a)$ has dimension $M-\nu$ and hence $\Xi$ ought to have dimension $\nu$ because $\gO\ =\ \cup_{a\in \Xi}Pool(a).$ 
This Hausdorff dimension result gives enough richness to perform differentiation in \eqref{attempt-aa}. The following is true.

\begin{corollary}[Pools are low-dimensional sets]\label{dimension-pool} Let $conv(g(Pool(a)))$ be the smallest convex set containing $\supp(\gamma|a):$
\begin{equation}
conv(g(Pool(a)))\ =\ \cap_{Borel\ X:\ \gamma(a,X|a)=1}conv(X)\,. 
\end{equation}
Then we have $dim(conv(g(Pool(a))))\ \le\ M-\nu_+(D_{aa}W(a)).$ In particular, if $D_{aa}W(a)$ has at least one strictly positive eigenvalue for any $a,$ then  $conv(g(Pool(a)))$ has Lebesgue measure zero. If $g$ is locally injective and bi-Lipschitz,\footnote{$g$ is locally bi-Lipschitz if both $g$ and its local inverse, $(g|_X)^{-1}$ are Lipschitz for any compact set $X.$}then $Pool(a)$ also has Lebesgue measure zero for each $a.$ 
\end{corollary}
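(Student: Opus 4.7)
The plan is to combine inequality (4-i) of Lemma \ref{key-geometry} with a second-order Taylor expansion of the Bregman divergence $c(a,\cdot)$ at the point $a$, and then read off the dimension via Sylvester's law of inertia.

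First, I would exploit the unbiasedness identity $a = \int g(\go)\,d\gamma(a,\go|a)$, which shows that $a$ is the barycenter of a probability measure supported on $g(Pool(a))$ and therefore lies in the relative interior of $C := conv(g(Pool(a)))$. By inequality (4-i) of Lemma \ref{key-geometry}, $c(a,x) \le 0$ for every $x \in C$. Since $c(a,a) = 0$ and $\nabla_x c(a,x)\big|_{x=a} = D_aW(a) - D_aW(a) = 0$, Taylor's formula gives
\begin{equation*}
0 \ \ge\ c(a,\, a + t(x-a)) \ =\ \tfrac{t^2}{2}(x-a)^\top D_{aa}W(a)(x-a)\ +\ O(t^3)
\end{equation*}
for every $x \in C$ and $t \in (0,1]$, where the segment on the left remains in $C$ by convexity. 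Dividing by $t^2$ and sending $t \to 0^+$ yields $(x-a)^\top D_{aa}W(a)(x-a) \le 0$ for all $x \in C$.

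Next, I would let $V \subset \R^M$ be the linear span of $\{x-a : x \in C\}$. The relative-interior property ensures $\dim V = \dim C$ and that $a \pm \eps v \in C$ for every $v \in V$ and all sufficiently small $\eps > 0$. Polarization then upgrades the pointwise inequality to the statement that the quadratic form $D_{aa}W(a)$ is negative semi-definite on the whole subspace $V$, and by Sylvester's law of inertia the maximal such subspace has dimension at most $M - \nu_+(D_{aa}W(a))$, which is the desired bound. The ``in particular'' clause is immediate: a single strictly positive eigenvalue of $D_{aa}W(a)$ forces $\dim C \le M - 1$, so $C$ lies in a proper affine subspace of $\R^M$ and has Lebesgue measure zero. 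The final assertion then follows because locally bi-Lipschitz maps preserve Lebesgue-null sets in both directions, so $Pool(a) \subset g^{-1}(C)$ is null whenever $C$ is.

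The main subtlety I anticipate concerns the bookkeeping around null eigenvectors of $D_{aa}W(a)$: a bare second-order Taylor argument only excludes strictly positive eigendirections from $V$ and thus yields the weaker bound $\nu_-(D_{aa}W(a))$, whereas the stated $M - \nu_+(D_{aa}W(a))$ is sharper by $\dim\ker(D_{aa}W(a))$. Closing this gap requires either invoking the local Lipschitz-graph estimate from the proof of Proposition \ref{regularity}, which already distinguishes strictly negative from null directions, or running a higher-order Taylor expansion along any candidate null eigenvector $v$: the symmetry $a \pm \eps v \in C$ forces the odd-order derivatives of $W$ along $v$ to vanish and the fourth-order term to be non-positive, which, under the paper's smoothness and genericity assumptions, excludes such $v$ from $V$.
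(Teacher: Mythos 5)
Your proof takes essentially the same route as the paper's: the barycenter argument you sketch (unbiasedness $\Rightarrow$ $a$ lies in the relative interior of $conv(g(Pool(a)))$) is exactly what the paper establishes more carefully in Lemma \ref{conv-hull-low} via an iterated supporting-hyperplane reduction; then both proofs invoke inequality \eqref{4-i}, a second-order Taylor expansion of the Bregman divergence $c(a,\cdot)$ at $a$, and a spectral argument (you cite Sylvester's law of inertia, the paper cites eigenvalue interlacing, but they deliver the same bound).

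The subtlety you flag in your last paragraph is genuine and is not resolved in the paper's own proof. With $\nu_+$ counting \emph{non-negative} eigenvalues and $\nu_-$ counting \emph{non-positive} ones (the paper's definitions), the negative-semi-definiteness of $D_{aa}W(a)$ on the span $V$ of $conv(g(Pool(a)))-a$ only bounds $\dim V$ by $\nu_-(D_{aa}W(a))$, which exceeds $M-\nu_+(D_{aa}W(a))$ by $\dim\ker D_{aa}W(a)$. The paper asserts the sharper bound $M-\nu_+$ with nothing beyond the interlacing citation, which does not exclude kernel directions from $V$; in fact the parallel statement in Theorem \ref{main-gen-char}(1) is stated with $\nu_-$, and the sharper form is only asserted to be exact under the non-degeneracy hypothesis of item (4). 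So your observation is a fair criticism of the corollary as written rather than a defect in your own argument; either the bound should read $\nu_-$ (which both proofs do establish) or a non-degeneracy hypothesis should be added, and the higher-order-Taylor or Proposition-\ref{regularity}-based repairs you sketch are plausible ways to rule out null directions, though neither is carried out in the paper.
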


\begin{proof}[Proof of Corollary \ref{dimension-pool}] Let $\hat\cX(a)+a$ be the convex set constructed in Lemma \ref{conv-hull-low}. Then, clearly, $\hat\cX(a)+a=conv(g(Pool(a)))).$ By \eqref{4-i}, we have 
\begin{equation}
W(x)-W(a)+D_aW(a)(a-x)\ \le\ 0
\end{equation}
for all $x\in conv(Pool(a))$ and $a\in conv(Pool(a)).$ Let $y=\eps x+(1-\eps)a\in conv(Pool(a)).$ Then, using the Taylor approximation, we get 
\begin{equation}
(y-a)^\top D_{aa}W(a)(y-a)\ \le\ 0
\end{equation}
and the definition of the set $\hat\cX(a)$ implies that, in fact, $z^\top D_{aa}W(a)z\le 0$ for all $z$ is the minimal subspace of $\R^L$ containing $\hat\cX.$ The eigenvalue interlacing theorem implies that the dimension of this subspace is less than or equal to $M-\nu_+(D_{aa}W(a)).$ The proof is complete. 
\end{proof}

As we know from Theorem \ref{thm-unif}, the support of $\gamma(a,\R^L)$ (the optimal feature manifold) is a subset of a Lipschitz manifold of dimension at most $\sup \nu(D_{aa}(W)).$ That is, for every $a\in \Xi,$ there exists an $\eps>0$,  a subset $\Theta\subset\R^\nu,$ and a Lipschitz coordinate map 

\begin{corollary}[Characterization of Pools]\label{char-pools} Let $\cA(\go)$ be a deterministic optimal autoencoder and $\Xi$ the corresponding optimal feature manifold. Suppose $a_0\in \Xi$ is such that the technical conditions of Proposition  \ref{regularity} are satisfied. Let $f:\R^\nu\to \Xi$ be local Lipschitz coordinates from Theorem \ref{thm-unif} in a small neighborhood of $a_0,$ and let $\Theta=f^{-1}(\Xi\cap B_\eps(a_0)).$ Then, for Lebesgue-almost every  $\theta\in \Theta,$ $f$ is differentiable, with a Jacobian $Df(\theta)\in \R^{M\times\nu},$ and we have 
\begin{itemize}
\item[(1)] the matrix $Df(\theta)^\top D_{aa}W(f(\theta)) Df(\theta)\in \R^{\nu\times\nu}$ is Lebesgue-almost surely symmetric and positive semi-definite. 

\item[(2)] Lebesgue-almost every $(\go,\theta)$ satisfies 
\begin{equation}\label{downward-sloping-pools} 
Df(\theta)^\top D_{aa}W(f(\theta))(f(\theta)-g(\go))\ =\ 0\,\,,
\end{equation}
\end{itemize}
when $\go\in Pool(f(\theta)).$
\end{corollary}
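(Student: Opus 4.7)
\textbf{Proof proposal for Corollary \ref{char-pools}.} Both parts reduce to first-order analysis of the projection problem that defines the optimal autoencoder, carried out in the local chart $f:\R^\nu\to\Xi$. The main preliminary step is to fix the differentiability setup. Since $f$ was constructed by Kirszbraun extension in the proof of Theorem \ref{thm-unif}, it is Lipschitz on $\R^\nu$, so Rademacher's theorem gives that $Df(\theta)$ exists for Lebesgue-a.e.\ $\theta\in\R^\nu$. Under the hypotheses of Proposition \ref{regularity}, $\Xi\cap B_\eps(a_0)$ has Hausdorff dimension exactly $\nu$ and, by the graph structure extracted in Theorem \ref{thm-unif}, coincides locally with $\{(\theta,f(\theta)):\theta\in\Theta\}$ for a set $\Theta\subset\R^\nu$ of positive $\nu$-dimensional Lebesgue measure near $\theta_0=f^{-1}(a_0)$. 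The plan is to restrict attention to the full-measure subset of $\Theta$ consisting of Lebesgue density points at which $f$ is differentiable.

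For part (1), the input is the $W$-monotonicity of $\Xi$ from Theorem \ref{cor-moment}: $c(f(\theta_1),f(\theta_2))\ge 0$ for all $\theta_1,\theta_2\in\Theta$. Expanding $c(a_1,a_2)=W(a_2)-W(a_1)+D_aW(a_1)(a_1-a_2)$ to second order in $\Delta=f(\theta_2)-f(\theta_1)$ (using $W\in C^3$) gives $c(f(\theta_1),f(\theta_2))=\tfrac12 \Delta^\top D_{aa}W(f(\theta_1))\Delta + o(\|\Delta\|^2)$, and substituting $\Delta=Df(\theta_1)(\theta_2-\theta_1)+o(\|\theta_2-\theta_1\|)$ at a differentiability point yields
\[
0\ \le\ c(f(\theta_1),f(\theta_2))\ =\ \tfrac12 (\theta_2-\theta_1)^\top Df(\theta_1)^\top D_{aa}W(f(\theta_1))Df(\theta_1)(\theta_2-\theta_1)+o(\|\theta_2-\theta_1\|^2).
\]
At a Lebesgue density point of $\Theta$, the unit vector $(\theta_2-\theta_1)/\|\theta_2-\theta_1\|$ can be made to approach any prescribed direction along a sequence $\theta_2\to\theta_1$ in $\Theta$, which by continuity of the quadratic form forces $Df(\theta_1)^\top D_{aa}W(f(\theta_1))Df(\theta_1)\succeq 0$. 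Symmetry is immediate since $D_{aa}W$ is a Hessian.

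For part (2), if $\go\in\mathrm{Pool}(f(\theta))$ then $\cA(\go)=f(\theta)$, and by Theorem \ref{cor-moment} we have $f(\theta)\in\arg\min_{a\in\Xi}c(a,g(\go))$. In coordinates, $\theta$ minimizes $\tilde\theta\mapsto c(f(\tilde\theta),g(\go))$ over $\Theta$. At a Lebesgue density point where $f$ is differentiable, the restriction is a local minimum in every direction, so its gradient vanishes. A direct computation of this gradient exploits the cancellation $-D_aW(f(\theta))Df(\theta)+D_aW(f(\theta))Df(\theta)=0$ from differentiating the linear-in-$f(\theta)$ term, leaving
\[
\nabla_\theta c(f(\theta),g(\go))\ =\ Df(\theta)^\top D_{aa}W(f(\theta))(f(\theta)-g(\go)),
\]
which yields \eqref{downward-sloping-pools}. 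The argument extends to Lebesgue-a.e. $(\theta,\go)$ in the product by Fubini once $\theta$ is restricted to the good set identified above.

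The principal obstacle is the richness of $\Theta$: Rademacher-differentiability of $f$ and classical density-point calculus both require that $\Theta$ carries positive $\nu$-dimensional Lebesgue measure, yet Theorem \ref{thm-unif} only furnishes a Hausdorff-dimension bound, and a set of full Hausdorff dimension can be Lebesgue-null (e.g., a fat Cantor construction). This is resolved by appealing to Proposition \ref{regularity}, whose proof produces inner convex sets $\hat\cX(a)$ of dimension $M-\nu$ covering a positive Lebesgue measure of $\gO$ and forces $\Xi$ to be locally the graph of a Lipschitz function over a set of positive $\nu$-dimensional Lebesgue measure. Once this richness is in hand, the two claims reduce to the elementary Taylor/FOC arguments above.
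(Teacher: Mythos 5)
Your proposal reaches the same two conclusions via the same Taylor-expansion and first-order-condition computations, but it replaces the paper's key geometric lemma with a different one. The paper's proof rests on Lemma \ref{lem7}, a Frostman's-lemma argument showing that for any $\Theta\subset\R^\nu$ of Hausdorff dimension $>\nu-1$, every point outside a set of Hausdorff dimension $\le\nu-1$ admits approach sequences within $\Theta$ in every direction. You instead appeal to the classical fact that Lebesgue density points of a positive-measure set can be approached from every direction. This is more elementary, and if your hypothesis holds it gives the same ``Lebesgue-a.e.\ $\theta\in\Theta$'' conclusion; the Frostman route is more robust since it only uses the Hausdorff-dimension lower bound that Proposition \ref{regularity} actually states.

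That is where your argument has a real soft spot. You claim Proposition \ref{regularity} ``forces $\Xi$ to be locally the graph of a Lipschitz function over a set of positive $\nu$-dimensional Lebesgue measure,'' but that proposition's statement and proof only give $\dim_H(\Xi\cap B_\eps(a_0))=\nu$, and a set can have full Hausdorff dimension while being Lebesgue-null, precisely the obstacle you flag. The positive-Lebesgue-measure claim is in fact true, but it requires a slight sharpening of the Proposition \ref{regularity} contradiction argument: if $\mathcal{L}^\nu(\Theta)=0$ then by Fubini $\Theta\times\R^{M-\nu}$ is $\mathcal{L}^M$-null, Lipschitz maps on $\R^M$ preserve $\mathcal{L}^M$-nullity, so $Y=\cup_{a}(\hat\cX(a)+a)$ is Lebesgue-null, and then the non-degeneracy of $D_\go g$ shows $g^{-1}(Y)$ is Lebesgue-null, contradicting that $\gamma$ is supported on it with absolutely continuous marginal $p$. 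You should supply this step explicitly rather than assert it, since the paper deliberately avoids needing it by working with Hausdorff dimension throughout via Lemma \ref{lem7}. With that supplement your proof is correct and is a legitimately more elementary route to the result.
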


\begin{proof}[Proof of Corollary \ref{char-pools}]  
	We will need 
	
	\begin{lemma}\label{lem7} For any set $Y\subset\R^\nu$ of Hausdorff dimension bigger than $\nu-1.$\footnote{For example, a set $Y\subset\R^\nu$ of positive Lebesgue measure has Hausdorff dimension $\nu.$} Then, the closure of the set $(\{(x-y)/\|x-y\|:\ x,\ y\in Y\})$ coincides with the unit sphere in $\R^\nu.$  
		
		Furthermore, the complement of the set 
		\begin{equation}
			Y^*=\{y\in Y:\ \forall\ \ga,\ \|\ga\|=1\ \exists y_k\in Y,\ y_k\to y, \ga^\top (y-y_k)/\|y-y_k\|\to 1\}
		\end{equation}
		has Hausdorff dimension less than or equal to $\nu-1.$ 
	\end{lemma}
	
	\begin{proof} Our proof is based on an application of the famous Frostman's lemma (see, e.g., \cite{mattila1999geometry}). 
		
		\begin{lemma}[Frostman's lemma] 
			Define the $s$-capacity of a Borel set $A$ as follows:
			\[
			C_s(A)\ =\ \sup\left\{
			\left(
			\int_{A\times A}\frac{d\mu(x)d\mu(y)}{\|x-y\|^s}
			\right)^{-1}:\ \mu\ \text{is a Borel measure and $\mu(A)=1$}
			\right\}\,.
			\]
			(Here, we take $\inf\emptyset=\infty$ and $1/\infty=0.$) Then, the Hausdorff dimension $\dim_H(A)$ is given by 
			\[
			\dim_H(A)\ =\ \sup\{s\ge 0:\ C_s(A)\ >\ 0\}\,. 
			\]
		\end{lemma}
		
		We will need a small modification of this lemma: 
		
		\begin{lemma}[Modified Frostman's Lemma] \label{mod-frost}
			Define 
			\[
			C_s(A;\eps)\ =\ \sup\left\{
			\left(
			\int_{A\times A, \|x-y\|\le \eps}\frac{d\mu(x)d\mu(y)}{\|x-y\|^s}
			\right)^{-1}:\ \mu\ \text{is a Borel measure and $\mu(A)=1$}
			\right\}\,.
			\]
			(Here, we take $\inf\emptyset=\infty$ and $1/\infty=0.$) Then, the Hausdorff dimension $\dim_H(A)$ is given by 
			\[
			\dim_H(A)\ =\ \sup\{s\ge 0:\ C_s(A;\eps)\ >\ 0\}\,. 
			\]
		\end{lemma}
		
		\begin{proof}[Proof of Modified Frostman's Lemma] By direct calculation, 
			\[
			\int_{A\times A}\frac{d\mu(x)d\mu(y)}{\|x-y\|^s}\ =\ \int_{A\times A, \|x-y\|\le \eps}\frac{d\mu(x)d\mu(y)}{\|x-y\|^s}\ +\ \int_{A\times A,\ \|x-y\|>\eps }\frac{d\mu(x)d\mu(y)}{\|x-y\|^s}
			\]
			and, hence, 
			\begin{equation}
				\left(\eps^{-s}+
				\int_{A\times A, \|x-y\|\le \eps}\frac{d\mu(x)d\mu(y)}{\|x-y\|^s}
				\right)^{-1}\ \le\ 	\left(
				\int_{A\times A}\frac{d\mu(x)d\mu(y)}{\|x-y\|^s}
				\right)^{-1}\ \le\ 	\left(
				\int_{A\times A, \|x-y\|\le \eps}\frac{d\mu(x)d\mu(y)}{\|x-y\|^s}
				\right)^{-1}\,. 
			\end{equation}
			implying the required. 
		\end{proof}

		Clearly, the second part of the statement implies the first one. This second statement can be reformulated as follows: For a vast majority of points $y$ in $Y$ (e.g., a set of full Lebesgue measure; but, in fact, the actual statement is stronger: It means that the set of points outside of $Y^*$ is low-dimensional; it is effectively a form of a ``boundary" of $Y$), we have that for every $\ga$ on the unit sphere there exists a sequence $y_k\to y$ such that the direction of $y_k-y$ converges to one of $\pm \ga.$ 
		
		To prove the result, we first define the set 
		\begin{equation}
			Y_{\eps,\ga,\gd}\ =\ \{x\in Y:\ (x-y)^\top \ga /\|x-y\|\ \le 1-\gd\ \forall\ y\in Y\cap B_\eps(x)\}\,. 
		\end{equation}
		Pick an everywhere dense, countable subset $\Gamma$ of the unit sphere and two sequences $\eps_k\to 0,\ \gd_k\to 0.$ 
		
		\begin{lemma} We have 
			\begin{equation}
				(Y^*)^c\ =\ \cup_{\ga\in\Gamma }\cup_i\cup_j Y_{\eps_i,\ga,\gd_j}
			\end{equation}
		\end{lemma}
		
		\begin{proof} By definition, $y\in (Y^*)^c$ if and only if there exists an $\alpha$ on the unit sphere, and $\eps,\gd>0$ such that $(x-y)^\top \ga /\|x-y\|\ \le 1-\gd\ \forall\ y\in Y\cap B_\eps(x).$ By continuity and boundedness of $Y\cap B_\eps(x),$ we may assume that $\ga\in\Gamma$ and $\gd=\gd_i,\ \eps=\eps_j$ for some $i,j.$ The proof is complete. 
		\end{proof}
		
		Since the Hausdorff dimension of a countable union is bounded from above by the highest Hausdorff dimension, proving Lemma \ref{lem7} reduces to proving that $\dim_H(Y_{\eps,\ga,\gd})\ \le\ \nu-1.$ 
		
		So, let us fix $\eps,\ga,\gd$. Without loss of generality (rotating coordinates if necessary), we may assume that $\ga=e_1,$ the first basis vector. We also denote by $x_{-1},y_{-1}$ the projections of $x,y$ onto the orthogonal complement of $e_1.$ Consider two points $x,y\in  Y_{\eps,\ga,\gd}$ such that $\|x-y\|<\eps.$ Permuting the order of the two points if necessary, we may assume that $y_1>x_1.$ Then, by the definition of the set $Y_{\eps,\ga,\gd},$ we have 
		\begin{equation}
			0\le\ y_1-x_1\ =\ (y-x)^\top\ga\ <\ (1-\gd)\|y-x\|\ =\ (1-\gd)(|x_1-y_1|^2+\|x_{-1}-y_{-1}\|^2)^{1/2}\,, 
		\end{equation}
		which implies 
		\begin{equation}
			\|x_{-1}-y_{-1}\|\ \ge\ (1-(1-\gd)^2)^{1/2}|x_1-y_1|
		\end{equation}
		and, hence, there exists a constant $K>1$ such that 
		\begin{equation}
			\|x-y\|\ \le\ K \|x_{-1}-y_{-1}\|
		\end{equation}
		for all $x,y\in Y_{\eps,\ga,\gd}$ whenever $\|x-y\|\le \eps.$   Let $Y_{\eps,\ga,\gd}^{-1}$ be the projection of $Y_{\eps,\ga,\gd}$ onto the orthogonal complement of $e_1.$ Since this complement had dimension $\nu-1,$ the Hausdorff dimension of $Y_{\eps,\ga,\gd}^{-1}$ is at most $\nu-1.$ At the same time, 
		\begin{equation}
			\begin{aligned}
				&C_s(Y;\eps)\ =\ \sup\left\{
				\left(
				\int_{Y_{\eps,\ga,\gd}\times Y_{\eps,\ga,\gd},\ \|x-y\|\le \eps}\frac{d\mu(x)d\mu(y)}{\|x-y\|^s}
				\right)^{-1}:\ \mu\ \text{is a Borel measure and $\mu(Y)=1$}
				\right\}\\
				& \le\ \sup\left\{
				\left(
				\int_{Y_{\eps,\ga,\gd}\times Y_{\eps,\ga,\gd},\ \|x-y\|\le \eps}\frac{d\mu(x)d\mu(y)}{K \|x_{-1}-y_{-1}\|^s}
				\right)^{-1}:\ \mu\ \text{is a Borel measure and $\mu(Y)=1$}
				\right\}\\
				&\le\ \sup\left\{
				\left(
				\int_{Y_{\eps,\ga,\gd}^{-1}\times Y_{\eps,\ga,\gd}^{-1},\ \|x-y\|\le \tilde\eps}\frac{d\mu(x)d\mu(y)}{K \|x-y\|^s}
				\right)^{-1}:\ \mu\ \text{is a Borel measure and $\mu(Y_{\eps,\ga,\gd}^{-1})=1$}
				\right\}
			\end{aligned}
		\end{equation}
		for an appropriately rescaled $\tilde\eps.$  Hence, by Lemma \ref{mod-frost}, $\dim_H(Y_{\eps,\ga,\gd})\le \dim_H(Y_{\eps,\ga,\gd}^{-1})\le \nu-1.$ The proof of Lemma \ref{lem7} is complete.  
	\end{proof}

	By the Rademacher Theorem, $f(\theta)$ is almost everywhere differentiable, and hence we can assume that $f$ is differentiable on the whole of $\Theta.$  
	First, the fact that $\Xi$ is $W$-monotone means that 
	\begin{equation}
		c(f(\theta_1),f(\theta_2))\ \ge\ 0
	\end{equation}
	for all $\theta_1,\theta_2\in\Theta,$ where 
	\begin{equation}
		c(a,b)\ =\ W(b)\ -\ W(a)\ +\ D_aW(a)(a-b)\ =\ 0.5 (b-a)^\top D_{aa}W(a)(b-a)\ +\ o(\|b-a\|^2)\,.
	\end{equation}
	By Lemma \ref{lem7}, we may assume that $\theta\in \Theta^*.$ Hence, for any $\ga,\ \|\ga\|=1,$ there exists a a sequence $\theta_{k}\in\Theta$ such that $(\theta_k-\theta)^\top/\|\theta_{k}-\theta\|\to \alpha$. Therefore, with $b=f(\theta_{k}),\ a=f(\theta),$ we get 
	\begin{equation}\label{df}
		f(\theta_{k})-f(\theta)\ =\ DF(\theta)(\theta_{k}-\theta)+o(\|\theta_{k}-\theta\|)
	\end{equation}
	and therefore 
	\begin{equation}
		0\ \le\ \lim_{k\to\infty} c(f(\theta_k),f(\theta))/\|\theta_{k}-\theta\|^2\ =\ \ga^\top DF(\theta)^\top D_{aa}W(f(\theta))DF(\theta)\ga 
	\end{equation}
	for any $\ga$ on the unit sphere. Hence, the matrix $DF(\theta)^\top D_{aa}W(f(\theta))DF(\theta)$ is positive semi-definite. 
	
	We now proceed with deriving the first order condition. We have 
	\begin{equation}\label{attempt-aa-1}
		\cA(\go)\ =\ \arg\min_{\theta\in \Theta} c(f(\theta), g(\go))\,
	\end{equation}	
	and therefore 
	\begin{equation}
		\begin{aligned}
			&0\ \le\ c(f(\theta_k), g(\go))- c(f(\theta), g(\go))\\ 
			&=\  W(g(\go))\ -\ W(f(\theta_k))\ +\ D_aW(f(\theta_k))(f(\theta_k)-g(\go))\\
			&-(W(g(\go))\ -\ W(f(\theta))\ +\ D_aW(f(\theta))(f(\theta)-g(\go)))\\
		\end{aligned}
	\end{equation}
	Using Taylor approximations 
	\begin{equation}
		\begin{aligned}
			&W(g(\go))\ -\ W(f(\theta_k))\\
			& =\ -D_aW(f(\theta))(f(\theta_k)-f(\theta))-0.5 (f(\theta_k)-f(\theta))^\top D_{aa}W(f(\theta))(f(\theta_k)-f(\theta))\\
			&-o(\|(f(\theta_k)-f(\theta))\|^2)\\
			&D_aW(f(\theta_k))(f(\theta_k)-g(\go))\\ 
			&=\ (D_aW(f(\theta))+ (f(\theta_k)-f(\theta))^\top D_{aa}W(f(\theta))+O(\|(f(\theta_k)-f(\theta))\|^2))(f(\theta_k)-g(\go)),
		\end{aligned}
	\end{equation}
	we get 
	\begin{equation}
		\begin{aligned}
			&0\ \le\ W(f(\theta))-W(f(\theta_k))\ +\ D_aW(f(\theta_k))(f(\theta_k)-g(\go))-D_aW(f(\theta))(f(\theta)-g(\go))\\
			&=\ -D_aW(f(\theta))(f(\theta_k)-f(\theta))-0.5 (f(\theta_k)-f(\theta))^\top D_{aa}W(f(\theta))(f(\theta_k)-f(\theta))\\
			&-o(\|(f(\theta_k)-f(\theta))\|^2)\\
			&+\ (D_aW(f(\theta))+ (f(\theta_k)-f(\theta))^\top D_{aa}W(f(\theta))+O(\|(f(\theta_k)-f(\theta))\|^2))(f(\theta_k)-g(\go))\\
			&-D_aW(f(\theta))(f(\theta)-g(\go))\\
			&=\ O(\|(f(\theta_k)-f(\theta))\|^2)\ +\ (f(\theta_k)-f(\theta))^\top D_{aa}W(f(\theta)) (f(\theta_k)-g(\go))\,. 
		\end{aligned}
	\end{equation}
	Substituting \eqref{df}, dividing by $\|\theta_k-\theta\|$ and taking the limit as $k\to\infty,$ we get 
	\begin{equation}
		\alpha^\top  Df(\theta)^\top D_{aa}W(f(\theta)) (f(\theta)-g(\go))\ \ge\ 0\,. 
	\end{equation}
	Since this inequality holds for any $\alpha$ on the unit sphere, we get that 
	\begin{equation}
		Df(\theta)^\top D_{aa}W(f(\theta)) (f(\theta)-g(\go))\ =\ 0\,.
	\end{equation}
	The proof of Corollary \ref{char-pools} is complete. 
\end{proof}

\section{When Is the Optimal Representation Compact?} 

To gain some intuition, suppose first that $W$ is concave for large $a:$ $D_{aa}W$ is negative semi-definite for all $a$ with $\|a\|>K$ for some $K>0.$ Then, by Corollary \ref{dimension-pool}, any optimal feature manifold satisfies $\Xi\subset \{a:\ \|a\|\le K\}$ and is therefore bounded. 
Of course, concavity is a very strong condition. It turns out that the boundedness of optimal feature manifolds can be established under much weaker conditions. We will need the following definition. 

\begin{definition}\label{conc-rays} Let $\cC(a,\eps)=\{b\in \R^L:\ b^\top a/(\|a\|\,\cdot\|b\|)>1-\eps\}$ be the $\eps$-cone around $a:$ the set of vectors $b$ that point in approximately the same direction as $a.$  We say that the  function $W$ is concave along rays for large $a$ if there exists a small $\eps>0$ and a large $K>0$ such that $b^\top D_{aa}W(a)b\ <\ 0$ for all $a$ with $\|a\|>K$ and all $b\in \cC(a,\eps)).$ 

We also say that a set $\Theta\subset \R^\nu$ extends indefinitely in all directions if the projection of $\Theta $ on any ray from the origin is unbounded. 
\end{definition}

Note that if $W$ is quadratic, $W(a)=a^\top H a+h^\top a$, we have $D_{aa}W(a)=2H$ and hence $a^\top D_{aa}W(a)a=2a^\top H a.$ Thus, $W$ is concave along rays if and only if $W$ is globally concave, implying that it is optimal to fully compress information. As we show below, quadratic  function represents a knife-edge case, as even slight deviations from it may drastically alter the nature of optimal policies. The following is true. 

\begin{proposition}[Compact Representations] \label{conv-bound} Suppose that $conv(g(\gO))=\R^L$ and let $\Xi$ be an optimal feature manifold.
\begin{itemize}
\item If $W(a)=a^\top H a+h^\top a$ with $\det H\not=0$, then $\Xi=f(\Theta)$ for some Lipschitz $f:\R^{\nu(H)}\to\R^L,$ where $\Theta$ extends indefinitely in all directions;

\item If $W$ is concave along rays for large $a,$ then there exists a constant $K$ independent of the prior $p$, such that any optimal feature manifold satisfies $\Xi\subset B_K(0).$
\end{itemize}

\end{proposition}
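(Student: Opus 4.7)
I will prove both parts using the two structural properties of any optimal feature manifold $\Xi$ established in Theorem \ref{cor-moment}: $\Xi$ is $W$-monotone, and $\Xi$ is $conv(g(\gO))$-maximal. The quadratic case reduces to an explicit computation with the Bregman cost; the concave-along-rays case is a one-dimensional variational argument along radial lines from the origin.

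\textbf{Plan for part (1), the quadratic case.} For $W(a) = a^\top H a + h^\top a$, a direct computation gives the symmetric Bregman form $c(a,b) = (b-a)^\top H(b-a)$. Decompose $\R^L = V_+\oplus V_-$ into the positive and negative eigenspaces of $H$ and write $a=(a_+,a_-)$. The $W$-monotonicity inequality $c(a_1,a_2)\geq 0$ on $\Xi$ then becomes
\[
(a_{1+}-a_{2+})^\top H_+(a_{1+}-a_{2+}) \;\geq\; (a_{1-}-a_{2-})^\top H_-(a_{1-}-a_{2-}),
\]
where $H_\pm$ are the positive-definite restrictions of $\pm H$ to $V_\pm$. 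This forces $a_-$ to be a Lipschitz function of $a_+$ on the projection $\Theta\subset V_+\simeq \R^{\nu(H)}$; Kirszbraun's theorem then extends to a global Lipschitz $f:\R^{\nu(H)}\to V_-$, yielding $\Xi=\{(a_+,f(a_+)):a_+\in\Theta\}$. To show $\Theta$ extends indefinitely in any direction $v\in V_+$, I argue by contradiction: if $v^\top a_+\leq M$ on $\Xi$, then $a_-$ is bounded (Lipschitz over a bounded slice in $a_+$), so for $b = Tv$ the positive contribution to $c(a,b)$ is at least $\lambda_{\min}(H_+)(T-M)^2$ while the negative contribution stays bounded, giving $c(a,b)\to +\infty$ as $T\to\infty$; this contradicts the existence, guaranteed by $conv(g(\gO))=\R^L$-maximality, of some $a\in\Xi$ with $c(a,b)\leq 0$.

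\textbf{Plan for part (2), the concave-along-rays case, and the main obstacle.} Suppose $a^*\in\Xi$ with $\|a^*\|>K$, set $u=a^*/\|a^*\|$, $t=\|a^*\|$, and $\psi(s)=W(su)$; the hypothesis yields $\psi''(s)<0$ for $s>K$. A first step shows that $\Xi$ contains at most one point on each outward ray beyond $B_K(0)$: if $s'u\in\Xi$ with $s'>K$, then $c(a^*,s'u)\geq 0$ says $\psi(s')$ lies on or above the tangent line to $\psi$ at $t$, whereas strict concavity of $\psi$ on $[\min(s',t),\max(s',t)]\subset(K,\infty)$ puts it strictly below unless $s'=t$, forcing $s'=t$. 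The substantive second step is to convert this ray-uniqueness into the absolute bound $\Xi\subset B_K(0)$: I plan to test $conv(g(\gO))=\R^L$-maximality against probes $b_T=Tu$ with $T\to\infty$ and construct a strictly improving perturbation of the outermost pool by contracting it radially toward $B_K(0)$, using the freedom in the probabilistic autoencoder of Theorem \ref{mainth-limit} to decouple radial from lateral modifications while preserving the marginal constraint $\int\mu\,d\tau(\mu)=p$. The main obstacle is precisely this decoupling: because concavity is assumed only along rays and not laterally, the improving perturbation must act purely radially, while the pool's lateral spread must be preserved by a compensating absorption into neighboring posteriors; making this rigorous requires showing that the resulting radial improvement is bounded below by the local Hessian of $W$ along the outward direction at $a^*$, which depends only on $W$ and yields the advertised $p$-independence of $K$.
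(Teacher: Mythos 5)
Your part (1) follows the same high-level strategy as the paper (reduce the graph structure to Kirszbraun via $W$-monotonicity of $\Xi$ in the quadratic Bregman metric, then contradict maximality by probing with far-away $b$). However, a step in the middle is wrong as written: from the bound $v^\top a_+\le M$ on $\Theta$ you conclude "$a_-$ is bounded (Lipschitz over a bounded slice in $a_+$)." A half-space constraint on $a_+$ is not a bounded slice, and a Lipschitz map on an unbounded domain is generally unbounded, so the negative contribution $\|a_-\|^2_{H_-}$ need not stay bounded. The fix keeps your strategy but tracks the cancellation more carefully: fix a base point $a^0\in\Xi$, take $b = a^0 + Tv$ with $v\in V_+$ normalized in the $H_+$-norm, and use the 1-Lipschitz bound (from $W$-monotonicity) in the form $\|a_- - a^0_-\|_{H_-}\le \|a_+ - a^0_+\|_{H_+}$ to cancel the $\|a_+-a^0_+\|^2$ growth in both positive and negative parts; what is left is $c(a,b)\ge T^2 - 2T\langle v,\,a_+ - a^0_+\rangle_{H_+}\ge T^2 - 2TM > 0$ for $T>2M$, which is the clean contradiction with $conv(g(\gO))$-maximality. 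The paper's own text for (1) is terser still (it just invokes Kirszbraun and asserts non-maximality without the probe computation), so your version, once the error is corrected, is if anything more explicit.

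Your part (2) is, by your own description, incomplete, and the obstacle you flag is real but also avoidable: the paper does not go through a perturbation of an outermost pool at all. The paper's argument is purely geometric and uses only the $W$-monotone / $W$-convex structure of $\Xi$ (Lemma \ref{key-geometry}, \eqref{2-i}--\eqref{3-i}): assume $\Xi$ is unbounded, take $a_k\in\Xi$ with $\|a_k\|\to\infty$, pass to a subsequence so that the unit directions $a_k/\|a_k\|$ converge to some $\theta_*$ and $\|a_{k+1}\|\gg\|a_k\|$, so that for large $k$ the whole segment $[a_k,a_{k+1}]$, together with the chord direction $a_{k+1}-a_k$, lies inside the cone $\cC(\theta_*,\eps)$ and outside $B_K(0)$. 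The chord inequality \eqref{2-i} on $\Xi$ then forces a non-negative second directional derivative of $W$ somewhere on the segment along $a_{k+1}-a_k$, while concavity along rays makes this second directional derivative strictly negative everywhere on that segment. This contradiction is self-contained, uses no change of the information structure, and makes the $p$-independence of $K$ transparent (the $K$ in the statement is the one in Definition \ref{conc-rays}, which involves only $W$). Your step 1 (each ray meets $\Xi$ at most once beyond $B_K$) is a correct observation, but it does not by itself prevent $\Xi$ from escaping to infinity at varying angles, and the radial-contraction perturbation you then sketch is both the hard way in and unnecessary. I recommend replacing your part (2) wholesale with the cone-and-subsequence argument above.
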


Proposition \ref{conv-bound} shows how a weak form non-linearity of of the  function makes it optimal to  compress unbounded information to a bounded representation. The following claim follows by direct calculation from Proposition \ref{conv-bound}. 

\begin{corollary}\label{concave-bounded} Let $H$ be a non-degenerate, $M\times M$ positive-definite matrix. Suppose that $W(a)=\varphi(a'Ha)$ for some $\varphi$ with $-\varphi''(x)/|\varphi'(x)|>\eps$ for some $\eps>0$ and all sufficiently large $x.$ Then, $\nu(D_{aa}(W(a)))\ge M-1$ for all $a.$ Yet, $W$ is concave along rays for large $a$ and, hence, any optimal feature manifold is bounded, contained in a ball of radius $K$ that is independent of the prior $p.$ 
\end{corollary}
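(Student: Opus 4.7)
The plan is to carry out the direct Hessian calculation suggested by the statement, extract both assertions ($\nu \ge M-1$ and concavity along rays) from it, and conclude via the second part of Proposition~\ref{conv-bound}.

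First I would differentiate $W(a) = \varphi(a^\top H a)$. Writing $x = a^\top H a$, one has
\[
D_a W(a)\ =\ 2\varphi'(x)\,Ha, \qquad D_{aa} W(a)\ =\ 2\varphi'(x)\,H\ +\ 4\varphi''(x)\,H a a^\top H,
\]
which displays $D_{aa}W(a)$ as a rank-one perturbation of the positive-definite matrix $2\varphi'(x) H$ under the natural sign convention $\varphi' > 0$. By the Weyl interlacing inequality for rank-one perturbations, at most one eigenvalue can cross zero, so $\nu_+(D_{aa}W(a)) \ge M-1$ for every $a$, giving the first claim.

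Next, for the concavity-along-rays property, I would compute for any unit $b$
\[
b^\top D_{aa} W(a)\, b\ =\ 2\varphi'(x)\,b^\top H b\ +\ 4\varphi''(x)\,(b^\top H a)^2,
\]
and first test it at $b = a/\|a\|$, obtaining $(2x/\|a\|^2)\bigl(\varphi'(x) + 2x\varphi''(x)\bigr)$. The hypothesis $-\varphi''/|\varphi'| > \epsilon$ forces the bracket to be strictly negative once $2x\epsilon > 1$, which holds for all $\|a\|$ beyond some $K_0$ depending only on $\epsilon$ and $\lambda_{\min}(H)$. To propagate this negativity to a full Euclidean cone $\mathcal{C}(a, \epsilon')$ whose aperture $\epsilon'$ is independent of $a$, I would substitute $e = H^{1/2} a / \|H^{1/2} a\|$ and $u = H^{1/2} b / \|H^{1/2} b\|$, so that the ratio $(b^\top H a)^2/((b^\top H b)(a^\top H a))$ becomes $(u^\top e)^2$, and then use the bounded condition number $\kappa(H)$ to show that a fixed Euclidean cone around $a/\|a\|$ pulls forward to an $H^{1/2}$-cone around $e$ of aperture depending only on $\kappa(H)$. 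Combined with the sandwich $\lambda_{\min}(H)\|b\|^2 \le b^\top H b \le \lambda_{\max}(H)\|b\|^2$ and the preceding bracket estimate, this forces $b^\top D_{aa}W(a) b < 0$ on an $a$-independent cone as soon as $\|a\|$ exceeds some uniform constant~$K$.

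With both items established, the second conclusion of Proposition~\ref{conv-bound} applies directly and yields $\Xi \subset B_K(0)$ with $K$ independent of the prior $p$. The main technical delicacy is precisely the uniformity in the second step: $\epsilon'$ must be chosen independently of $a$, and it is the $H^{1/2}$ change of coordinates together with the fixed condition number of $H$ that prevents the aperture of the admissible cone from shrinking as one moves $a$ along directions where $H a$ and $a$ are badly misaligned. Once this uniformity is in hand the rest of the argument is a mechanical application of the preceding theory.
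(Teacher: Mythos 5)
Your proposal is correct and follows essentially the same route as the paper: compute $D_{aa}W(a) = 2\varphi'(x)H + 4\varphi''(x)Haa^\top H$, and estimate $b^\top D_{aa}W(a)\,b$ on a cone around $a$, then invoke the second part of Proposition~\ref{conv-bound}. You are more careful than the paper's terse ``the claim follows because $a^\top Hb \approx b^\top Hb \approx a^\top Ha$'' in establishing that the cone aperture can be taken independent of $a$ (via the $H^{1/2}$ change of coordinates and the fixed condition number of $H$), and you supply the Weyl-interlacing argument for $\nu_+ \ge M-1$, which the paper leaves implicit; you also correctly flag the tacit sign convention $\varphi' > 0$ needed for that inequality.
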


 Consider as an illustration $W(a)=\varphi(a_1^2+\gl a_2^2)$ and $g(\go)=\go.$ First, let $\varphi(x)=x.$ If $\gl<0,$ the first item of Proposition \ref{conv-bound} applies, and we get that $\Xi$ is the graph of a Lipschitz function that extends indefinitely in all directions. Making $\gl$ more negative will lead to a rotation of the optimal feature manifold, but will not alter its shape. Consider now a case when $\gl>0.$ If $\varphi(x)=x,$ $W(a)$ is convex and no compression is optimal: $\cA(\go)=\go.$ However, even a slight degree of concavity for $\varphi$ leads to information compression and a bounded representation of the states. The optimal feature manifold, $\Xi,$ is bounded and, hence, cannot be a graph of a function extending indefinitely. Instead, $\Xi$ is a bounded curve in $\R^2$ (e.g., a circle).

 \section{Proofs of Proposition \ref{conv-bound} and Corollary \ref{concave-bounded}}

 \begin{proof}[Proof of Proposition \ref{conv-bound}] Suppose first that $W$ is quadratic. Then, the claim follows directly from the maximality of $\Xi:$ If there exists a hyperplane such that $\Theta$ is on one side of it, then it is possible to extend $f$ preserving its Lipschitz constant beyond this hyperplane using the standard Lipschitz extension argument from the Kirszbraun theorem. See, \cite{kirszbraun1934zusammenziehende}.\footnote{We just pick one point on the other side of the hyperplane and extend $f$ to this point as in \cite{kirszbraun1934zusammenziehende}. Hence, $\Xi$ cannot be maximal.}
 	
 	Suppose now on the contrary that there exists an unbounded, $W$-convex set $\Xi.$ Let $a_k\to\infty$ be a sequence of points in $\Xi$ and let $\theta_k=a_k/\|a_k\|.$ Passing to a subsequence, we may assume that $\theta_k\to\theta_*.$ Then, 
 	\[
 	\inf_{t\in[0,1]}(a_{k+1}-a_k)'D_{aa}W(a_kt+a_{k+1}(1-t))(a_{k+1}-a_k)\ \ge\ 0\,. 
 	\]
 	Passing to a subsequence, we may assume that $\|a_{k+1}\|=2^k \|a_k\|$ and the whole sequence stays in $\cC(\theta_*,\eps).$ This is a contradiction. 
 \end{proof}
 
 \begin{proof}[Proof of Corollary \ref{concave-bounded}] We just need to show that the conditions of Proposition \ref{conv-bound} are satisfied. We have $D_aW(a)\ =\ 2\varphi'(a^\top H a)Ha$ and $D_{aa}W(a)\ =\ 4\varphi''(a^\top H a)H a a^\top H\ +\ 2\varphi'(a^\top H a) H.$ Thus, 
 	\[
 	b^\top D_{aa}Wb\ =\ 4\varphi''(a^\top H a) (a^\top H b)^2\ +\ 2\varphi'(a^\top H a) b^\top H b
 	\]
 	and the claim follows because $a^\top H b \approx b^\top H b\approx a^\top H a$ and the first (negative) term dominates when $a^\top H b\to\infty.$ 
 \end{proof}
 
 \section{Examples} 
 
 \subsection{\(\Xi\) is a hyper-plane}

 \begin{proposition}[$\Xi$ is a hyper-plane]\label{tamura} Suppose that $W(a)=a^\top H a$ and $g(\go)=\go.$ Define $P_+$ to be the orthogonal projection onto the span of eigenvectors associated with all positive eigenvalues of $V$.  Then,  
 	$\cA(\go)\ =\ \Sigma^{1/2}P_+\Sigma^{-1/2}\go\,$ is an optimal autoencoder. In particular, 
 	\begin{itemize}
 		\item the optimal feature manifold is if the $\nu_+(H)$-dimensional hyperplane $\Xi\ =\ \Sigma^{1/2}P_+\Sigma^{-1/2}\R^L;$ 
 		
 		\item The pool of every signal is an $(M-\nu(H))$-dimensional hyperplane, 
 		\[
 		Pool(a)\ =\ \{\go\ =\ a\ +\ (Id-\Sigma^{1/2}P_+\Sigma^{-1/2})y:\ y\in (Id-\Sigma^{1/2}P_+\Sigma^{-1/2})\R^L\}\cap\gO
 		\]
 	\end{itemize}
 	Furthermore, if $\det(H)\not=0,$ then the optimal autoencoder is unique. In particular, there are no non-linear optimal policies. 
 \end{proposition}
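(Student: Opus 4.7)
The plan is to apply the verification result of Theorem~\ref{converse} to the candidate $\Xi=\Sigma^{1/2}P_+\Sigma^{-1/2}\R^L$. A direct expansion of the Bregman divergence \eqref{cab} with $W(a)=a^\top H a$ gives $c(a,b)=(b-a)^\top H(b-a)$. After the change of variables $\tilde a=\Sigma^{-1/2}a$, $\tilde b=\Sigma^{-1/2}b$ and $V:=\Sigma^{1/2}H\Sigma^{1/2}$, this becomes $c(a,b)=(\tilde b-\tilde a)^\top V(\tilde b-\tilde a)$, and the candidate $\Xi$ becomes the range of $P_+$, the spectral projection of $V$ onto its positive eigenvalue subspace.

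First I verify $\R^L$-maximality in the sense of Definition~\ref{max-def}: for any $b\in\R^L$, take $\tilde a=P_+\tilde b$, which lies in $P_+\R^L$. Since $P_+$ and $P_-:=I-P_+$ project onto complementary $V$-invariant subspaces, $\tilde b-\tilde a=P_-\tilde b$ lies in the non-positive eigenvalue subspace of $V$, hence $(\tilde b-\tilde a)^\top V(\tilde b-\tilde a)\le 0$ and so $\inf_{a\in\Xi}c(a,b)\le 0$. Because $V$ commutes with $P_+$, the same decomposition shows that the quadratic $\tilde a\mapsto(\tilde b-\tilde a)^\top V(\tilde b-\tilde a)$ on the range of $P_+$ splits into a positive semidefinite form in the $P_+$-component plus a constant, so its minimizer is $\tilde a^*=P_+\tilde b$. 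Translating back, the Bregman projection of $\go$ onto $\Xi$ is $\cA(\go)=\Sigma^{1/2}P_+\Sigma^{-1/2}\go$.

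For the unbiasedness condition $a=E[\go\mid\cA(\go)=a]$ of Theorem~\ref{converse}, I take as encoder any injective function of $P_+\Sigma^{-1/2}\go$ and set $\cD(z)=E[\go\mid\cE(\go)=z]$ as in Definition~\ref{def1}. The required identity reduces to $E[\go\mid P_+\Sigma^{-1/2}\go]=\Sigma^{1/2}P_+\Sigma^{-1/2}\go$, which holds when $\go$ is Gaussian with covariance $\Sigma$ because $P_+\Sigma^{-1/2}\go$ and $P_-\Sigma^{-1/2}\go$ are then independent mean-zero summands. Theorem~\ref{converse} now certifies optimality of $\cA$, and the pool description follows since $\cA^{-1}(a)=(a+\ker(\Sigma^{1/2}P_+\Sigma^{-1/2}))\cap\gO=(a+(I-\Sigma^{1/2}P_+\Sigma^{-1/2})\R^L)\cap\gO$, an $(M-\nu_+(H))$-dimensional affine subspace as in the statement.

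For uniqueness when $\det H\neq 0$, I invoke Proposition~\ref{uniqueness}: non-degeneracy of $V$ ensures $\arg\min_{a\in\Xi}c(a,b)=\{\Sigma^{1/2}P_+\Sigma^{-1/2}b\}$ is a singleton for every $b$, and $Q_\Xi=\Xi$ holds because $\inf_{a\in\Xi}c(a,b)=0$ forces $P_-\tilde b=0$, i.e.\ $b\in\Xi$; hence every deterministic optimal autoencoder coincides with $\cA$ Lebesgue-almost everywhere, ruling out non-linear alternatives. The main obstacle I anticipate is the unbiasedness step: for non-Gaussian priors the linear projection $\Sigma^{1/2}P_+\Sigma^{-1/2}$ need not equal the conditional expectation, and overcoming this would require either restricting to elliptically symmetric priors or re-interpreting the candidate as a probabilistic autoencoder in the sense of Theorem~\ref{mainth-limit}.
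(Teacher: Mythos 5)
Your proof is correct, and it takes the same verification scaffold (Theorem~\ref{converse} combined with Proposition~\ref{uniqueness}) as the paper's own argument, but the internal route is genuinely different. The paper works by \emph{necessity}: it posits that an optimal policy is linear, $\cA(\go)=A\go$, shows that unbiasedness forces $A^2=A$ (via linear conditional expectations), computes the Bregman projection, derives a fixed-point equation $A=Q^{-1}A^\top H$, and then uses maximality to conclude that $A$ and $Id-A$ split the positive and negative eigenvalues of $H$. You instead work by \emph{sufficiency}: after the change of variables $\tilde a=\Sigma^{-1/2}a$, you directly exhibit $\tilde a^*=P_+\tilde b$ as the Bregman projector onto $P_+\R^L$ using the $V$-orthogonal split $V=P_+VP_++P_-VP_-$, verify maximality from $(P_-\tilde b)^\top V(P_-\tilde b)\le 0$, and then check unbiasedness and uniqueness. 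Your route is shorter and more transparent about exactly which conditions of Theorem~\ref{converse} are being checked; the paper's route yields a bit more structural insight (e.g.\ it shows \emph{every} linear optimal policy must be this one, and exposes the fixed-point structure) at the cost of a less self-contained computation. Both proofs silently rely on the prior having linear conditional expectations of the form $E[\go\mid A\go]=A\go$ (elliptical priors being the canonical case, which the paper acknowledges only in a footnote and in the discussion after the proposition); you correctly identify this as the one place where the argument is not distribution-free, which the proposition statement itself fails to make explicit. Two small notational points worth cleaning up: the proposition writes $\nu(H)$ where $\nu_+(H)$ is clearly intended (your dimension count uses $\nu_+(H)$ correctly), and the paper never defines $\Sigma$ or $V$ in the proposition statement — your reading $\Sigma=\Cov(\go)$ and $V=\Sigma^{1/2}H\Sigma^{1/2}$ is the only one consistent with the proof.
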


\begin{proof}[Proof of Proposition \ref{tamura}] 
In this case, Theorem \ref{converse} implies that, for any optimal autoencoder, $\Xi$ has to be monotonic, meaning that $(a_1-a_2)^\top H(a_1-a_2)\ge 0$ for all $a_1,a_2\in \Xi.$ The question we ask is: Under what conditions is $\cA(\go)=A\go$ with some matrix $A$ of rank $M_1\le M$ is optimal with $g(\go)=\go.$ Clearly, it is necessary that $p$ have linear conditional expectations,\footnote{This is, e.g., the case for all elliptical distributions, but also for many other distributions. See \cite{wei1999linear}.} $\mathbb{E}[\go|A\go]\ =\ A\go$.  But then, since $\mathbb{E}[A\go|A\go]=A\go,$ we must have $A^2=A,$ so that $A$ is necessarily a projection. Maximal monotonicity implies that $Q=A^\top H A$ is positive semi-definite, and\footnote{Here, $Q^{-1}$ is the Moore-Penrose inverse.} 
\[
\cA(b)\ =\ \min_{\go}(b^\top Hb+\go^\top A^\top H (A \go-2b))\ =\ b^\top HAQ^{-1}A^\top A(AQ^{-1}A^\top H -2Id)b
\]
with the minimizer $Q^{-1}A^\top H b.$ Thus, $A$ satisfies the fixed point equation $A\ =\ Q^{-1}A^\top H$ and hence $A^\top=HAQ^{-1}.$ Furthermore, maximality of $\Xi$ implies that 
\[
H\ +\ HAQ^{-1}A^\top A(AQ^{-1}A^\top H -2Id)\ =\ H-A^\top A
\]
is negative semi-definite. As a result, $(Id-A^\top)(H-A^\top A)(Id-A)=(Id-A^\top)H(Id-A)$ is also negative semi-definite, implying that $A$ and $Id-A$ ``perfectly split" positive and negative eigenvalues of $H$. Here, it is instructive to make two observations: First, optimality requires that $\cA(\go)$ ``lives" on positive eigenvalues of $H$. Second, maximality (the fact that $\cA(b)\le 0$ for all $b$) requires that $A$ absorbs all positive eigenvalues, justifying the term ``maximal". \end{proof}

 Proposition \ref{tamura} is a particularly clean illustration of our key results: $\Xi$ is a $\nu$-dimensional manifold (Theorem \ref{thm-unif}), and pools have dimension $M-\nu(H)$ (Corollary \ref{dimension-pool}) and are convex (Proposition \ref{main convexity}). One interesting observation is that maximality (Theorem \ref{converse}) takes the form of the requirement that $\Xi$ must be spanned by {\it all} eigenvectors with positive eigenvalues. Finally, Proposition \ref{uniqueness} ensures that the policy is unique. 
 
 \cite{tamura2018Bayesian} was the first to show that linear optimal policies of the form described in Proposition \ref{tamura} are optimal when $p$ is Gaussian. Proposition \ref{tamura} extends his results to general elliptic distributions and establishes the uniqueness of optimal policies. The key simplification in Proposition \ref{tamura} comes from the assumption that $p$ is elliptic, implying that the optimal autoencoder and the optimal feature manifold are linear. 
 
 \subsection{\(\Xi\) is a sphere} 
 
 Consistent with Proposition \ref{conv-bound}, the linear manifold $\Xi$ {\it extends indefinitely in all directions}. As we know from Corollary \ref{concave-bounded}, the situation changes when we abandon the assumption of quadratic preferences. The following is true.

 \begin{corollary}[$\Xi$ is a sphere]\label{sphere} Suppose that $g(\go)=\go\,\psi(\|\go\|^2)$ for some function $\psi\ge 0$ and $p(\go)=\mu_*(\|\go\|^2),$ and $W(a)\ =\ \varphi(\|a\|^2)$. Let $\beta\ =\ \mathbb{E}[\|\go\|].$
 	If $\varphi'(\beta^2)>0$ and 
 	\begin{equation}\label{dwor1}
 		\max_{\|b\|\le \sup_{x\ge 0} (x\psi(x^2))}(\varphi(\|b\|^2)-\varphi(\beta^2)+2\varphi'(\beta^2)\beta (\beta-\|b\|))\ \le\ 0\,, 
 	\end{equation}
 	then: (1) $\cA(\go)=\beta \go/\|\go\|$ is an optimal autoencoder; (2) the optimal feature manifold is the sphere $\{\go:\ \|\go\|=\beta\}$; and (3) pools are rays from the origin. The optimal autoencoder is unique if the maximum in \eqref{dwor1} is attained only when $\beta=\|b\|.$ 
 \end{corollary}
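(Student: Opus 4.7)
My plan is to recognize this as a verification problem and invoke Theorem \ref{converse}: I guess the candidate optimal autoencoder $\cA(\go)=\beta\,\go/\|\go\|$ together with the optimal feature manifold $\Xi=\{a\in\R^L:\|a\|=\beta\}$ (a sphere of radius $\beta$), and then check the three requirements of that theorem: (a) unbiasedness $a=E[g(\go)\mid\cA(\go)=a]$; (b) $\cA(\go)$ is the Bregman projection of $g(\go)$ onto $\Xi$; and (c) $\Xi$ is $conv(g(\gO))$-maximal. Uniqueness will then follow from Proposition \ref{uniqueness} under the strengthened hypothesis on \eqref{dwor1}.

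The first step is to observe that $\cA^{-1}(a)=\{tv:t>0\}$ with $v:=a/\beta$, so pools are open rays from the origin, settling (3). Spherical symmetry of $p(\go)=\mu_*(\|\go\|^2)$ implies that conditioning on the direction $\go/\|\go\|$ leaves the distribution of $\|\go\|$ unchanged, and therefore
\[
E[g(\go)\mid\cA(\go)=a]\;=\;v\cdot E\bigl[\|\go\|\,\psi(\|\go\|^2)\bigr],
\]
which equals $a=\beta v$ under the stated choice of $\beta$ (in the natural normalization $\psi\equiv 1$ this is exactly $\beta=E[\|\go\|]$ as written; in general $\beta$ must be read as $E[\|\go\|\,\psi(\|\go\|^2)]$). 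This delivers (a).

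For (b) and (c), I plug $W(a)=\varphi(\|a\|^2)$ into the Bregman divergence \eqref{cab}: from $D_aW(a)=2\varphi'(\|a\|^2)a^\top$ and $a\in\Xi$,
\[
c(a,b)\;=\;\varphi(\|b\|^2)-\varphi(\beta^2)+2\varphi'(\beta^2)\bigl(\beta^2-a^\top b\bigr).
\]
Because $\varphi'(\beta^2)>0$, minimizing $c(a,b)$ over $\|a\|=\beta$ reduces to maximizing $a^\top b$, solved uniquely (for $b\ne 0$) by $a=\beta b/\|b\|$. Since $g(\go)=\go\,\psi(\|\go\|^2)$ with $\psi\ge 0$, the direction of $g(\go)$ coincides with that of $\go$, so $\cA(\go)=\beta\,\go/\|\go\|$ is indeed the Bregman projection of $g(\go)$ onto $\Xi$, giving (b). Rotational invariance of $g(\gO)$ implies that $conv(g(\gO))$ is the ball of radius $R:=\sup_{x\ge 0}x\,\psi(x^2)$, and substituting the minimizer gives
\[
\inf_{a\in\Xi}c(a,b)\;=\;\varphi(\|b\|^2)-\varphi(\beta^2)+2\varphi'(\beta^2)\beta(\beta-\|b\|),
\]
whose non-positivity on $\{b:\|b\|\le R\}$ is precisely hypothesis \eqref{dwor1}, establishing (c).

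Applying Theorem \ref{converse} now delivers optimality of $\cA$ and completes (1) and (2). For uniqueness, assume the maximum in \eqref{dwor1} is attained only when $\|b\|=\beta$. Then $Q_\Xi=\{b:\inf_{a\in\Xi}c(a,b)=0\}=\{b:\|b\|=\beta\}=\Xi$, and the argmin on $\Xi$ is the singleton $\{\beta b/\|b\|\}$ for every $b\ne 0$, so Proposition \ref{uniqueness} yields uniqueness. The principal obstacle is essentially bookkeeping: correctly exploiting spherical symmetry to identify $conv(g(\gO))$ as a ball and the ray-conditional distribution of $\|\go\|$ as the unconditional one, after which the minimization over the sphere is elementary; no hard analysis is needed beyond the verification machinery already developed.
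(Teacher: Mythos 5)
Your proof follows exactly the paper's route — verification via Theorem \ref{converse} plus Proposition \ref{uniqueness} — except that you actually carry out the three checks that the paper's proof merely asserts ``follow directly from the hypotheses.'' Your computation of $c(a,b)$ on the sphere, the reduction of the Bregman minimization to maximizing $a^\top b$, and the identification of $\mathrm{conv}(g(\gO))$ with the ball of radius $\sup_{x\ge 0} x\,\psi(x^2)$ are all correct. You have also caught a genuine imprecision in the statement: the unbiasedness condition $a = E[g(\go)\mid\cA(\go)=a]$ forces $\beta = E\bigl[\|\go\|\,\psi(\|\go\|^2)\bigr] = E[\|g(\go)\|]$, which coincides with the stated $\beta = E[\|\go\|]$ only when $\psi\equiv 1$; the corollary as written should either assume $\psi\equiv 1$ or redefine $\beta$ accordingly.
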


\begin{proof}[Proof of Corollary \ref{sphere}] The proof follows directly from Theorem \ref{converse}. Indeed, by this theorem, we only need to check three conditions: 

(1) $\Xi$ is $conv(g(\gO))$-maximal; (2) $\cA(\go)\ \in\ \cP_\Xi(g(\go))$ and (3) $a\ =\ \mathbb{E}[g(\go)|\cA(\go)=a]$. All these conditions follow directly from the hypotheses of Corollary \ref{sphere}. 
\end{proof}

\subsection{Separable \(W\). Example 1}

Suppose now that $W(a)\ =\ a_1\,\sum_{i=1}^N G(a_{i+1})\,,$ where $a_1=\mathbb{E}[\pi|s]$ and $a_{i+1}=\mathbb{E}[v_i|s],\ i=1,\cdots,N.$ For simplicity, we will assume that, for each $i$, $G_i$ is either strictly convex or strictly concave. Furthermore, we will also assume that the function 
$q(x)\ =\ -\sum_i (G_i'(x_i))^2/G''(x_i)$ does not change the sign for $x\in conv(\gO_{1+}).$\footnote{$\gO_{1+}\in\R^N$ is the projection of $\gO$ onto the last $N$ coordinates.} For example, this is the case when all of $G_i''$ have the same sign. Under these assumptions, $D_{aa}W$ is non-degenerate and Corollary \ref{char-pools} and Proposition \ref{main convexity} allow us to characterize signal pools as well as the local structure of $\Xi.$ 

\begin{proposition}\label{Rayo-Segal1} Let $\nu$ be the number of $G_i$ with $G_i''>0.$ There always exists a deterministic optimal autoencoder $\cA(\go)$ such that:

\begin{itemize}
\item The optimal feature manifold is a $(\nu+ {\bf 1}_{q(a)>0})$-dimensional Lipschitz manifold, while pools are at most $(N+1-(\nu+ {\bf 1}_{q(a)>0}))$-dimensional. If all $G_i''$ have the same sign, then all pools are convex. 

\item if $G_i''(a)>0$ for all $i,$ then $\nu=N, {\bf 1}_{q(a)>0}=0$ and for each each deterministic optimal autoencoder there exists a function $f(a_{1+}):\R^N\to \R$ such that $a_1(\go)=f(a_{1+}(\go))$ for all $\go$ and, hence, the optimal feature manifold $\Xi$ is an $N$-dimensional subset of the graph $\{(f(a_{1+}),a_{1+})\}.$ For each $i,$ the function $f(a_{1+}) (G_i'(a_{i+1}))^{1/2}$ is monotone increasing in $a_{i+1}$, and there exist functions $\kappa_i(a)$ such that the pool of Lebesgue-almost every signal $a_{1+}$ is a convex subset (a segment) of the one-dimensional line 
\begin{equation}\label{optimal-rs-1}
Pool(a_{1+})\ \subset\ \{\binom{\pi}{v}:\ a_i-v_i\ =\ (\pi-f(a))\kappa_i(a)\ for\ all\ i>1\,\}\ \subset\ R^{N+1}\,. 
\end{equation}
Furthermore, these lines are downward sloping on average in the following sense:
\[
\sum_i\kappa_i(a)G_i'(a)\ \ge\ 0
\]

\item if $G_i''(a)<0$ for all $i,$ then $\nu=0,\ {\bf 1}_{q(a)>0}=1,$ and hence $\Xi$ is a one-dimensional curve. For each deterministic optimal autoencoder there exists a map $f(a_{1})=(f_i(a_1))_{i=1}^N:\R\to \R^N$ such that $a_{1+i}(\go)=f_i(a_{1}(\go)).$ The function $\sum_i G_i(f_i(a_1))$ is monotone increasing in $a_1$  and there exists a map $\kappa(a_1):\R\to\R^N$ such that the pool of Lebesgue-almost every signal $a_{1}$ is a convex subset of the $N$-dimensional hyperplane
\begin{equation}\label{optimal-rs-1-1}
Pool(a_{1+})\ =\ \{\binom{\pi}{v}:\ \pi-f(a)\ =\ \kappa(a)^\top (a-v)\}\ \subset\ R^{N+1}\,. 
\end{equation}
\end{itemize}
\end{proposition}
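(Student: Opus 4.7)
My plan is to reduce Proposition \ref{Rayo-Segal1} to the general structural theorems of the preceding sections---Theorem \ref{thm-unif}, Propositions \ref{main convexity} and \ref{regularity}, and Corollaries \ref{dimension-pool} and \ref{char-pools}---by explicitly computing the Hessian of the separable $W$, determining its inertia, and then reading off the consequences.

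The first step is to differentiate $W(a) = a_1\sum_{i=1}^N G_i(a_{i+1})$ to obtain the block-structured Hessian
\begin{equation*}
D_{aa}W(a) \;=\; \begin{pmatrix} 0 & g(a)^\top \\ g(a) & a_1 H(a) \end{pmatrix},
\end{equation*}
with $g(a) = (G_i'(a_{i+1}))_{i=1}^N$ and $H(a) = \diag(G_i''(a_{i+1}))$. Under the standing hypotheses ($G_i''$ of constant sign, $q$ of constant sign) and assuming $a_1>0$ (the opposite sign being symmetric), the block $a_1 H$ is invertible and Haynsworth's inertia additivity formula---the Schur complement being $a_1^{-1}q(a)$---gives $\nu_+(D_{aa}W(a)) = \nu + {\bf 1}_{q(a)>0}$ and $\nu_-(D_{aa}W(a)) = N+1-(\nu + {\bf 1}_{q(a)>0})$. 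Since $D_{aa}W$ is then non-degenerate throughout, Theorem \ref{thm-unif} combined with Proposition \ref{regularity} forces $\Xi$ to be a Lipschitz manifold of exact dimension $\nu + {\bf 1}_{q>0}$, and Corollary \ref{dimension-pool} bounds each $conv(g(Pool(a)))$ by dimension $N+1-(\nu + {\bf 1}_{q>0})$. When all $G_i''$ share a sign, the equation $b_{i+1}=a_1 G_i'(a_{i+1})$ inverts uniquely on each monotone branch of $G_i'$ as $a_{i+1}=(G_i')^{-1}(b_{i+1}/a_1)$, and the residual scalar equation $b_1 = \sum_i G_i((G_i')^{-1}(b_{i+1}/a_1))$ has derivative $-a_1^{-3}\sum_i b_{i+1}^2/G_i''(a_{i+1})$ of constant sign, so $D_aW$ is injective on each branch and Proposition \ref{main convexity} supplies a deterministic optimal autoencoder $\cA$ that is a projection with convex pools; this proves bullet (1).

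For bullet (2) (all $G_i''>0$, so $\nu=N$ and $q<0$), the $a_{1+}$-block is strictly positive definite; two points $(a_1,z),(b_1,z)\in\Xi$ with $a_1\neq b_1$ would give $c(a,b)=c(b,a)=0$ by direct expansion in the separable form, and strict convexity then forces $a_1=b_1$, so $\Xi$ is a global graph $a_1=f(a_{1+})$. Parameterising $\Xi$ by $\theta=a_{1+}$ with Jacobian $DF(\theta)=(\nabla f(\theta)^\top;\, I_N)^\top$, the pointwise first-order condition \eqref{downward-sloping-pools} becomes the linear system
\begin{equation*}
\bigl(a_1 H(a) + \nabla f(\theta)\,g(a)^\top\bigr)(a_{1+}-v) \;=\; g(a)(\pi-a_1),
\end{equation*}
whose solutions form the one-dimensional affine line through $a$ with slope $\kappa_i(a)$ equal to the $i$-th coordinate of $(a_1 H + \nabla f g^\top)^{-1}g$; this is exactly \eqref{optimal-rs-1}. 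The inequality $\sum_i \kappa_i G_i'\geq 0$ is a Sherman--Morrison manipulation of that formula combined with the positive semi-definiteness of $DF^\top D_{aa}W\, DF$ from Corollary \ref{char-pools}(1), while the monotonicity of $f(a_{1+})(G_i'(a_{i+1}))^{1/2}$ in $a_{i+1}$ is a square-root rewriting of the monotonicity of the $(i{+}1)$-th coordinate of $D_aW\circ\cA$ guaranteed by Proposition \ref{main convexity}.

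For bullet (3) (all $G_i''<0$, so $\nu=0$ and $q>0$), the analogous injectivity argument---now exploiting strict concavity in $a_{1+}$---shows that $\Xi$ contains at most one point with each value of $a_1$, hence is a one-dimensional curve globally parameterised as $a_{1+}=f(a_1)=(f_1(a_1),\ldots,f_N(a_1))$. Substituting $DF(a_1)=(1;\,f'(a_1))^\top$ into \eqref{downward-sloping-pools} collapses it to a single scalar constraint whose solution set is exactly the $N$-dimensional affine hyperplane in \eqref{optimal-rs-1-1} with $\kappa(a_1)$ read off directly. The monotonicity of $\sum_i G_i(f_i(a_1))$ in $a_1$ is then the statement that the first coordinate of $D_aW\circ\cA$ is monotone, again from Proposition \ref{main convexity}. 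The main obstacle throughout is upgrading the local Lipschitz-manifold structure supplied by Theorem \ref{thm-unif} to the global graph/curve parameterisation, and correctly extracting the explicit formulas for $\kappa_i$ and $\kappa$ from the abstract first-order condition \eqref{downward-sloping-pools}; both are overcome by the exploitable block form of the Hessian together with the assumed constancy of sign of $G_i''$.
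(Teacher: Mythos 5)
Your proof takes essentially the same route as the paper's: compute the block Hessian of the separable $W$, determine its inertia, feed the result into Theorem \ref{thm-unif}, Propositions \ref{main convexity} and \ref{regularity}, and Corollaries \ref{dimension-pool} and \ref{char-pools}, then extract the explicit pool lines/hyperplanes from the first-order condition \eqref{downward-sloping-pools} via a Sherman--Morrison inversion. Your use of Haynsworth's inertia-additivity formula makes explicit what the paper dismisses as ``by direct calculation,'' and your branch-by-branch verification that $D_aW$ is injective under constant-sign $G_i''$ is a faithful version of the paper's footnote. Those parts are fine.

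There is, however, a genuine flaw in your argument for the global graph structure in bullet (2). You claim that if $(a_1,z)$ and $(b_1,z)$ both lie in $\Xi$ with $a_1\neq b_1$, then ``$c(a,b)=c(b,a)=0$ by direct expansion in the separable form, and strict convexity then forces $a_1=b_1$.'' The first half is correct: with $a_{i+1}=b_{i+1}$ for all $i$, the formula in \eqref{cab1-1} gives $c(a,b)=0$ identically. But the second half does not follow. $W$-monotonicity of $\Xi$ (Lemma \ref{key-geometry}) only requires $c(a,b)\geq 0$, so $c=0$ produces no contradiction; and $W(t,z)=t\sum_i G_i(z_i)$ is \emph{linear} in the first coordinate, so there is no strict convexity along that line to invoke. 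The $(1,1)$ entry of $D_{aa}W$ is zero, so the usual argument from the proof of Theorem \ref{thm-unif} (diagonalize and use the gap between positive and negative eigenvalues) only gives a local Lipschitz graph in \emph{eigenvector} coordinates, not in the $(a_1,a_{1+})$ coordinates; passing from one to the other would need a further step. The paper's own proof also takes the $a_1=f(a_{1+})$ parameterisation for granted at the line ``$Df=\binom{D_af}{I}$,'' so this is a shared gap, but the justification you supply is not valid as written and should not be presented as closing it.

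One smaller misattribution: you derive the monotonicity of $f(a_{1+})(G_i'(a_{i+1}))^{1/2}$ from ``the monotonicity of the $(i{+}1)$-th coordinate of $D_aW\circ\cA$ guaranteed by Proposition \ref{main convexity}.'' That proposition gives vector monotonicity of $D_aW\circ\cA$ as a map $\gO\to\R^{N+1}$, which does not yield coordinatewise monotonicity in a single coordinate. The paper's actual source (and the one your Sherman--Morrison step already cites) is Corollary \ref{char-pools}(1): the $i$-th diagonal entry of the positive-semidefinite matrix $Df^\top D_{aa}W\,Df$ is $2G_i'f_{a_i}+G_i''f=2(G_i')^{1/2}\partial_{a_i}\bigl((G_i')^{1/2}f\bigr)\geq 0$, which is exactly the inequality you need (and tacitly assumes $G_i'>0$ so the square root is real). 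You should route the monotonicity claim through that corollary rather than through Proposition \ref{main convexity}.

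Aside from these two points, the structure, computations, and citations match the paper's proof.
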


\begin{proof}[Proof of Proposition \ref{Rayo-Segal1}] 
By direct calculation, 
\begin{equation}\label{cab1-1}
\begin{aligned}
&c(a,b)\ =\ W(b)\ -\ W(a)\ +\ D_aW(a)\,(a-b)\\
& =\ \sum_i (b_1G_i(b_{i+1})-a_1G_i(a_{i+1}))+\sum_i \Big(G_i(a_{i+1})(a_1-b_1)\ +\ a_1G_i'(a_{i+1})(a_{i+1}-b_{i+1})\Big)\\
&=\ 
b_1\sum_i (G_i(b_{i+1})-G_i(a_{i+1}))-a_1\sum_i G_i'(a_{i+1})(b_{i+1}-a_{i+1})\,.
\end{aligned}
\end{equation}
Let $a_{1+}=(a_{i+1})_{i=1}^N$ and $G(a_{1+})=(G_i(a_{1+i}))_{i=1}^N.$ Then, $D_aW\ =\ \binom{G(a_{1+})^\top {\bf 1}}{a_1 G'(a_{1+})}$ and\footnote{We use ${\bf 1}$ to denote a vector of ones.} 
\begin{equation}
D_{aa}W(a)\ =\ \begin{pmatrix}
0&G'(a_{1+})^\top\\
G'(a_{1+})&a_1\diag(G''(a_{1+}))
\end{pmatrix}
\end{equation}
By direct calculation, $D_aW$ is injective if all $G_i''$ the the same sign.\footnote{We have that $a=(D_aW)^{-1}(x)$ satisfies  $\sum_{i=1}^N G(a_{i+1})=x_1,\ a_{i+1}=(G_i')^{-1}(x_{i+1}/a_1)$ and hence there is a unique $a_1$ solving $\sum_{i=1}^N G((G_i')^{-1}(x_{i+1}/a_1))=x_1.$} In this case, convexity of pools can be guaranteed by Corollary \ref{main convexity}. Let $\nu$ be the number of $G_i$ with $G_i''>0.$ By direct calculation, $D_{aa}W(a)$ always has exactly $\nu+{\bf 1}_{q(a)>0}$ positive eigenvalues. 


In the first case, by direct calculation, we have 
\begin{equation}
Df\ =\ \binom{D_af}{I}
\end{equation}
and, hence, 
\[
Df^\top D_{aa}W(a) Df\ =\ f(a)\,\diag(G''(a))\ +\ (G' D_af^\top+D_af\,(G')^\top)\ \ge\ 0\,.
\]
In particular, diagonal elements are 
\[
2G_i'(a_i)f_{a_i}+G_i''(a_i)f\ =\ 2((G_i')^{1/2}f)_{a_i}\ \ge\ 0\,,
\]
implying the required monotonicity. Furthermore, it also implies that 
\[
Q=\diag(f(a)\,G''(a))^{-1}\ +\ \diag(f(a)\,G''(a))^{-1}(G' D_af^\top+D_af\,(G')^\top)\diag(f(a)\,G''(a))^{-1}\ \ge\ 0
\]
and hence the matrix 
\[
\begin{pmatrix}
(G')^\top QG'& (G')^\top Q D_af\\
(G')^\top Q D_af &(D_af)^\top Q D_af
\end{pmatrix}\ \ge\ 0
\]
that is
\[
\begin{pmatrix}
A+2AB&B+AC+B^2\\
B+AC+B^2&C+2CB
\end{pmatrix}\ \ge\ 0\,,
\]
where we have defined 
\begin{equation}
\begin{aligned}
&A=(G'(a))^\top \diag(f(a)G''(a))^{-1}G',\ B=(G'(a))^\top \diag(f(a)G''(a))^{-1}D_af,\\
&C=(D_af)^\top \diag(f(a)G''(a))^{-1}D_af
\end{aligned}
\end{equation}
In particular,
\[
1+(G'(a))^\top \diag(f(a)G''(a))^{-1}D_af\ >\ 0\,.
\]
The pool equation takes the form 
\[
Df^\top D_aaW(a)(\binom{f(a)}{a}\ -\ \binom{\pi}{v})\ =\ 0
\]
which is equivalent to the system 
\[
(f(a)\,\diag(G''(a))+D_af\,(G'(a))^\top)(a-v)\ =\ (\pi-f(a))G'(a)
\]
By the Sherman-Morrison formula, 
\begin{equation}
\begin{aligned}
&(f(a)\,\diag(G''(a))+D_af\,(G'(a))^\top)^{-1}\\
&=\ \diag(f(a)G''(a))^{-1}-\frac{\diag(f(a)G''(a))^{-1}D_af\,(G'(a))^\top \diag(f(a)G''(a))^{-1}}{(1+(G'(a))^\top \diag(f(a)G''(a))^{-1}D_af)}
\end{aligned}
\end{equation}
implying that 
\[
a_i-v_i\ =\ (\pi-f(a))\kappa_i(a)
\]
where 
\[
\kappa_i(a)\ =\ (f(a)G_i''(a_i))^{-1}\Big(G_i'(a_i)-D_{a_i}f\,\frac{((G'(a))^\top \diag(f(a)G''(a))^{-1}G'(a))}{(1+(G'(a))^\top \diag(f(a)G''(a))^{-1}D_af)}\Big)\,. 
\]
Then, 
\begin{equation}
\begin{aligned}
&\sum_i G_i'(a_i)\kappa_i(a)\ =\ ((G'(a))^\top \diag(f(a)G''(a))^{-1}G'(a))\\
&-((G'(a))^\top \diag(f(a)G''(a))^{-1}D_af)\frac{((G'(a))^\top \diag(f(a)G''(a))^{-1}G'(a))}{(1+(G'(a))^\top \diag(f(a)G''(a))^{-1}D_af)}\\
&=\ A-B\frac{A}{1+B}\ =\ \frac{A}{1+B}\ >\ 0\,. 
\end{aligned}
\end{equation}
In the concave $G$ case, we have 
\[
0\ \le\ (1,D_af^\top)\begin{pmatrix}
0&G'(a_{1+})^\top\\
G'(a_{1+})&a_1\diag(G''(a_{1+}))
\end{pmatrix}\binom{1}{D_af}\ =\ \sum_i (2f_i'(a_1)G_i'(f_i(a_1))+a_1(f_i'(a_1))^2G_i''(f_i(a_1)))\,.
\]
In particular, $D_af^\top G'(a)\ge 0.$ 
The pool equation is 
\[
(1,D_af^\top)\begin{pmatrix}
0&G'(a_{1+})^\top\\
G'(a_{1+})&a_1\diag(G''(a_{1+}))
\end{pmatrix}(\binom{a_1}{f(a_1)}\ -\ \binom{\pi}{v})\ =\ 0\,,
\]
that is 
\[
G'(a_{1+})^\top (f(a_1)-v)\ +\ D_af^\top G'(a) (a_1-\pi)+D_af^\top  a_1G''(a) (f(a_1)-v)\ =\ 0\,
\]
\end{proof}

\subsection{Separable \(W\). Example 2}

We now consider the case of $
W(a)\ =\ \sum_{i=1}^N a_i\,G_i(a_{i+N})\,,$ 
where $a_i=\mathbb{E}[\pi_i|s]$ and $a_{i+N}=\mathbb{E}[v_i|s]$. In this case, $D_{aa}W(a)$ is block-diagonal as there are no cross-effects across different pairs $\binom{a_i}{a_{i+N}}:$ only $a_i$ and $a_{i+N}$ are substitutes.
As a result, $D_{aa}W(a)$ always has exactly $N$ positive eigenvalues, independent of the properties of $G_i.$ 

Recall that a map $f:\R^N\to \R^N$ is monotone increasing  if $(a-b)^\top (f(a)-f(b))\ge 0$ for any $a,\ b\,\in\R^N.$ Let $a_{N+}=(a_{N+i})_{i=1}^N\in \R^N,\ a_{-N}=(a_{i})_{i=1}^N.$ 


We now assume that $G_i'(x)>\eps$ for some $\eps>0$ and define  $\varphi_i(b)$ to be the unique monotone increasing to the differential equation 
\begin{equation}
\varphi_i(x)'\ =\  (G_i'(\varphi_i(x)))^{-1/2},\ \varphi_i(0)=0\,, \varphi(b)\ =\ (\varphi_i(b_i))_{i=1}^N:\ \R^N\to\R^N\,.
\end{equation}
Define $\widetilde f(a)\ =\ \diag((G'(a))^{1/2})\,f(a)$ and $\widehat f(x)\ =\ \widetilde f(\varphi(x))\,.$
Note that when $N=1,$ we have $\widetilde f(a)=(G'(a))^{1/2} f(a)$ and $\tilde f(a)$ is monotonic if and only if so is $\widehat f.$ However, in multiple dimensions this is not the case anymore: It might happen that $\widehat f$ is monotonic, while $\widetilde f$ is not. 
The following is true. 

\begin{proposition} \label{arbitraryGi} There always exists a deterministic optimal autoencoder $\cA(\go).$ For each such policy, there exists a map $f=(f_i)_{i=1}^N:\ \R^N\to\R^N$ such that $a_{-N}(\go)=f(a_{N+}(\go))$ for all $\go$ and, hence, the optimal feature manifold $\Xi$ is the $N$-dimensional graph $\{(f(a_{N+}),a_{N+})\}$ of the map. Furthermore, the map $\widehat f(a_{N+})$ is monotone increasing. The pool of Lebesgue-almost every signal $a_{1+}$ is given by the $N$-dimensional hyperplane 
\begin{equation}\label{optimal-rs-2-gen}
Pool(a_{N+})\ =\ \{\binom{\pi}{v}:\ \pi\ =\ \kappa_1(a_{N+})\,v\ +\ \kappa_2(a_{N+})\ for\ all\ i\,\}\ \subset\ R^{2N}\,. 
\end{equation}
where 
\[
\kappa_1(a_{N+})\ =\ -(\diag(G')^{-1}(D_af)^\top \diag(G')  +\diag(f G'' /G'))\ \in \R^{N\times N}
\]
and 
\[
\kappa_2(a_{N+})\ =\ \diag(f G'){\bf 1}\ -\ \kappa_1(a_{N+})a_{N+}\ \in\ \R^N\,. 
\]
Furthermore, the matrix $\diag(G'(a_{N+}))\kappa_1(a_{N+})+0.5 \diag(f G'')$ is negative semi-definite. 
\end{proposition}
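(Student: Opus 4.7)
The plan is to combine Theorems \ref{mainth-limit}, \ref{thm-unif} and Corollary \ref{char-pools} with explicit calculations on the block-diagonal Hessian of $W$. Writing $a = (u,v) \in \R^N \times \R^N$ with $u = a_{-N}$ and $v = a_{N+}$, one has
\[
D_aW(u,v) = (G(v),\,\diag(u)G'(v)),\qquad D_{aa}W(u,v) = \begin{pmatrix} 0 & \diag(G'(v)) \\ \diag(G'(v)) & \diag(u)\diag(G''(v)) \end{pmatrix}.
\]
Under $G_i' > \eps$, the map $D_aW$ is globally invertible with a single continuous branch (solve $v_i = G_i^{-1}(y_i)$, then $u_i = z_i/G_i'(v_i)$), so Theorem \ref{mainth-limit} produces a deterministic optimal autoencoder $\cA$. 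The Hessian is block-diagonal in the pairs $(u_i,v_i)$ with each $2\times 2$ block of determinant $-(G_i'(v_i))^2 < 0$; each block has exactly one positive eigenvalue, and its positive eigenvector has both coordinates nonzero. Theorem \ref{thm-unif} then confines the tangent to $\Xi$ into the positive eigenspace, which projects isomorphically onto the $v$-coordinates. Combined with connectedness of $\Xi$ and uniqueness of $D_aW\circ\cA$ as the subgradient of the convex potential from Proposition \ref{convexity}, this local graph property upgrades to the global representation $\Xi = \{(f(v),v)\}$ for a Lipschitz $f$.

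Parameterize $\Xi$ by $F(v) = (f(v),v)$, so $DF = \binom{D_vf}{I_N}$. Corollary \ref{char-pools}(1) gives
\[
DF^\top D_{aa}W(F)\, DF \;=\; D_vf^\top\diag(G') + \diag(G')D_vf + \diag(fG'') \;\succeq\; 0,
\]
where $\diag(fG'')$ abbreviates $\diag((f_i(v)G_i''(v_i))_i)$. For monotonicity of $\widehat f$, I would apply the chain rule to $\widehat f(x) = \diag((G'(\varphi(x)))^{1/2})\,f(\varphi(x))$ using $\varphi_i'(x_i) = G_i'(\varphi_i(x_i))^{-1/2}$; a short calculation yields
\[
D_x\widehat f \;=\; \diag((G')^{1/2})\,D_vf\,\diag((G')^{-1/2}) + \tfrac12\diag(fG''/G').
\]
Conjugating by $\diag((G')^{1/2})$ then gives
\[
\diag((G')^{1/2})\bigl(D_x\widehat f + D_x\widehat f^\top\bigr)\diag((G')^{1/2}) = D_vf^\top\diag(G') + \diag(G')D_vf + \diag(fG'') \;\succeq\; 0,
\]
so $D_x\widehat f$ has positive semi-definite symmetric part. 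Since $\widehat f$ is Lipschitz and a.e.\ differentiable (Rademacher), integrating along segments yields the claimed monotonicity of $\widehat f$.

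The pool description follows from Corollary \ref{char-pools}(2): for a.e.\ $\omega \in Pool(F(v))$ with $g(\omega) = (\pi_\omega,v_\omega)$,
\[
DF(v)^\top D_{aa}W(F(v))\,\binom{f(v) - \pi_\omega}{v - v_\omega} = 0.
\]
Substituting the formula for $DF^\top D_{aa}W$ and inverting $\diag(G')$ expresses $\pi_\omega$ as the affine map $\kappa_1 v_\omega + \kappa_2$, with $\kappa_1 = -(\diag(G')^{-1}D_vf^\top\diag(G') + \diag(fG''/G'))$ and $\kappa_2$ fixed by the unbiasedness identity $f(v) = \kappa_1 v + \kappa_2$. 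Substituting the formula for $\kappa_1$ then gives $\diag(G')\kappa_1 + \tfrac12\diag(fG'') = -D_vf^\top\diag(G') - \tfrac12\diag(fG'')$, whose symmetric part equals $-\tfrac12(D_vf^\top\diag(G') + \diag(G')D_vf + \diag(fG''))$, which is $\preceq 0$ by the second paragraph.

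The main obstacle I anticipate is promoting the local Lipschitz-graph property from Theorem \ref{thm-unif} to a single-valued global $f$ in the first paragraph: eigenspace considerations give only a local graph, and ruling out coexisting sheets $(u^{(1)},v),(u^{(2)},v)\in\Xi$ with $u^{(1)}\neq u^{(2)}$ will likely require combining Proposition \ref{convexity} (the convex potential $C$ and its subgradient $D_aW\circ\cA$) with invertibility of $D_aW$ to force a single global branch.
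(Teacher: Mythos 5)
Your proposal is correct and takes essentially the same route as the paper, but packaged more systematically. The paper re-derives the second-order expansion of the Bregman divergence $c(a,b)$ restricted to the graph $\{(f(v),v)\}$ by hand (yielding the positive-semidefiniteness of $\diag(G')D_vf + \tfrac12\diag(fG'')$, which is your matrix with the symmetrization made explicit), then conjugates by $\diag((G')^{1/2})$ and identifies the result with the Jacobian of $\widehat f$, exactly as you do. Your pool derivation via Corollary~\ref{char-pools}(2) matches the paper's first-order condition $-\diag(G')\pi - ((D_af)^\top\diag(G') + \diag(fG''))(v-a) + \diag(fG'){\bf 1} = 0$, and your derivation of the negative semi-definiteness of $\diag(G')\kappa_1 + \tfrac12\diag(fG'')$ is correct. (Incidentally, your unbiasedness argument gives $\kappa_2 = f - \kappa_1 a_{N+}$, which is what the FOC actually produces; the $\diag(fG'){\bf 1}$ in the proposition's display is a typo for $f$.)

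You correctly flag the one real gap: neither Theorem~\ref{thm-unif} nor Corollary~\ref{char-pools}(1) forces the global graph structure $a_{-N} = f(a_{N+})$, and the paper's own proof simply substitutes $a_i = f_i(a_{N+})$ into $c(a,b)$ without justification. Your worry is well founded in a stronger sense than you state: for this particular $W$, if $a_1$ and $a_2$ differ only in their $a_{-N}$-components, then $c(a_1,a_2) = 0$ \emph{identically} (not merely to second order), so $W$-monotonicity alone imposes no constraint in those directions and cannot rule out coexisting sheets. The eigenvector argument you give is also only local (tangent directions in the positive eigenspace have nonzero $v$-component, but tangent directions are constrained only to make the quadratic form $\succeq 0$, which permits $v$-null directions). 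Settling this would require using the affine-hyperplane pool partition or the $conv(g(\gO))$-maximality of $\Xi$, not just monotonicity and invertibility of $D_aW$; your proposed remedy via Proposition~\ref{convexity} is plausible but would need to be carried out, and the paper does not carry it out either. Apart from this shared gap, your proof is sound.
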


Monotonicity of the map $\widehat f$ is the multi-dimensional analog of simple coordinate-wise monotonicity of Proposition \ref{Rayo-Segal1}. 
The monotonicity of $\widehat f$ implies that 
\[
(x-y)^\top (\widehat f(x)-\widehat f(y))\ \ge\ 0
\]
for any $x,\ y.$ Thus 
\[
0\ \le\ (\varphi^{-1}(a)-\varphi^{-1}(b))^\top (\widetilde f(a)-\widetilde f(b))\ =\ \sum_i (\varphi_i^{-1}(a_i)-\varphi_i^{-1}(b_i))((G_i'(a_i))^{1/2}f_i(a)-(G_i'(b_i))^{1/2}f_i(b))
\]
That is, for any signal $s,$ the vectors 
\[
\binom{(G_i'(\mathbb{E}[v_i|s]))^{1/2} \mathbb{E}[\pi_i|s]}{\varphi_i^{-1}(\mathbb{E}[v_i|s])}
\]
are aligned. 


\begin{proof}[Proof of Proposition \ref{arbitraryGi}] We have 
\begin{equation}\label{cab1-2}
\begin{aligned}
&c(a,b)\ =\ W(b)\ -\ W(a)\ +\ D_aW(a)\,(a-b)\\
& =\ \sum_i (b_i G_i(b_{i+N})-a_i G_i(a_{i+N}))+\sum_i \Big(G_i(a_{i+N})(a_i-b_i)\ +\ a_i G_i'(a_{i+N})(a_{i+N}-b_{i+N})\Big)\\
&=\ \sum_i b_i (G_i(b_{i+N})-G_i(a_{i+N}))-\sum_i a_i G_i'(a_{i+N})(b_{i+N}-a_{i+N})\\
&=\ \sum_i  (f_i(b_{N+})-f_i(a_{N+}))(G_i(b_{i+N})-G_i(a_{i+N}))\\
&+\sum_i f_i(a_{N+}) (G_i(b_{i+N})-G_i(a_{i+N})-G_i'(a_{i+N})(b_{i+N}-a_{i+N}))\\
\end{aligned}
\end{equation}
When $b_{N+}\to a_{N+}$ and $da=b_{N+}-a_{N+},$ we get 
\[
0=\ (da)^\top \diag(G'(a)) D_af\,(da)\ +\ 0.5 (da)^\top \diag(f(a) G''(a)) (da)\,. 
\]
where $(D_af)_{i,j}=\partial f_i/\partial a_j.$ That is, the matrix 
\[
\diag(G'(a)) D_af\ +\ 0.5 \diag(f(a) G''(a))
\]
is positive semi-definite. Therefore, so is the matrix 
\[
Q(a)\ =\ \diag(G'(a)^{1/2}) D_af \diag(G'(a)^{-1/2})\ +\ 0.5 \diag(f(a) G''(a)G'(a)^{-1})
\]
Let now $a=\varphi(x).$ Then, by direct calculation, $Q(\varphi(x))$ is the Jacobian of $\widehat f(x)$ and the claim follows because a map is monotone increasing if and only if its Jacobian is positive semi-definite. Since monotone maps are differentiable Lebesgue-almost surely, we get the first order condition 
\[
\max_{a_{N+}}\Bigg(\sum_i \pi_i (G_i(v_{i})-G_i(a_{i+N}))-\sum_i f_i(a_{N+}) G_i'(a_{i+N})(v_i-a_{i+N})\Bigg)
\]
takes the form 
\[
-\diag(G')\pi\ -\ ((D_af)^\top \diag(G')  +\diag(f G'')) (v-a)\ +\ \diag(f G'){\bf 1}\ =\ 0
\]
and the claim follows. 
\end{proof}

\section{The Integro-Differential Equation}

\begin{proposition}\label{prop-change-variables} Let $F$ be a  bijective, bi-Lipshitz map,\footnote{A map $F$ is bi-Lipschitz if both $F$ and $F^{-1}$ are Lipschitz continuous.} $F:\ X\to \gO$ for some open set $X\subset \R^L.$ Let also $M_1\le M$ and $x=(\theta,r)$ with $\theta\in X_1$, the projection of $X$ onto $\R^{M_1}$ and $r\in X_2,$ the projection of $X$ onto $\R^{L-M_1}.$ Define 
\begin{equation}\label{cond-f}
\begin{aligned}
&f(\theta)\ =\ f(\theta;F)\ \equiv\ \frac{\int_{X_{2}} |\det(D F(\theta,r))|p(F(\theta,r))\,g(F(\theta,r))dr}
{\int_{X_{2}} |\det(D F(\theta,r))|p(F(\theta,r))dr}\,.
\end{aligned}
\end{equation}
Suppose that $f$ is an injective map, $f:X_1\to \R^{M}$ and define 
\begin{equation}\label{opt-dif}
\cA(b)\ =\ \min_{\theta\in X_1}\{W(b)-W(f(\theta))+D_aW(f(\theta))^\top(f(\theta)-b)\}\,.
\end{equation}
Suppose also that the min in \eqref{opt-dif} for $b=g(F(\theta,r))$ is attained at $\theta$ and that
$\cA(b)\ \le\ 0\ \forall\ b\in conv(g(\gO)).$ Then, $\cA(\go)=f((F^{-1}(\go))_1)$ is an optimal autoencoder. If $x_1=\arg\min$ in \eqref{opt-dif} with $b=F(\theta,r)$ for all $\theta,r$ and $\cA(b)<0$ for all $b\in conv(g(\gO))\setminus \Xi$ with $\Xi=f(X_1),$ then the optimal autoencoder is unique. 

If $f$ is Lipshitz-continuous and the minimum in \eqref{opt-dif} is attained at an interior point, we get a system of second order partial integro-differential equations for the $F$ map: 
\begin{equation}\label{pde}
D_{\theta}f(\theta)^\top D_{aa}W(f(\theta))(f(\theta)-g(F(\theta,r)))\ =\ 0\,.
\end{equation}
\end{proposition}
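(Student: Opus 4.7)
The strategy is to recognize Proposition \ref{prop-change-variables} as a verification result and reduce it to Theorem \ref{converse}. The key observation is that, by the change-of-variables/area formula for the bi-Lipschitz bijection $F$, the pushforward of $p(\omega)d\omega$ onto $X$ has density $p(F(\theta,r))|\det DF(\theta,r)|$, and disintegrating along the projection $(\theta,r)\mapsto \theta$ yields that the ratio in \eqref{cond-f} is exactly
\[
f(\theta)\ =\ E[g(\omega)\mid (F^{-1}(\omega))_1=\theta].
\]
Set $\cE(\omega)=(F^{-1}(\omega))_1$, $\Xi=f(X_1)$, and $\cA(\omega)=f(\cE(\omega))$. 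Injectivity of $f$ implies that $\cA(\omega)=a$ iff $\cE(\omega)=f^{-1}(a)$, so the unbiasedness condition $a=E[g(\omega)\mid \cA(\omega)=a]$ is immediate.

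Next, the quantity $\cA(b)$ defined by \eqref{opt-dif} is precisely $\inf_{a\in\Xi}c(a,b)$ for the Bregman-type divergence $c$ of \eqref{cab}. The hypothesis $\cA(b)\le 0$ for every $b\in\mathrm{conv}(g(\Omega))$ is therefore exactly the $\mathrm{conv}(g(\Omega))$-maximality of $\Xi$ in the sense of Definition \ref{max-def}. The remaining hypothesis, that for $b=g(F(\theta,r))$ the minimum in \eqref{opt-dif} is attained at $\theta$, is the statement $\cA(\omega)\in\cP_\Xi(g(\omega))$ for Lebesgue-a.e.\ $\omega$, which is the Bregman-projection requirement. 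Both premises of Theorem \ref{converse} thus hold, and optimality of $\cA$ follows.

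For uniqueness, I will apply Proposition \ref{uniqueness}. The strict inequality $\cA(b)<0$ off $\Xi$ shows $Q_\Xi\subseteq \Xi$, and the $W$-monotonicity of any optimal feature manifold (established in Proposition \ref{uniqueness}) gives the reverse inclusion $\Xi\subseteq Q_\Xi$; together with the singleton-argmin assumption, this yields uniqueness of the optimal autoencoder. For the partial integro-differential equation \eqref{pde}, I will differentiate the objective of \eqref{opt-dif} at $b=g(F(\theta,r))$ with respect to $\theta$: the two contributions $(D_\theta f(\theta))^\top D_a W(f(\theta))$ arising from $-D_\theta W(f(\theta))$ and from the outer derivative of the bilinear pairing cancel, leaving
\[
(D_\theta f(\theta))^\top D_{aa}W(f(\theta))\,(f(\theta)-g(F(\theta,r)))\ =\ 0
\]
at any interior minimizer, which is \eqref{pde}.

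The main obstacle I anticipate is the change-of-variables step: because $F$ is only bi-Lipschitz, $DF$ exists merely Lebesgue-almost everywhere (Rademacher), so the disintegration must be carried out via the area/coarea formula rather than classical smooth calculus; a parallel care is needed in the derivation of \eqref{pde} to legitimize differentiating $f$ at an interior minimizer, which I will handle by invoking Rademacher's theorem on the Lipschitz map $\theta\mapsto f(\theta)$ and working on its full-measure set of differentiability.
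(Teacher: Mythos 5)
Your approach is essentially the same as the paper's: the paper's proof of Proposition \ref{prop-change-variables} also reduces directly to verifying the three hypotheses of Theorem \ref{converse} — unbiasedness via the change-of-variables formula (giving \eqref{cond-f}), Bregman projection via the attainment of the minimum in \eqref{opt-dif} at $b=g(F(\theta,r))$, and $\mathrm{conv}(g(\Omega))$-maximality via $\cA(b)\le 0$ — and your uniqueness argument through Proposition \ref{uniqueness} and your first-order-condition derivation of \eqref{pde} (with the two $(D_\theta f)^\top D_aW$ terms cancelling) correctly supply the details the paper leaves implicit for the second and third claims. The only difference is that you are somewhat more careful about regularity (invoking the area formula and Rademacher's theorem to justify the a.e.\ disintegration and a.e.\ differentiability of the Lipschitz $f$), which is a welcome refinement rather than a different route.
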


\begin{proof}[Proof of Proposition \ref{prop-change-variables}] Let $\Xi=f(X_1)\cap conv(g(\gO)).$ 

Clearly, $\Xi$ is an $M_1$-dimensional manifold, and we need to verify that $\cA(\go)=f((F^{-1}(\go))_1)$ satisfies the three conditions of  Theorem \ref{converse}: 
\begin{itemize}
\item $\cA(\go)\ =\ \mathbb{E}[g(\go)|\cA(\go)]$

\item $\cA(\go)\ \in \cP_\Xi(g(\go))$ for all $\go$

\item $\Xi$ is $conv(g(\gO))$-maximal. 
\end{itemize}
The first condition is equivalent to \eqref{cond-f} by the change of variables formula. 
The second condition is equivalent to the fact that the minimum in \eqref{opt-dif} is attained for $b=g(F(\theta,r))$. The third condition follows from the fact that $\cA(b)\ \le\ 0\ \forall\ b\in conv(g(\gO)).$
\end{proof}

\section{Extend Empirical Studies}
\label{sec:empirics}

The key testable implication of our theory is the existence of an optimal bottleneck (latent) dimension for the encoder: With too few latent dimensions, the model is not rich enough; with too many, it encodes malignant dimensions that hurt (or simply do not improve) performance: The encoded information ``saturates.''. Within this section, we present a pseudocode of the algorithms, as well as provide information on training and hyperparameter details, along with the neural network architectures employed in our studies. Our repository is available at \url{https://github.com/tengandreaxu/benign-autoencoders}.

\subsection{Distance regularized GANs}

{\bf Dataset.} In this experiment, we utilized the CelebA-HQ dataset, which was introduced by \cite{karras2017progressive} and is a subset of the larger CelebA dataset \cite{liu2015faceattributes}. The CelebA-HQ dataset consists of 30,000 high-quality images of celebrities. We specifically opted for the CelebA-HQ dataset due to constraints in available hardware resources, as it provides a more manageable dataset compared to the original CelebA dataset, which contains 202,599 images \cite{liu2015faceattributes}.

{\bf Preprocessing.} To prepare the images for training, we resize the original RGB images from a resolution of $1024 \times 1024$ to a smaller size of $64 \times 64$. Subsequently, we normalize the images by adjusting their pixel values to have a mean of 0.5 and a standard deviation of 0.5. This common normalization step helps standardize the data and facilitates convergence during training.

{\bf Training.} Standard GAN training is known for its tendency to miss the true data-generating distribution modes. To address this limitation, we adopt a distance-regularized GAN training approach inspired by the work of \cite{che2016mode}. Additionally, following the insights from \cite{goodfellow2020generative}, we alternate between one gradient descent step on the discriminator, denoted as $D$, and one step on the autoencoder, denoted as $\mathcal{A} \coloneqq \cD \circ \cE$. For optimization, we employ minibatch SGD and utilize the Adam solver proposed by \cite{kingma2014adam}. We set the learning rate to 0.0002 and choose momentum parameters of $\beta_1 = 0.5$ and $\beta_2 = 0.999$. $D$ and $\mathcal{A}$ are trained for 100 epochs with a batch size of 128. The weights of the networks are initialized by sampling from a normal distribution with a mean of zero and a standard deviation of 0.02 \cite{radford2015unsupervised}.

{\bf Loss Function.} The discriminator $D_w$ is optimized via gradient ascent and its loss function is defined as 
\begin{align*}
    \mathcal{L}_{adv}\ =\ \log (D_{w}(\mathbf{x}))\ +\ \log (1-D_w(\cD_{\theta}(\cE_{\phi}(\mathbf{x})))),
\end{align*}
while the $\cE_{\phi}$ and $\cD_{\theta}$ both maximize
\begin{align*}
    \mathcal{L_{BAE}}\ =\ \log (D_{w}(\cD_{\theta}(\cE_{\phi}(\mathbf{x}))))\ -\ \lVert \mathbf{x} - \cD_{\theta}(\cE_{\phi}(\mathbf{x})) \rVert_2^2.
\end{align*}
{\bf Algorithm.}  Algorithm~\ref{alg:distance_gan} summarizes what described above. We run Algorithm~\ref{alg:distance_gan} for a grid of $\nu \in \{1, 10, 50, 100, 500, 1000\}$. 
\begin{figure}
    \centering
    \makebox[\textwidth]{\includegraphics[width=1.4\textwidth]{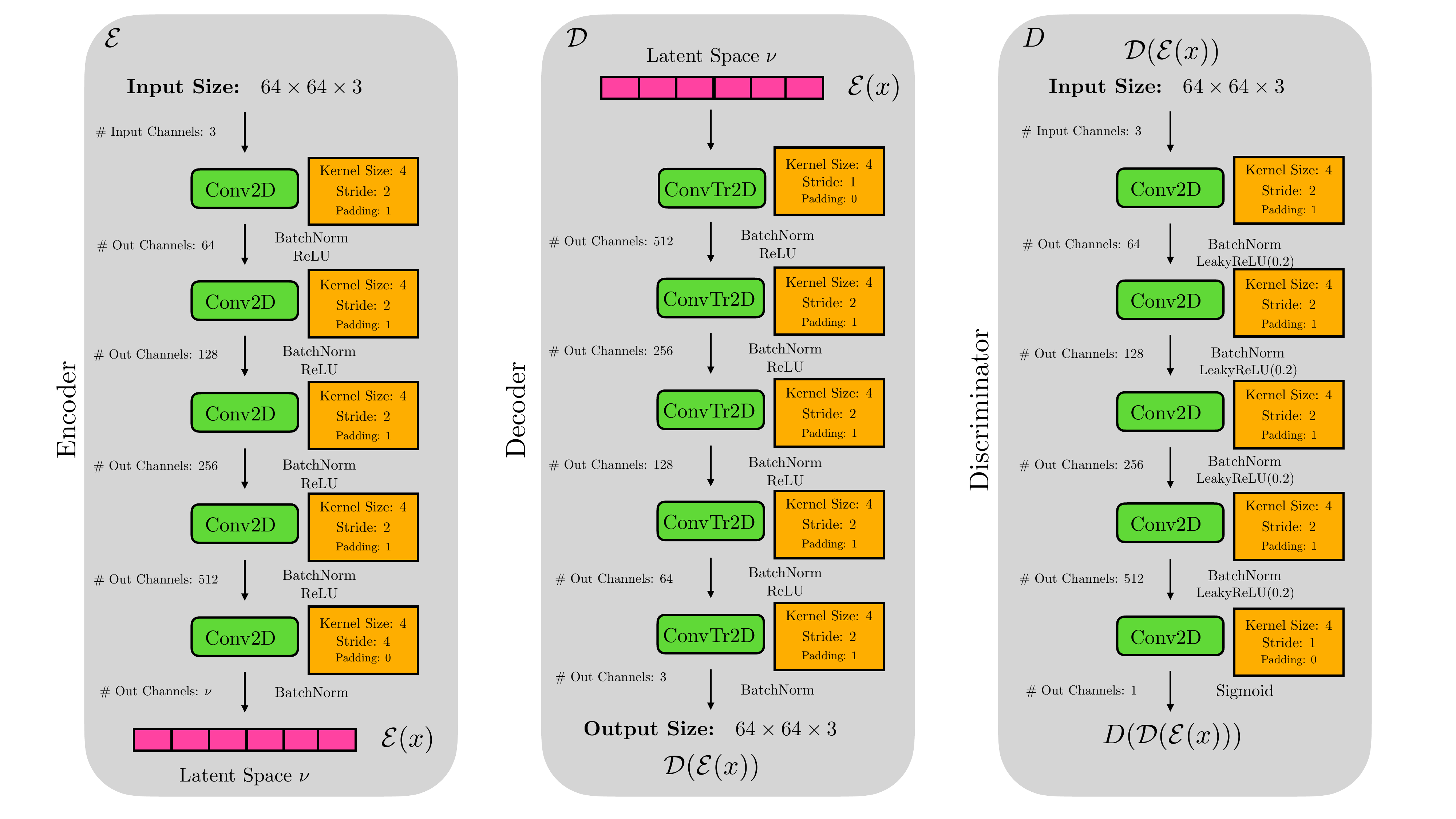}}
    \caption{From left to right, the exact neural network architectures used for the encoder $\cE$, the decoder $\cD$, and the discriminator $D$ in the distance-regularized GAN.}
    \label{fig:arch_distance_gan}
\end{figure}
\begin{algorithm}[H]
\caption{Distance Regularized GAN}
\label{alg:distance_gan}
\begin{algorithmic}[1]
\State \textbf{Input:} Training set $\cX$, number of epochs $N$, batch size $m$, number batches $B$, and latent dimension $\nu$.
\State Initialize discriminator $D_{w}(x)$ with $w \sim \mathcal{N}(0, 0.02).$
\State Initialize autoencoder $\mathcal{A} \coloneqq \cD_{\theta}(\cE_{\phi}(x))$ with $\theta \sim \cN(0, 0.02)$ and $\phi \sim \cN(0, 0.02).$ 
\For{$n \gets 1$ \textbf{to} $N$}
\For{$b \gets 1$ \textbf{to} $B$}
\State Sample $\{\mathbf{x_1}, \mathbf{x_2}, \ldots, \mathbf{x_m} \}$ from data generating distribution $p_{data}(x)$
\State Update discriminator $D$ using SGD with gradient ascent:

\begin{align*}
\nabla_{w}\frac{1}{m} \sum_{i=1}^{m}[\log (D_{w}(\mathbf{x_i})) + \log (1-D(\cD_{\theta}(\cE_{\phi}(\mathbf{x_i}))))]    
\end{align*}

\State Update encoder $\cE_{\phi}$ and decoder/generator $\cD_{\theta}$ using SGD gradient ascent:

\begin{align*}
    \nabla_{\theta, \phi} \frac{1}{m} \sum_{i=1}^{m}[\log (D_{w}(\cD_{\theta}(\cE_{\phi}(\mathbf{x_i})))) - \lVert \mathbf{x_i} - \cD_{\theta}(\cE_{\phi}(\mathbf{x_i})) \rVert_2^2] 
\end{align*}
\EndFor
\EndFor
\end{algorithmic}
\end{algorithm}

{\bf Architectures.} Figure~\ref{fig:arch_distance_gan} shows our simple DCGAN architecture. 

{\bf Evaluation.} In line with the approach described in \cite{heusel2017gans}, we generate a large number of images from our generative model. In this case, we match the total count of images in the original dataset, 30,000. We utilized the PyTorch port version of the official FID implementation to compute the FID score. The original implementation was initially developed in TensorFlow by \cite{heusel2017gans}, and a PyTorch port of the FID implementation can be found in \cite{Seitzer2020FID}. The TensorFlow version of the FID implementation is available in the official repository of the authors.\footnote{\url{https://github.com/bioinf-jku/TTUR}}

{\bf Results.} We report the results outlined in the main paper. To demonstrate the existence of an optimal $\nu$, we train the auto-encoder while varying $\nu \in \{1,10,50,100,500,1000\},$ maintaining constant architectures for $D$, and the non-bottleneck layers of $\cE_{\phi}$ and $\cD_{\theta}$. Our experiment, conducted on the CelebA-HQ dataset \cite{karras2017progressive, CelebAMask-HQ}, assesses the quality of the generative model using the FID score. Figure~\ref{fig:distance_gan_app} indicates a striking agreement with our theory, with the optimal latent dimension $\nu$ being about 100. Conversely, when the latent dimension $\nu$ becomes larger, the performance of the generative model deteriorates.  

\begin{figure}
    \centering
    \makebox[\textwidth]{\includegraphics[width=1.2\textwidth]{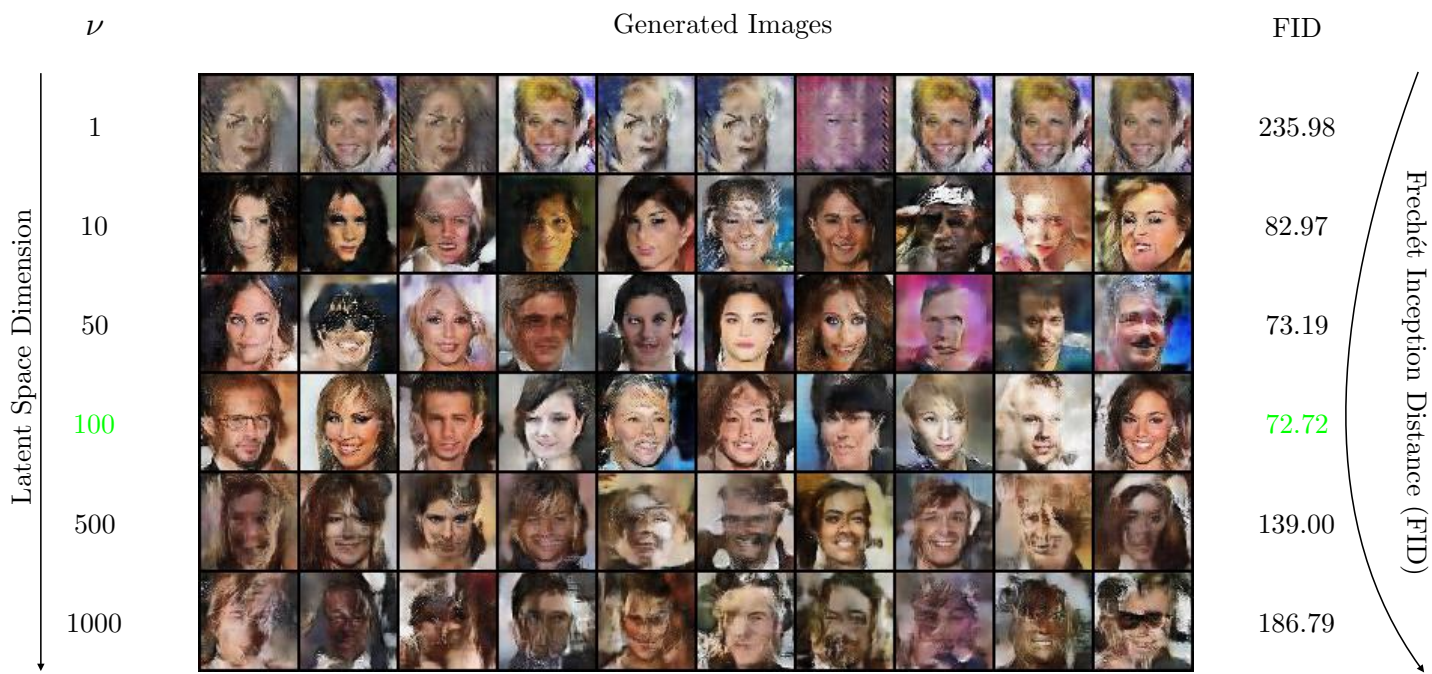}}
    \caption{Distance-regularized GAN on CelebA-HQ. FID score with varying latent space dimension
$\nu$, while keeping constant the discriminator $D$ and the non-bottleneck layers of the decoder $\cD_{\theta}$ and
encoder $\cE_{\phi}$ architectures. Images were resized to $64 \times 64$.}
    \label{fig:distance_gan_app}
\end{figure}

\subsection{Context-Encoders}

{\bf Dataset.} In this experiment, we once again utilize the CelebA-HQ dataset. We further divide the dataset into a train set and a test set, with sizes of 26,000 and 4,000, respectively.

{\bf Preprocessing.} To prepare the images for training, we resize the original RGB images from a resolution of $1024 \times 1024$ to a smaller size of $128 \times 128$. Subsequently, we normalize the images by adjusting their pixel values to have a mean of 0.5 and a standard deviation of 0.5. Finally, we apply a mask $\hat M$ of size $64 \times 64$ to patch the center of the image.

{\bf Training.} Following the insights from \cite{goodfellow2020generative}, we alternate between one gradient descent step on the discriminator, denoted as $D$, and one step on the autoencoder, denoted as $\mathcal{A} \coloneqq\ \cD\circ\ \cE$. For optimization, we employ minibatch SGD and utilize the Adam solver proposed by \cite{kingma2014adam}. We set the learning rate to 0.0002 and choose momentum parameters of $\beta_1 = 0.5$ and $\beta_2 = 0.999$. $D$ and $\mathcal{A}$ are trained for 150 epochs with a batch size of 32. The weights of the networks are initialized by sampling from a normal distribution with a mean of zero and a standard deviation of 0.02 \cite{radford2015unsupervised}. Denote with $x$ the pre-processed image, then $(1 - \hat M)\odot\ x$ is the masked image, $\hat M\odot\ x$ is the content, and $\cD_{\theta}(\cE_{\phi}((1 - \hat M)\odot\ x))$ is the reconstructed content.

{\bf Loss Function.} The discriminator $D_w$ is optimized via gradient descent, and its loss function is defined as 
\begin{align*}
    \mathcal{L}_{adv}\ =\ \lVert \mathbf{1} - D_w((1 - \hat M)\odot\ x))\rVert^2 + \lVert D_w(\cD_{\theta}(\cE_{\phi}((1 - \hat M)\odot\ x))) \rVert^2,
\end{align*}
while the $\cE_{\phi}$ and $\cD_{\theta}$ both minimize
\begin{align*}
    \mathcal{L_{BAE}}\ =\ \lambda_{adv}\lVert \mathbf{1} - D_w(\cD_{\theta}(\cE_{\phi}((1 - \hat M)\odot\ x))) \rVert^2\ +\ \lambda_{rec}\lVert \hat M\odot\ x - \cD_{\theta}(\cE_{\phi}((1 - \hat M)\odot\ x)) \rVert.
\end{align*}

Following the original context-encoder paper \cite{pathak2016context}, we set $\lambda_{adv} =\ 0.001$ and $\lambda_{rec} =\ 0.999$.

\begin{figure}
    \centering
    \makebox[\textwidth]{\includegraphics[width=1.4\textwidth]{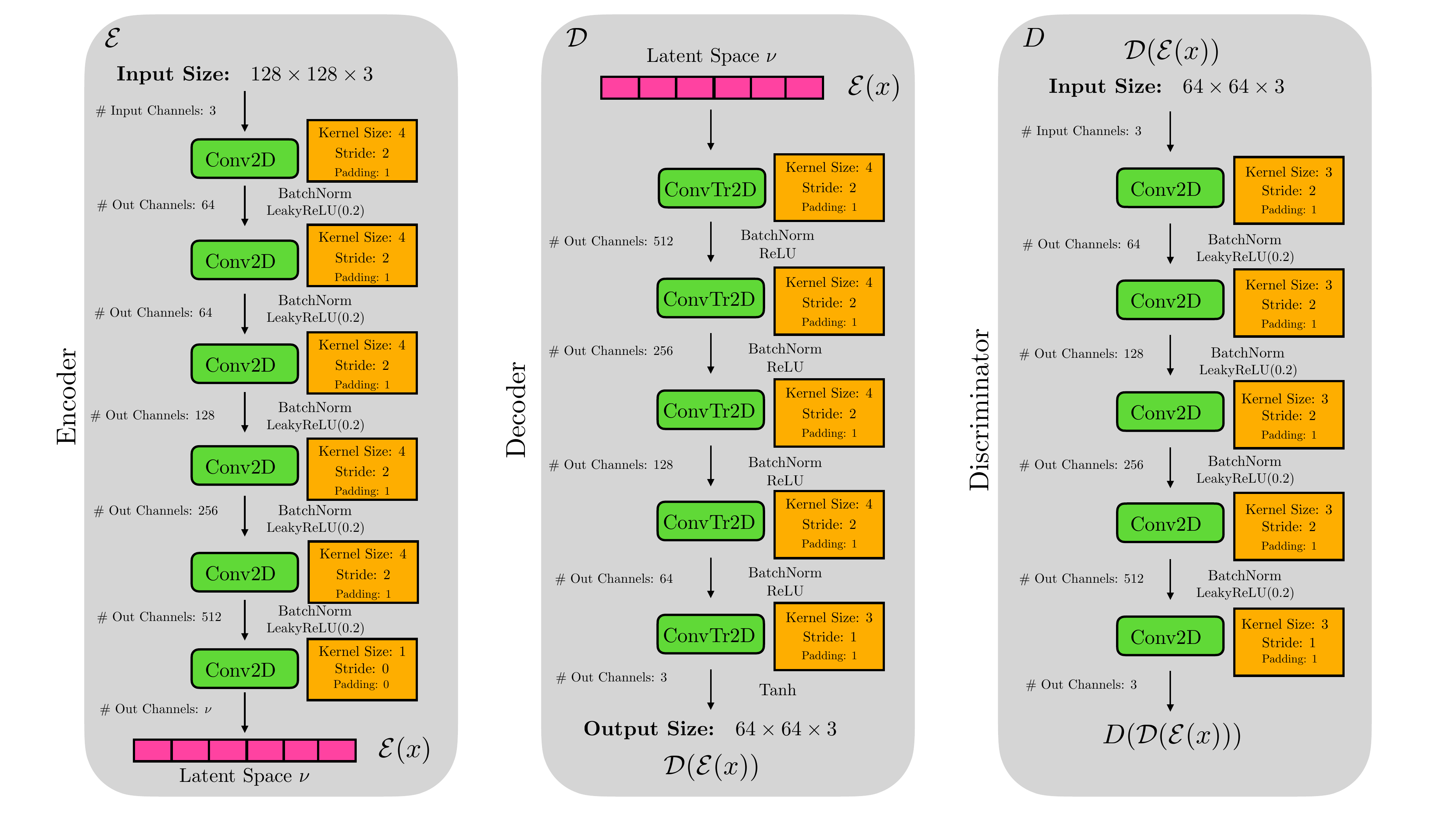}}
    \caption{From left to right, the exact neural network architectures used for the encoder $\cE$, the decoder $\cD$, and the discriminator $D$ in the context-encoder experiments.}
    \label{fig:arch_context_encoder}
\end{figure}

{\bf Algorithm.} We run Algorithm~\ref{alg:context_gan} for a grid of $\nu \in \{1 \times 4 \times 4, 10 \times 4 \times 4, 50 \times 4 \times 4, 100 \times 4 \times 4, 500 \times 4 \times 4, 1000 \times 4 \times 4, 4000 \times 4 \times 4 \}$. 
\begin{figure}
    \centering
    \makebox[\textwidth]{\includegraphics[width=1.2\textwidth]{context.pdf}}
    \caption{Context-Encoder on CelebA-HQ. LPIPS score with varying latent space dimension $\nu$, while keeping constant the discriminator $D$ and the non-bottleneck layers of the decoder $\cD_{\theta}$ and encoder $\cE_{\phi}$ architectures. Images were resized to $128 \times 128.$ The mask area is $64 \times 64$. }
    \label{fig:context_app}
\end{figure}
\begin{algorithm}[H]
\caption{Context-Encoder}
\label{alg:context_gan}
\begin{algorithmic}[1]
\State \textbf{Input:} Training set $\cX$, number of epochs $N$, batch size $m$, number batches $B$, latent dimension $\nu$, $\lambda_{adv}$, and $\lambda_{rec}$.
\State Initialize discriminator $D_{w}(x)$ with $w \sim \mathcal{N}(0, 0.02).$
\State Initialize autoencoder $\mathcal{A} \coloneqq \cD_{\theta}(\cE_{\phi}(x))$ with $\theta \sim \cN(0, 0.02)$ and $\phi \sim \cN(0, 0.02).$ 
\For{$n \gets 1$ \textbf{to} $N$}
\For{$b \gets 1$ \textbf{to} $B$}
\State Sample $\{\mathbf{x_1}, \mathbf{x_2}, \ldots, \mathbf{x_m} \}$ from data generating distribution $p_{data}(x)$
\State Update discriminator $D$ using SGD with gradient descent:

\begin{align*}
\nabla_{w}\frac{1}{m} \sum_{i=1}^{m} \bigg[((1 - D_w((1 - \hat M)\odot\ \mathbf{x_i})))^2 + (D_w(\cD_{\theta}(\cE_{\phi}((1 - \hat M)\odot\ \mathbf{x_i}))))^2 \bigg],
\end{align*}

\State Update encoder $\cE_{\phi}$ and decoder/generator $\cD_{\theta}$ using SGD gradient descent:

\begin{align*}
    \nabla_{\theta, \phi} \frac{1}{m} \sum_{i=1}^{m} \bigg[\lambda_{adv}\big( 1 - D_w(\cD_{\theta}(\cE_{\phi}((1 - \hat M)\odot\ \mathbf{x_i}))) \big)^2\ \\
    +\ \lambda_{rec}\big( \hat M\odot\ \mathbf{x_i} - \cD_{\theta}(\cE_{\phi}((1 - \hat M)\odot\ \mathbf{x_i})) \big) \bigg].
\end{align*}
\EndFor
\EndFor
\end{algorithmic}
\end{algorithm}

{\bf Architectures.} Figure~\ref{fig:arch_context_encoder} shows our simple DCGAN architecture. 

{\bf Evaluation.} We in-paint the 4,000 samples in the test set and compute the LPIPS distance \cite{zhang2018unreasonable} between the in-painted image and the ground truth. We used the authors' official PyTorch implementation \cite{zhang2018unreasonable} to compute LPIPS.\footnote{\url{https://github.com/richzhang/PerceptualSimilarity}}

{\bf Results.} Similar to the distance GAN experiment, the results offer empirical evidence that supports our main Theorem~\ref{main-gen-char-app}. Importantly, we noticed that the optimal LPIPS score is attained when utilizing a {\it compressibility dimension} of approximately $50 \times 4 \times 4$. Figure~\ref{fig:context_app} shows that further increasing the dimension did not lead to improved performance.

\subsection{Evaluating the quality of the generator with a discriminator}

\begin{figure}[ht]
\centering

\makebox[\textwidth][c]{\parbox{\textwidth}{%
\begin{subfigure}{.5\textwidth}
\includegraphics[width=\linewidth]{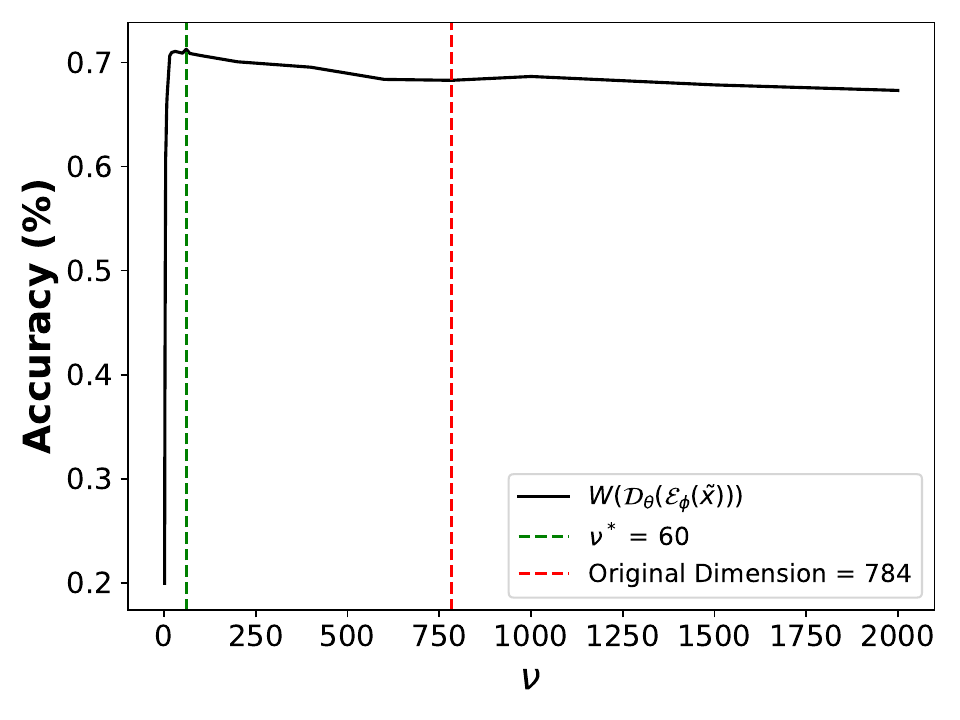}
\caption{MNIST}
\label{fig:ensembles_cnn1_ew}
\end{subfigure}%
\begin{subfigure}{.5\textwidth}
\includegraphics[width=\linewidth]{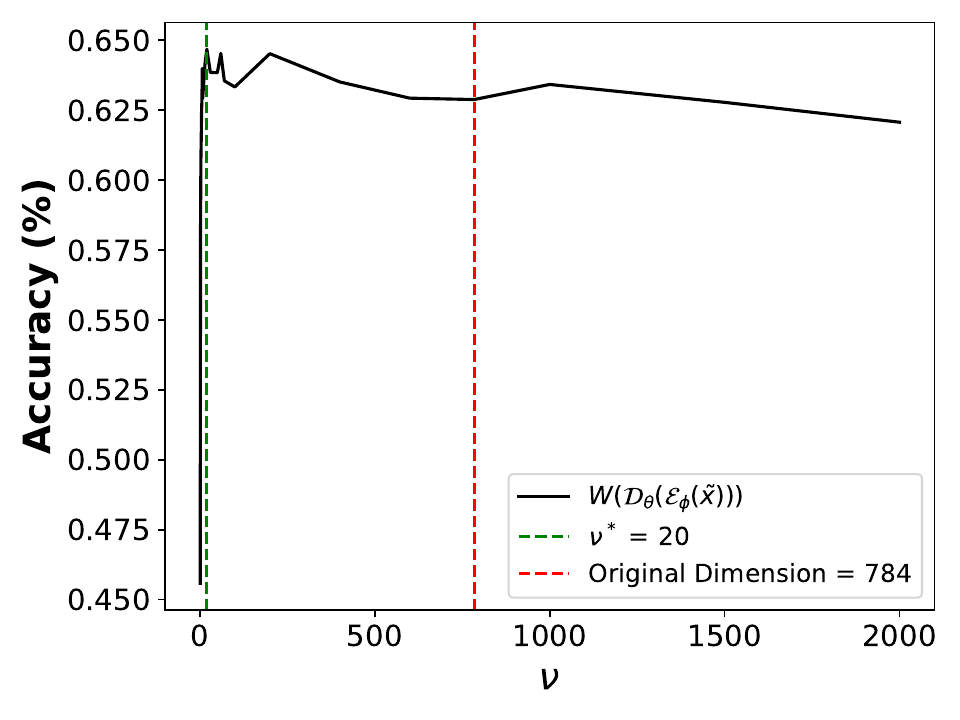}
\caption{FMNIST}
\label{fig:ensembles_cnn5_ew}
\end{subfigure}
}}

\caption{The pre-trained discriminator $D_w$ achieves the best accuracy when $\nu \ll L$. The accuracy is computed using the reconstructed data $\hat x = \cD_{\theta}(\cE_{\phi}(\tilde x))$ from noisy test data $\tilde x =\ x + \epsilon.$ } 
\label{fig:denoising}
\end{figure}

{\bf Dataset.} We conduct the experiments on the MNIST \cite{lecun1989handwritten} and FMNIST \cite{xiao2017fashion} datasets.

{\bf Preprocessing.} To prepare the images for training we normalize the images by adjusting their pixel values to have a mean of 0.5 and a standard deviation of 0.5.

{\bf Training.} We pre-train a discriminator $D$ for each dataset to achieve high accuracy ($99\%$ for MNIST and $92\%$ for FMNIST, respectively). Following this step, we optimize $\cD_{\theta}$ and $\cE_{\phi}$ with respect to the binary cross-entropy distance, $d(x, \cD_{\theta}(\cE_{\phi}(\tilde{x}))$, where $\tilde{x} = x + \epsilon$\footnote{$\epsilon \sim \mathcal{N}(0,1).$ One can view this noise as a simple form of a distribution shift.} is the noised image. The autoencoder $\cA \coloneqq \cD\circ\ \cE$ goal is to find the optimal latent space $\nu$ to denoise $\tilde x$. This process is further penalized by an additional classifier distance, given by the cross-entropy $\ell(\cdot,\cdot)$ with the reconstructed image, $\ell(y, D(\cD_{\theta}(\cE_{\phi}(\tilde{x}))$. For optimization, we employ minibatch SGD and utilize the Adam solver proposed by \cite{kingma2014adam}. We set the learning rate to 0.001 and choose momentum parameters of $\beta_1 = 0.5$ and $\beta_2 = 0.999$. $\cD_{\theta}$ and $\cE_{\phi}$ are trained for 20 epochs with a batch size of 32. The weights of the networks are initialized using He initialization \cite{he2015delving}.

{\bf Loss Function.} First, the discriminator $D_w$ is optimized via gradient descent minimizing cross-entropy $\ell(y, D_w(x))$. Then, with $D_w$ fixed, $\cE_{\phi}$ and $\cD_{\theta}$ both minimize
\begin{align*}
    \mathcal{L_{BAE}}\ =\ \ell(y, D_w(\cD_{\theta}(\cE_{\phi}(\tilde x))))\ +\ d(x, \cD_{\theta}(\cE_{\phi}(\tilde x)))
\end{align*}

{\bf Algorithm.} We run Algorithm~\ref{alg:context_gan} for a grid of \\
$\nu \in \{ 1, 2, 3, 4, 5, 6, 7, 8, 15, 20, 30, 50, 60, 70, 100, 200, 400, 600, 784, 1000, 1500, 2000\}$. 

\begin{algorithm}[H]
\caption{Quality of an Autoencoder Evaluation}
\label{alg:quality_autoencoder}
\begin{algorithmic}[1]
\State \textbf{Input:} Training set $\cX$, number of epochs $N$, batch size $m$, number batches $B$, a pre-trained classifier $D$.
\State Initialize autoencoder $\mathcal{A} \coloneqq \cD_{\theta}(\cE_{\phi}(x))$ with random weights.
\For{$n \gets 1$ \textbf{to} $N$}
\For{$b \gets 1$ \textbf{to} $B$}
\State Sample $\{\mathbf{x_1}, \mathbf{x_2}, \ldots, \mathbf{x_m} \}$ from data generating distribution $p_{data}(x)$
\State Let $\mathbf{\tilde  x_i}\ =\ \mathbf{x_i} + \epsilon$, where $\epsilon \sim \mathcal{N}(0,1)$, for every $i=1, \ldots, m$.
\State Update encoder $\cE_{\phi}$ and decoder/generator $\cD_{\theta}$ using SGD gradient descent:

\begin{align*}
    \nabla_{\theta, \phi} \frac{1}{m} \sum_{i=1}^{m}[\log (D(\cD_{\theta}(\cE_{\phi}(\mathbf{\tilde x_i})))) + d (\mathbf{x_i},\cD_{\theta}(\cE_{\phi}(\mathbf{\tilde x_i})))] 
\end{align*}
\EndFor
\EndFor
\end{algorithmic}
\end{algorithm}

{\bf Architectures.} These datasets are straightforward yet suitable for our experiments. The encoder $\cE$ is implemented as a simple dense 4-Layer MLP with ReLU activation functions and a descending number of nodes: 784, 512, 256, $\nu$. Likewise, the decoder $\cD$ mirrors the architecture of the encoder $\cE$ and consists of a 4-Layer MLP with the following node counts: $\nu$, 256, 512, 784. The output layer of the decoder utilizes a Sigmoid activation function. Lastly, the discriminator $D$ is implemented as a 4-Layer convolutional neural network comprising of 3 convolutional layers and 1 dense (output) layer. The convolution layers are equipped with 16, 32, and 64 filters (for FMNIST we use twice as much), respectively, along with ReLU activation functions and a Max-Pool(2,2) operation. Prior to the output layer, we incorporate global average pooling to flatten the data for classification purposes.

{\bf Evaluation.} After training, we evaluate the  autoencoder $\cA \coloneqq\ \cD_{\theta}(\cE_{\phi}(x))$ computing the accuracy of $D_w(\cD_{\theta}(\cE_{\phi}(\tilde x))),$ where $\tilde x$ is the noisy test set as described above.

{\bf Results.} As outlined in the main paper, we varied the latent dimension $\nu$, which represents the bottleneck size of $\cE_{\phi}$ and thus the input shape of $\cD_{\theta}$. We then reported the accuracy achieved by the fixed discriminator $D_w$ on the reconstructed test set $\hat{x} = \cA(\tilde{x})$. Figure~\ref{fig:denoising} shows a "peak" in accuracy achieved with a lower-dimensional $\nu$ indicated by a dashed green vertical line, while a higher latent space dimension results in even worse performance.

\clearpage
\bibliography{scibib.bib}
\bibliographystyle{jf}

\end{document}